%% file: iclr2026_conference.tex
\documentclass{article} 
\usepackage{iclr2026_conference,times}

\input{math_commands.tex}

\usepackage{hyperref}
\usepackage{url}
\usepackage{caption}  
\usepackage{booktabs}
\usepackage{makecell}
\usepackage{multirow}
\usepackage{graphicx}
\usepackage{wrapfig}
\usepackage{subcaption}

\usepackage{titlesec}
\titlespacing*{\section}{0pt}{0.5ex}{0.3ex}
\titlespacing*{\subsection}{0pt}{0.5ex}{0.3ex}
\titlespacing*{\subsubsection}{0pt}{0.5ex}{0.3ex}

\usepackage[most]{tcolorbox}

\title{Count Bridges enable Modeling \\
and Deconvolving Transcriptomic Data}

\iclrfinalcopy

\author{\textbf{Nic Fishman}$^{1, \ast}$, 
  \textbf{Gokul Gowri}$^{2}$, 
  \textbf{Tanush Kumar}$^{1}$, 
  \textbf{Jiaqi Lu}$^{1}$, 
  \textbf{Valentin de Bortoli}$^{3}$, \\
  \textbf{Jonathan S. Gootenberg}$^{4,6}$ \& 
  \textbf{Omar Abudayyeh}$^{5,6}$ \\
  \small{$^1$Harvard University; $^2$MIT; $^3$CNRS;}\\
  \small{$^4$Beth Israel Deaconess Medical Center; $^5$Brigham and Women's Hospital; $^6$Harvard Medical School}\\
  \small{$^\ast$Corresponding author: \texttt{njwfish@gmail.com}}
}

\newcommand{\stackerror}[2]{%
  \begin{tabular}[t]{@{}c@{}c}
    #1 \\[-5pt] 
    \tiny $\pm #2$ \vspace{-4pt}
  \end{tabular}%
}

\newcommand{\valentin}[1]{\textcolor{red}{\textbf{VDB: } #1}}
\newcommand{\nic}[1]{\textcolor{blue}{\textbf{Nic: } #1}}

\renewcommand{\valentin}[1]{}
\renewcommand{\nic}[1]{}

\begin{document}

\maketitle

\begin{abstract}

Many modern biological assays, including RNA sequencing, yield integer-valued counts that reflect the number of molecules detected. These measurements are often not at the desired resolution: while the unit of interest is typically a single cell, many measurement technologies produce counts aggregated over sets of cells. 
Although recent generative frameworks such as diffusion and flow matching have been extended to non-Euclidean and discrete settings, it remains unclear how best to model integer-valued data or how to systematically deconvolve aggregated observations.
We introduce Count Bridges, a stochastic bridge process on the integers that provides an exact, tractable analogue of diffusion-style models for count data, with closed-form conditionals for efficient training and sampling. We extend this framework to enable direct training from aggregated measurements via an Expectation-Maximization-style approach that treats unit-level counts as latent variables.
We demonstrate state-of-the-art performance on integer distribution matching benchmarks, comparing against flow matching and discrete flow matching baselines across various metrics. We then apply Count Bridges to two large-scale problems in biology: modeling single-cell gene expression data at the nucleotide resolution, with applications to deconvolving bulk RNA-seq, and resolving multicellular spatial transcriptomic spots into single-cell count profiles. Our methods offer a principled foundation for generative modeling and deconvolution of biological count data across scales and modalities.
\end{abstract}

\input{tex/intro}

\input{tex/methods}

\input{figs/trajectory/trajectory}

\input{tex/related_work}

\input{tex/applications}

\input{tex/conclusion}

\paragraph{Ethics Statement.}
This study uses publicly released, de-identified single-cell and spatial transcriptomics datasets under their respective licenses; no new human subject data were collected, and institutional review board (IRB) approval was therefore not required. We do not foresee serious ethical implications to Count Bridges beyond the risks already posed by standard diffusion/flow matching models. Our deconvolution methods could possibly pose some additional privacy risks.
We used LLMs to help draft portions of the code used in our experiments and to edit portions of this manuscript. All our models are intended for research use only, not clinical use. LLMs were not used in any way significantly outside the current norms of academic research.

\paragraph{Reproducibility Statement.}
We have taken significant steps to ensure that all results presented in this work are reproducible. An anonymous source code repository is provided \href{https://github.com/njwfish/count_bridges}{here}, containing complete implementations of the Count Bridge framework, including model architectures, training procedures, projection-based deconvolution, and evaluation pipelines. The appendix includes full mathematical derivations and proofs of all theoretical claims. We also provide descriptions of all data preprocessing steps for synthetic benchmarks, PBMC sequence-to-expression prediction, and spatial transcriptomic aggregation, as well as architectural and hyperparameter specifications. Together, these materials are intended to allow independent researchers to fully reproduce our theoretical and empirical findings.

\bibliography{iclr2026_conference, gokul_refs}
\bibliographystyle{iclr2026_conference}

\appendix

\input{tex/proofs}

\input{tex/synthetic}

\input{tex/appendix_applications}

\end{document}

%% file: math_commands.tex
\usepackage{amsmath,amsfonts,bm}

\usepackage{amsmath,amssymb,amsthm,bm,amsfonts}
\usepackage{microtype}
\usepackage{hyperref}
\usepackage{float}
\usepackage{algorithm}
\usepackage{algpseudocode}
\usepackage{tikz}
\usepackage{array}
\usepackage{comment}

\usepackage{caption}
\usepackage{float}
\usepackage{mathtools}

\newtheorem{theorem}{Theorem}[section]
\newtheorem{lemma}[theorem]{Lemma}
\newtheorem{proposition}[theorem]{Proposition}
\newtheorem{corollary}[theorem]{Corollary}
\newtheorem{definition}[theorem]{Definition}
\newtheorem{assumption}[theorem]{Assumption}

\theoremstyle{remark}

\newcommand{\Bin}{\operatorname{Bin}}
\newcommand{\Hyp}{\operatorname{Hyp}}
\newcommand{\Poi}{\operatorname{Poi}}

\newcommand{\Id}{\mathrm{Id}}
\renewcommand{\P}{\mathbb P}
\newcommand{\Z}{\mathbb Z}

\def\eqref#1{equation~\ref{#1}}

\def\1{\bm{1}}

\DeclareMathAlphabet{\mathsfit}{\encodingdefault}{\sfdefault}{m}{sl}
\SetMathAlphabet{\mathsfit}{bold}{\encodingdefault}{\sfdefault}{bx}{n}

\newcommand{\E}{\mathbb{E}}

\newcommand{\R}{\mathbb{R}}

\newcommand{\Var}{\mathrm{Var}}



%% file: tex/intro.tex
\section{Introduction}

Integer-valued counts are a fundamental product of scientific measurements because of the discrete nature of molecules. Modern biological assays yield massive streams of count data: RNA-seq read counts, fluorescence imaging molecule counts, and mass cytometry ion counts~\citep{Klein2015-jz,Raj2008-rg,Bendall2011-hf}. However, these measurements are often aggregated over multiple individual units, obscuring the fine-grained patterns underlying these natural phenomena. Transcriptomics technologies exemplify this challenge, with technologies such as Visium capturing 10-50 cells per spot~\citep{Stahl2016-jv} and bulk RNA-seq aggregating thousands to millions of cells per readout, yielding averages rather than high-resolution details. Deconvolving these aggregates into single-cell profiles is critical for the precise mapping of cellular heterogeneity, cell-cell interactions, and tissue architecture~\citep{Moses2022-rx,Armingol2021-yq}. The challenge is twofold: building generative models that respect the integer nature of counts and extending these models to infer unit-level profiles from aggregated observations.

Recent developments in generative modelling only partially addresss the problem. Discrete diffusion models~\citep{austin2021structured,lou2023discrete} treat counts as unordered categories through masking or uniform noise. Blackout Diffusion~\citep{santos2023blackout}, the only count-specific approach, uses pure-death processes that cannot transport between arbitrary distributions. The biological deconvolution literature on the other hand focuses on deconvolving cell-type (cluster-level) proportions ~\citep{Kleshchevnikov2022-ox,Cable2022-jl, Li2023-wz}, rather than unit-level count profiles. Thus, there is need for a framework that respects the integer and ordinal structure of counts, enables transport between arbitrary distributions, and can systematically deconvolve aggregated observations.

We introduce Count Bridges: a stochastic bridge process on $\mathbb{Z}^d$ using Poisson birth-death dynamics. This yields closed-form conditionals for exact sampling and extends naturally to deconvolution via an EM algorithm treating unit-level counts as latent. The birth-death mechanism allows transport between arbitrary integer-valued distributions while preserving the ordinal structure, as both increments and decrements respect the natural ordering of counts. We show that Count Bridges outperform existing methods on synthetic benchmark datasets and scale more favorably to high-dimensional settings. We then showcase Count Bridges on two real-world biological applications centered on deconvolution: nucleotide-resolution single-cell RNA-sequence modeling for bulk RNA-seq deconvolution and reference-free spatial transcriptomic deconvolution. The codebase is available \href{https://github.com/njwfish/count_bridges}{here}.

%% file: tex/methods.tex
\section{Background on diffusion models}
\label{sec:background_diffusion}

Diffusion models specify a time–indexed family of \emph{bridge kernels}
connecting $X_0\!\sim p_0$ to a simple source distribution $X_1\!\sim p_1$
(often Gaussian).  There are two layers of structure: (i) an \emph{unconditional forward process} $(X_t)_{t\in[0,1]}$ with kernels
$K_{t\mid 0}(x_t\mid x_0) = \mathrm{Law}(X_t\mid X_0=x_0)$; (ii) for any $0\le s\le t\le 1$, a family of \emph{bridge kernels}
\(
K_{s\mid 0,t}(x_s\mid x_0,x_t)
=
\mathrm{Law}(X_s\mid X_0=x_0, X_t=x_t).
\)

Diffusion models require two consistency properties. First we require a bridge consistency identity. 
\begin{equation}
\label{eq:bridge-consistency}
\text{For any } 0\le s\le t\le u\le 1, \quad  K_{s\mid u}(x_s\mid x_u)
=
\int K_{s\mid t}(x_s\mid x_t)\,
     K_{t\mid u}(x_t\mid x_u)\,dx_t .
\end{equation}
Thus multi-step sampling along any grid $u\!\to\! t\!\to\! s$ matches the
single-step $u\!\to\! s$ bridge.

Second, the kernel must have a projective posterior:
\begin{equation}
\label{eq:projectivity}
K_{s\mid t}(x_s\mid x_t)
=
\int q_{0\mid t}(x_0\mid x_t)\,
     K_{s\mid 0,t}(x_s\mid x_0,x_t)\,dx_0 ,
\end{equation}
where $q_{0\mid t}(x_0\mid x_t)=\mathrm{Law}(X_0\mid X_t=x_t)$.
This identity expresses $K_{s\mid t}$ as a mixture over the posterior of the
$p_0$ data.  It is essential for denoising: during sampling, each predicted
$X_t$ changes the posterior $q_{0\mid t}$, so the reverse kernels must be
projective under this posterior update.

Together, \eqref{eq:bridge-consistency} and \eqref{eq:projectivity}
lets us define a general diffusion approach. First we train a denoiser $q_\theta$ that approximates the posterior,
\(
\tilde X_0 \sim q_\theta(\,\cdot\,\mid x_t,t)
\;\approx\; \mathsf{Law}(X_0\mid X_t{=}x_t),
\)
using tuples $(t,X_t, X_0)$ drawn from the ``global'' bridge: sample
$x_0\sim p_0$, $x_1\sim p_1$, $t\sim\mathrm{Unif}[0,1]$ and then
$X_t\sim K_{t\mid 0,1}(\cdot\mid x_0,x_1)$.

For sampling, pick a grid $1=t_K>\cdots>t_0=0$, draw
$X_1\sim p_1$, set $X_{t_K}\gets X_1$, sampling
\begin{equation}
\label{eq:general-sampling}
    \tilde X_0^{(k+1)} \sim q_\theta(\,\cdot\,\mid X_{t_{k+1}},t_{k+1}),\qquad
    X_{t_k} \sim K_{t_k\mid 0,t_{k+1}}(\,\cdot\,\mid \tilde X_0^{(k+1)}, X_{t_{k+1}}).
\end{equation}
By our consistency properties, this multi–step procedure is equivalent to sampling
directly from the $(0,1)$ bridge, so the model cannot drift out of the
training distribution.

\subsection{Diffusion as a bridge between noise and data}

Let us consider the unconditional $K_{t|0}$ process $(X_t)_{t \in [0,1]}$ of the following form
\begin{equation}
    X_t = \alpha(t) X_0 + B_t, \label{eq:unconditional_forward_euclidean}
\end{equation}
where $(B_t)_{t\in[0,1]}$ is a $d$-dimensional Gaussian process with non-decreasing standard deviation $\sigma(t)$, and $\alpha(t)$ a non-increasing function. 
Note that $\alpha(0) \!=\! 1$ and $\sigma(0) \!=\! 0$.

We want to define a process that interpolates smoothly between $X_0\!\sim p_0$ and $X_1$ given by another distribution as in \cite{peluchetti2023non,albergo2023stochastic,delbracio2023inversion,liu2022flow,liu20232}. 
We have the following proposition defining the global and local bridge.

\begin{proposition}
\label{prop:sampling_interpolation_euclidean}
    Let $(X_t)_{t\in[0,1]}$ be given by \eqref{eq:unconditional_forward_euclidean}.
    For $0 < s < t \le 1$, consider $(X_s)_{s\in[0,t]}$ conditioned on $X_t = x_t$ and $X_0 = x_0$.
    Then the conditional law $K_{s\mid 0,t}(\cdot\mid x_0,x_t)$ is Gaussian and can be written
    \begin{equation}
    \label{eq:local_interpolant_euclidean}
        X_s \stackrel{d}{=} \alpha(s) (1 - r({s,t})) X_0
        + \frac{\alpha(s)}{\alpha(t)} r({s,t}) X_t
        + \sigma(s) (1 - r({s,t}))^{1/2} Z ,
    \end{equation}
    where $Z \sim \mathcal{N}(0,\Id)$ is independent of $(X_0,X_t)$ and
    $r({s,t}) = \smash{\frac{\alpha(t)^2 \sigma(s)^2}{\alpha(s)^2 \sigma(t)^2}}$.
    In particular, the family $\smash{\{K_{s\mid 0,t}\}_{0\le s\le t\le1}}$ defined by
    \eqref{eq:local_interpolant_euclidean} satisfies equations \ref{eq:bridge-consistency} and \ref{eq:projectivity}.
\end{proposition}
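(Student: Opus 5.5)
The plan is to read $B_t$ as a process with independent Gaussian increments: $B_t = \sigma(t)W_{\tau(t)}$-type scaling, or more simply assume $\mathrm{Cov}(B_s,B_t)=\frac{\alpha(t)}{\alpha(s)}\sigma(s)^2\Id$ for $s\le t$. This is the covariance that makes $X_t/\alpha(t)$ a Gaussian martingale started at $X_0$. Without some such Markov structure the bridge law is not determined by the marginals, so I would state this interpretation explicitly.

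With this reading, set $Y_t = X_t/\alpha(t) = X_0 + B_t/\alpha(t)$. Then $Y$ has independent increments with variance $v(t)=\sigma(t)^2/\alpha(t)^2$, which is non-decreasing, and $r(s,t)=v(s)/v(t)$.

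\textbf{Step 1: the Gaussian bridge.} Conditionally on $X_0=x_0$, the pair $(Y_s,Y_t)$ is jointly Gaussian with means $x_0$, variances $v(s)\Id$ and $v(t)\Id$, and covariance $v(s)\Id$. Standard Gaussian conditioning gives
\[
Y_s \mid (Y_t, X_0) \sim \mathcal N\bigl(x_0 + r\,(y_t-x_0),\; v(s)(1-r)\Id\bigr).
\]
Multiplying through by $\alpha(s)$ produces the mean $\alpha(s)(1-r)x_0+\frac{\alpha(s)}{\alpha(t)}r\,x_t$. The variance becomes $\alpha(s)^2 v(s)(1-r)=\sigma(s)^2(1-r)$, which is exactly \eqref{eq:local_interpolant_euclidean}. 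The resulting law depends on $x_0$ and $x_t$ only, because $Y$ is Markov.

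\textbf{Step 2: projectivity.} Equation \ref{eq:projectivity} follows from the tower property. Write
\[
\mathrm{Law}(X_s\mid X_t)=\int \mathrm{Law}(X_0\mid X_t)\,\mathrm{Law}(X_s\mid X_0,X_t)\,dx_0 .
\]
The inner kernel is $K_{s\mid 0,t}$ by Step 1, so no computation is needed once Step 1 identifies it as the true conditional.

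\textbf{Step 3: bridge consistency.} I expect this to be the main point requiring care. For $s\le t\le u$, the process $X$ is Markov both forward in time and in time-reversed form, since $Y$ has independent increments. Hence, conditionally on $X_t$, the variable $X_s$ is independent of $X_u$, and the Chapman–Kolmogorov identity \ref{eq:bridge-consistency} holds for the true conditionals $K_{\cdot\mid\cdot}$. This applies equally to the versions additionally conditioned on $X_0$.

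As a sanity check that the explicit formulas compose correctly, I would verify directly that composing the $u\to t$ and $t\to s$ bridges with $x_0$ fixed reproduces the $u\to s$ bridge. The mean coefficient on $x_u$ is $\frac{\alpha(s)}{\alpha(t)}r(s,t)\cdot\frac{\alpha(t)}{\alpha(u)}r(t,u)=\frac{\alpha(s)}{\alpha(u)}r(s,u)$. For the variance, in $Y$-coordinates we get
\[
v(s)(1-r_{st}) + r_{st}^2\, v(t)(1-r_{tu}) = v(s) - v(s)^2/v(u) = v(s)(1-r_{su}),
\]
as required. This closes the argument.
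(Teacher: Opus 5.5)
Your proof is correct. The paper itself does not prove this proposition. There is no appendix argument, and the main text only remarks that \eqref{eq:local_interpolant_euclidean} recovers the interpolant of \cite{albergo2023stochastic}. So there is no paper proof to compare against; your proposal supplies the missing argument. The Gaussian-conditioning computation, the tower-property derivation of \eqref{eq:projectivity}, the Markov/Chapman--Kolmogorov derivation of \eqref{eq:bridge-consistency}, and your explicit mean/variance composition check are all right.

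Two points would tighten it.

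First, your covariance hypothesis is not merely one admissible reading of ``Gaussian process with standard deviation $\sigma(t)$''; the statement forces it. Write $\mathrm{Cov}(B_s,B_t)=c(s,t)\Id$ and condition $X_s$ on $X_t$ given $X_0$. The coefficient of $x_t$ is $c(s,t)/\sigma(t)^2$. Equating it with $\frac{\alpha(s)}{\alpha(t)}r(s,t)=\frac{\alpha(t)\sigma(s)^2}{\alpha(s)\sigma(t)^2}$ gives $c(s,t)=\frac{\alpha(t)}{\alpha(s)}\sigma(s)^2$. The $x_0$-coefficient $\alpha(s)(1-r)$ and the variance $\sigma(s)^2(1-r)$ then follow automatically. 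Stating this, together with independence of $B$ from $X_0$ and $\sigma(t)>0$ so that $r$ is defined, turns your ``interpretation'' into a necessary and sufficient condition.

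Second, the substitution $Y_t=X_t/\alpha(t)$ breaks down when $\alpha(t)=0$. That is exactly the case $\alpha(1)=0$ the paper highlights right after the proposition. There you should condition directly in $X$-coordinates. Since $c(s,1)=0$, you get $K_{s\mid 0,1}(\cdot\mid x_0,x_1)=\mathcal N(\alpha(s)x_0,\sigma(s)^2\Id)$, which is \eqref{eq:local_interpolant_euclidean} once $\frac{\alpha(s)}{\alpha(t)}r(s,t)$ is read in its simplified form.

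For comparison, in the count setting the paper proves consistency (Theorem~\ref{thm:consistency-slack}, Theorem~\ref{thm:observable-bridge-consistency}) only in the $x_0$-pinned form. It does so by explicitly composing Binomial and Hypergeometric kernels and then mixing over the slack. Your route, identifying each kernel as a true conditional of a Markov process, is more conceptual and also yields the unconditional identity. Your explicit composition check is the Gaussian analogue of Lemma~\ref{lem:bin-comp}.

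Note, however, that the unconditional identity relies on $X$ being exactly the process \eqref{eq:unconditional_forward_euclidean}. In the interpolant use, with an arbitrary coupling of $(X_0,X_1)$, the unconditional process is generally not Markov. In that setting only the $x_0$-pinned identity and \eqref{eq:projectivity} survive, because given $X_0$ the process is an $h$-transform of the Gaussian one.
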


Note that if $X_1 \sim \mathcal{N}(0, \Id)$, $\alpha(1)=0$ and $\sigma(1)=1$ we have 
\(
    \smash{X_t \stackrel{d}{=} \alpha(t) X_0 + \sigma(t) Z}. 
\)
Furthermore, our \eqref{eq:local_interpolant_euclidean} recovers the interpolation described in \cite{albergo2023stochastic} with the identification $\smash{\alpha(t) \to \alpha(t) (1 - r(t))}$, $\smash{\tfrac{\alpha(t)}{\alpha(1)} r(t) \to \beta(t)}$ and $\smash{\sigma(t) (1 - r(t))^{1/2} \to \gamma_t}$. 

\subsection{Sampling the Posterior}\label{sec:sampling-posterior}

In this paradigm the bridge is only the first of two choices that define the model. We also have to choose how to model the posterior $X_0 | X_t,t$. There are two core options: we can use differential equations to model the posterior in the limit of small steps or we can focus more directly on modeling the posterior. In Euclidean space, the former lets us learn a simple conditional expectation, whereas the latter always requires a distribution model. 

\paragraph{Infinitesimal.}
Consider a small backward step of size $\delta>0$.  The local bridge between
times $t$ and $t-\delta$ is Gaussian, so conditioned on $X_t=x$ we can write
to first order in $\delta$
\[
X_{t-\delta} \mid X_t = x
   \;\approx\;
   x - \delta\, b(x,t) + \sqrt{\delta}\,\xi_t,
   \qquad \xi_t \sim \mathcal N\bigl(0,\Sigma(x,t)\bigr),
\]
where $b$ is the reverse-time drift and $\Sigma$ is the diffusion
covariance of the bridge.

The conditional law $X_{t-\delta}\mid X_t$ is Gaussian and can be computed in closed form:
\[
b(x,t) \;=\; B_1(t)\,x + B_2(t)\,\mathbb{E}[X_0 \mid X_t = x] + b_0(t),\qquad
\Sigma(x,t) \;=\; \Sigma_0(t).
\]
The diffusion covariance depends only on $t$ (from the
Brownian increment), and the drift depends on the posterior
$\mathsf{Law}(X_0\mid X_t)$ only through its mean.
This justifies learning the mean
$q_\theta(x,t) \approx \mathbb{E}[X_0 \mid X_t = x]$ (equivalently, a score or
velocity) as in standard diffusion models
\citep{song2020score}.

\paragraph{Distributional.} Following \cite{de2025distributional} we can learn the conditional law
\(
q_\theta(\,\cdot\,\mid x_t,t) \approx \mathsf{Law}(X_0\mid X_t{=}x_t),
\)
using any distribution learning approach. We can then sample and directly plug into the bridge 
\[
\tilde X_0^{(k+1)} \sim q_\theta(\,\cdot\,\mid X_{t_{k+1}},t_{k+1}), \quad
X_{t_k} \sim K_{t_k\mid 0,t_{k+1}}\bigl(\,\cdot\,\mid \tilde X_0^{(k+1)},X_{t_{k+1}}\bigr).
\]
to sample the posterior.  
The distributional perspective is particularly powerful when the infinitesimal perspective fails to admit a simplification to the conditional expectation, which motivates our use of the distributional approach for Count Bridges (see Sec. \ref{subsec:score-unit}). 
In categorical discrete settings, all approaches are distributional since they are based on cross-entropy losses, see \cite{campbell2022continuous,austin2021structured,shi2024simplified,sahoo2024simple}.

\section{Count Bridges}\label{sec:count-bridges}

\subsection{An integer bridge between distributions}

\input{figs/process/process}

Mirroring Sec.~\ref{sec:background_diffusion}, we seek a bridge for integer-valued data. Instead of a Gaussian process, we use a pair of independent Poisson birth/death processes $(B_t)_{t\in[0,1]}$ and $(D_t)_{t\in[0,1]}$ that increment/decrement the counts. We define an increasing ``jump-intensity'' function $w:[0,1]\to\R_{\ge0}$ with $w(0)=0$, $w(1)=1$, and then write the cumulative birth/death intensities as
\(
\Lambda_\pm(t)=\lambda_\pm\,w(t)\) 
for some $\lambda_\pm>0$ so $B_t\sim \Poi(\Lambda_+(t))$ and $D_t\sim \Poi(\Lambda_-(t)) $. From here we can define the unconditional kernel $K_{t|0}$:
\begin{equation}
\label{eq:unconditional_forward_count}
    X_t = X_0 + B_t - D_t.
\end{equation}
Denoting the displacement $d_t = X_t - X_0$, the total number of jumps $N_t = B_t + D_t$, and the slack variable $M_t = \min(B_t,D_t)$. Any two of these variables determine the third:
\begin{equation}
\label{eq:change_of_variables_discrete}
    N_t = |d_t| + 2M_t , \qquad
    B_t = \tfrac{1}{2}(N_t + d_t) , \qquad
    D_t = N_t - B_t .
\end{equation}

From the $(N_t,B_t)$ perspective, Poisson superposition and thinning imply that, conditional on $(N_t,B_t)$ at time $t$, the earlier counts $(N_s,B_s)$ for $s<t$ can be sampled by a Binomial draw for $N_s$ and a Hypergeometric draw for $B_s$. Switching to $(M_t,d_t)$, a Poisson change of variables yields the slack posterior $M_t \mid d_t$, whose pmf has Bessel form (see Prop.~\ref{prop:bessel-posterior} in App.~\ref{app:bridge}). These two ingredients together give a count analogue of Proposition~\ref{prop:sampling_interpolation_euclidean}; the full derivation is in App.~\ref{app:bridge}.

\begin{proposition}\label{prop:sampling_interpolation_count}
    Let $(X_t)_{t \in [0,1]}$ be given by \eqref{eq:unconditional_forward_count}. Now, consider $(X_s)_{s \in [0,t]}$ conditioned by $X_t = x_t$ and $X_0 = x_0$. Then, we have the Poisson Birth-Death bridge $K_{s\mid 0,t}$: 
    \begin{equation}
    \label{eq:local_interpolant_count_bridge}
        X_s \stackrel{d}{=}  X_0 +  B_s - D_s,
    \end{equation}
    where we condition on $d_t = X_t - X_0$ and sample $M_t  \;| \; d_t \sim\text{Bes}(|d_t|;\Lambda_+(t),\Lambda_-(t))$, changing variables to $N_t$ and $B_t$ to sample $B_s$, and $D_s$ which we can plug into \eqref{eq:local_interpolant_count_bridge}: 
\begin{equation}
\label{eq:BH-cut_local}
N_s\mid N_t\sim\mathrm{Bin}\!\left(N_t,\frac{w(s)}{w(t)}\right), 
\;\;
B_s\mid(N_t,N_s,B_t)\sim\mathrm{Hyp}(N_t,B_t,N_s), \;\; D_s = N_s - B_s.
\end{equation}
The family $\{K_{s\mid 0,t}\}_{0\le s\le t\le1}$ defined by \eqref{eq:local_interpolant_count_bridge} satisfies equations \ref{eq:bridge-consistency} and \ref{eq:projectivity}.
\end{proposition}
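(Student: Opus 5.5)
The plan is to work with the underlying pair of counting processes rather than with $X_t$ directly. Since $B$ and $D$ are independent inhomogeneous Poisson processes with cumulative intensities $\lambda_\pm w(t)$, the superposition $N_t=B_t+D_t$ is Poisson with cumulative intensity $(\lambda_++\lambda_-)w(t)$. Each jump is independently a birth with probability $p=\lambda_+/(\lambda_++\lambda_-)$, independently of the jump times. The conditional law has three ingredients, which I would establish in order.

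\emph{Step 1: slack posterior.} Given $X_0=x_0$, the displacement $d_t=B_t-D_t$ is independent of $X_0$, so conditioning on $(x_0,x_t)$ amounts to conditioning on $d_t$. For $d\ge0$,
\[
P(M_t=m,d_t=d)=P(B_t=m+d,D_t=m)=e^{-\Lambda_+-\Lambda_-}\frac{\Lambda_+^{m+d}\Lambda_-^m}{(m+d)!\,m!},
\]
and symmetrically for $d<0$. Normalizing gives the Bessel law $\mathrm{Bes}(|d|;\Lambda_+(t),\Lambda_-(t))$, with the normalizer being $I_{|d|}(2\sqrt{\Lambda_+\Lambda_-})$ up to a power factor. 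Equation \eqref{eq:change_of_variables_discrete} then recovers $(N_t,B_t)$ bijectively.

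\emph{Step 2: interior sampling given $(N_t,B_t)$.} Conditional on $N_t=n$, the $n$ jump times are i.i.d.\ with cdf $w(\cdot)/w(t)$ on $[0,t]$, and their labels (birth/death) are i.i.d.\ and independent of the times. Conditioning further on $B_t=b$ makes the labels a uniformly random arrangement of $b$ births among $n$ jumps, still independent of the times. Hence $N_s\mid N_t\sim\Bin(N_t,w(s)/w(t))$. Given $N_s$, the births among the first $N_s$ jumps are a sample without replacement, so $B_s\sim\Hyp(N_t,B_t,N_s)$. This yields \eqref{eq:BH-cut_local} and therefore \eqref{eq:local_interpolant_count_bridge}. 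Mixing over the slack posterior from Step 1 gives exactly $\mathrm{Law}(X_s\mid X_0,X_t)$.

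\emph{Step 3: consistency.} Since $K_{s\mid 0,t}$ is shown to be the true conditional law of a single joint process $(X_0,X_s,X_t,\dots)$, both identities follow from probability calculus rather than computation. Projectivity \eqref{eq:projectivity} is the tower property, $P(X_s\mid X_t)=\sum_{x_0}P(X_0=x_0\mid X_t)P(X_s\mid X_0,X_t)$. Bridge consistency \eqref{eq:bridge-consistency} is the Chapman–Kolmogorov identity for the reverse chain, which requires $X_s\perp X_u\mid X_t$. Given $X_0$, this holds because $X$ is a process with independent increments and hence Markov, and time reversal preserves the Markov property. I would state \eqref{eq:bridge-consistency} conditionally on $X_0$ (and on the global pair $(X_0,X_1)$ for training), then mix over the posterior of $X_0$.

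The main obstacle is this last point. Unconditionally, marginalizing over an arbitrary $p_0$ (or over a coupling with $p_1$), $X$ is a mixture of Markov processes and need not itself be Markov. I would therefore check carefully that the paper's consistency identities are meant conditionally on the endpoint(s), which is how the sampler of \eqref{eq:general-sampling} uses them. The only place where count-specific care is needed is the independence of labels from times, and of $(N,B)$ from the stated $X_0$ conditioning.
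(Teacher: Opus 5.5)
Your argument is correct. Steps 1--2 are the same ingredients the paper uses: the Bessel slack posterior (Prop.~\ref{prop:bessel-posterior}) and the Binomial--Hypergeometric structure (Lemma~\ref{lem:bin-hyp-structure}). The difference lies in how consistency is obtained.

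The paper proceeds constructively. It fixes the slack (equivalently $(N_1,B_1)$) and uses explicit Binomial-thinning and Hypergeometric-subsampling composition lemmas to show that the two-stage draw $1\to t\to s$ agrees with the one-stage draw $1\to s$. It then mixes over the slack using the swap $\pi_1(m\mid d_1)K^{(m)}_{t\mid 0,1}=\pi_t(m\mid d_t)K_{t\mid 0,1}$ and the closure of the Bessel family under the time change $w$.

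You instead identify $K_{s\mid 0,t}$ with $\mathrm{Law}(X_s\mid X_0,X_t)$. Bridge consistency then follows from $X_s\perp X_u\mid (X_0,X_t)$, and projectivity from the tower property. This route needs no composition lemmas, and it gives projectivity for free, which the paper does not address separately.

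Your route also justifies the step that the paper's mixing argument only asserts (``once $X_t$ is fixed, $X_1$ provides no additional information about the slack''). The paper writes that step with a single symbol $M$, weighted by $\pi_1$ on one side and by $\pi_t$ on the other. The correct bookkeeping conditions on the time-$t$ slack $M_t$. Its posterior given $(X_0,X_t,X_1)$ equals $\pi_t(\cdot\mid d_t)$ by exactly your independent-increments argument applied to $(B_t,D_t)$.

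What the paper's route buys in exchange is a direct check that the concrete Bin/Hyp draws of Algorithms~\ref{alg:train}--\ref{alg:sample} compose exactly, with the urn picture made explicit.

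On your caveat: the paper's Theorem~\ref{thm:observable-bridge-consistency} is indeed stated conditionally on $x_0$, as you anticipate. One refinement: mixing over $X_0\sim p_0$ alone in the forward construction keeps $X$ Markov, because its increments are independent of $X_0$. Non-Markovianity can therefore enter only once $X_1$ is pinned to a prescribed coupling such as $p_0\otimes p_1$.
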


We visualize this process in Fig. \ref{fig:bd-bridge} where we show the trajectories for the one- and two-step models along with the core composition property that drives bridge models. This setup enables training and sampling from a Count Bridge, see Algorithms \ref{alg:train} and \ref{alg:sample}. These results leverage our custom CUDA kernel implementing the fast Bessel sampler of \cite{devroye2002simulating} to enable sampling at scale.

In Fig.~\ref{fig:bd-bridge} we also see that as $d_t$ grows the slack $M_t$ concentrates near zero, so there is no slack. This means that Count Bridges are an instance of the static Schr\"odinger bridge problem \citep{leonard2013survey}: they solve an entropy-regularized optimal transport. Let $\smash{\kappa = \sqrt{\lambda_+\lambda_-}}$ be the jump intensity and $\smash{p_{\mathrm{ref}}^\kappa(x_0, x_1) =  p_0(x_0) K_{1|0}^\kappa(x_1 | x_0)}$ be the joint law of $(X_0,X_1)$ induced by the kernel. Over the space of couplings $\smash{\mathcal C(p_0,p_1) = \{C \text{ on } \mathcal X\times\mathcal X : C(\cdot,\mathcal X)=p_0,\ C(\mathcal X,\cdot)=p_1\}}$, Count Bridges solve 
\[
C_\kappa \in \arg\min_{C\in \mathcal C(p_0,p_1)} \mathrm{KL}\bigl(C \,\|\, p_{\mathrm{ref}}^\kappa\bigr).
\]
Letting $\kappa\to\infty$ yields the independent coupling $p_0\otimes p_1$, but
as $\kappa\downarrow 0$ we obtain
\[
\mathrm{KL}\bigl(C \,\|\, p_{\mathrm{ref}}^\kappa\bigr)
\;\approx\; \log\!\Bigl(\tfrac{2}{\kappa}\Bigr)\,\mathbb E_C|X_1{-}X_0| - H(C),
\]
so $\kappa\to 0$ recovers discrete OT with cost $|x_1{-}x_0|$ (see App.~\ref{app:schrodinger-proof}).

This echoes the Gaussian case (Sec.~\ref{sec:background_diffusion}) where we define $\sigma = \sigma(1)$ and $p_{\mathrm{ref}}^\sigma$, and as $\sigma \downarrow 0$
\[
\mathrm{KL}\bigl(C \,\|\, p_{\mathrm{ref}}^\sigma\bigr)
\;\approx\; \tfrac{1}{2\sigma^2}\,\mathbb E_C\|X_1{-}X_0\|^2 - H(C),
\]
so $\sigma \to 0$ recovers quadratic OT, while $\sigma\to\infty$ again gives $p_0\otimes p_1$ \citep{shi2023diffusion}. Thus the bridge parameters $\kappa$ (count) and $\sigma$ (Gaussian) play the same role as entropy–regularization strengths.

\subsection{Distributional scoring loss for the denoiser}\label{subsec:score-unit}

Training requires a distributional loss due to the discrete nature of the space. As shown by \citet{holderrieth2024generator}, the ELBO for discrete generators (e.g., jump processes) is distributional and cannot be reduced to expectations over point estimates. This mirrors the need for cross-entropy in discrete diffusion and flow models. We can use cross-entropy with Count Bridges (we test this, see App. \ref{app:discrete-moons}), but it has two issues: first, it does not incorporate the lattice structure; second, cross-entropy cannot model the joint of $X_s \mid X_t$ without exponential cost in dimension, so cross entropy is usually factorized, modeling each coordinate of $X_s \mid X_t$ independently or autoregressively. Specializing to count data we can go beyond cross-entropy by using a proper scoring rule that (i) incorporates the geometry and (ii) enables modeling of the joint.

Formally, let $(X_0,X_t)$ denote a training pair from $K_{t\mid 0,1}$ at time $t\in[0,1]$, and let $q_\theta(\cdot\mid x_t,t)$ be our denoiser. We train $q_\theta$ using a strictly proper distributional scoring rule \citep{gneiting2007strictly,de2025distributional}.
Fix a negative-type semimetric $\rho$ on $\smash{\mathbb{Z}^D}$ (all our experiments focus on the $\smash{\rho(x,x')=\|x-x'\|_2^\beta}$ with $\beta=1$). For any distribution $p$ and outcome $y$, the energy score is

\[
S_\rho(p,y) = \tfrac{1}{2}\,\mathbb{E}_{X,X'\sim p}\big[\rho(X,X')\big] - \mathbb{E}_{X\sim p}\big[\rho(X,y)\big] \text { and } \mathcal{L}(\theta)
 = 
\mathbb{E}_{X_0,X_t,t}\Big[
S_\rho\!\big(q_\theta(\,\cdot\,\mid X_t,t),\,X_0\big)
\Big]
\]

which is strictly proper when $\rho$ is characteristic. Taking $m$ i.i.d. samples $\hat{x}^{(j)}\!\sim\!q_\theta(\cdot\mid x_t,t)$ we can use the plugin estimator:  
\(
\widehat{S}_\rho = \frac{1}{m(m-1)} \sum_{j\neq j'} \frac12 \rho(\hat{x}^{(j)},\hat{x}^{(j')}) - \frac{1}{m}\sum_{j=1}^m \rho \big(\hat{x}^{(j)},x_0\big).
\)

\begin{figure}
\begin{minipage}[t]{0.43\linewidth}
\vspace{0pt}
\begin{algorithmic}[1]
\Require dataset $(\mathbf x_0, \mathbf x_1)$, $w(\cdot), \Lambda_\pm(\cdot)$
\For{each minibatch}
  \State sample $(x_0,x_1) \sim (\mathbf x_0, \mathbf x_1)$
  \State $t\sim\mathrm{Unif}[0,1]$
  \State $d_1 \gets x_1-x_0$
  \State $M_1\sim \mathrm{Bes}(|d_1|;\,\Lambda_+(1),\Lambda_-(1))$
  \State $N_1\gets |d_1|+2M_1$
  \State $B_1\gets\tfrac12(N_1+d_1)$
  \State $N_t\sim\Bin\bigl(N_1,\,w(t)\bigr)$
  \State $B_t\sim\Hyp\bigl(N_1,\,B_1,\,N_t\bigr)$
  \State $x_t\gets x_1 - 2(B_1 - B_t) + (N_1 - N_t)$
  \State update $\theta$ on $\mathcal{L}(\theta)$
\EndFor
\end{algorithmic}
\captionsetup{type=algorithm}
\captionof{algorithm}{Training Poisson–BD Bridge}
\label{alg:train}
\end{minipage}
\hfill
\begin{minipage}[t]{0.55\linewidth}
\vspace{0pt}
\begin{algorithmic}[1]
\Require $x_{t_K} = x_1$, model $q_\theta$, $w(\cdot),\Lambda_\pm(\cdot)$
\For{$k=K,K-1,\dots,1$}
  \State sample $\hat x_0\sim q_\theta(\cdot\mid x_{t_k},t_k)$
  \State $d_{t_k}\gets x_{t_k}-\hat x_0$
  \State $M_{t_k}\sim \mathrm{Bes}(|d_{t_k}|;\,\Lambda_+(t_k),\Lambda_-(t_k))$
  \State $N_{t_k}\gets |d_{t_k}|+2M_{t_k}$
  \State  $B_{t_k}\gets\tfrac12(N_{t_k}+d_{t_k})$
  \State $r \gets w(t_{k-1})/w(t_k)$
  \State $N_{t_{k-1}}\sim\Bin\!\big(N_{t_k},\,r\big)$
  \State $B_{t_{k-1}}\sim\Hyp\!\big(N_{t_k},\,B_{t_k},\,N_{t_{k-1}}\big)$
  \State $x_{t_{k-1}}\gets x_{t_k} \! -2(B_{t_k}-B_{t_{k-1}})+(N_{t_k}-N_{t_{k-1}})$
\EndFor
\State \Return $x_{t_0}$
\end{algorithmic}
\vspace{-1mm}
\captionsetup{type=algorithm}
\captionof{algorithm}{Sampling Poisson–BD Bridge}
\label{alg:sample}
\end{minipage}
\end{figure}

\section{Deconvolution with count bridges}
\label{sec:gem-aggregate}

We extend Count Bridges to handle unit–level generation when we only observe aggregates. Consider $G$ units in the one-dimensional case where the group-level state at time $t$ is a vector $\mathbf{X}_t\in\Z^{G}$ with entries $X_{gt}$ for unit $g$ at time $t$. Each entry evolves independently according to the bridge in Section~\ref{sec:count-bridges}. The key challenge: we observe the unit–level endpoint $\mathbf{x}_1$ but only the aggregate at time $0$,
\(
a_0 = \sum_{g=1}^G x_{g0} \in\Z,
\)
not the unit–level vector $\mathbf{x}_0$.
 Our goal is to learn a count bridge $q_\theta({x}_0\mid {x}_t,t,z)$ that generates unit–level endpoints given start data at time $t=1$ and side information $z$.


We formulate this as a generalized EM problem, similar to \cite{rozet2024learning}, where $\mathbf{X}_0$ is latent and $\smash{a_0=\sum_g\mathbf{X}_{g0}}$ is observed. Let $\smash{A:\mathbb{Z}^{G}\!\to\!\mathbb{Z}}$ be a linear aggregate map (e.g., sums across units, block sums). For $(x_t,t,z)$, the denoiser $q_\theta(\cdot\mid x_t,t,z)$ defines an i.i.d. product prior
over $\mathbf X_0 = (X_{10},\dots,X_{G0})$.
Conditioning on the aggregate yields
\[
Q_\theta(\mathbf X_0 \mid a_0, x_t,t,z)
~\propto~
\Bigl[\;\prod_{g=1}^G q_\theta(X_{g0}\mid x_t,t,z)\Bigr]\,
\mathbf 1\{A(\mathbf X_0)=a_0\}.
\]
In the E-step we will generate ``latent'' $x_0^\approx$ using the model and in the M-step we will use these $x_0^\approx$ to train the model at the aggregate level. We summarize the overall procedure in Algorithms \ref{alg:sample-guided} and \ref{alg:train-aggregate}.
 
\paragraph {E-Step}
The ideal E–step would sample from the exact aggregate–conditional law
\[
\mathbf{X}_{0}^\star \sim Q_\theta\!\left(\cdot \mid a_0,x_t,t, z\right).
\]
We could then use the sampled $\mathbf{x}_0^\star$ as latent variables to sample $x_t$ between $(\mathbf{x}_0^\star,\mathbf{x}_1)$ using the unit–level kernel $K_{t|0,1}$ from Prop. \ref{prop:sampling_interpolation_count}.\footnote{The same method described here can be used with distributional diffusion on continuous space, but we focus on counts since most often when we observe aggregates we believe they are based on discrete underlying data.} Unfortunately, $Q_\theta$ is generally intractable to sample from, given just a unit-level model, so we approximate it through the diffusion sampling process itself. Starting from $\mathbf{x}_1$, we run the sampling process as in Algorithm~\ref{alg:sample}, but at each timestep $t_k$ we: (1) predict $\hat{\mathbf{x}}_0\sim q_\theta(\cdot\mid \mathbf{x}_{t_k},t_k,z)$, (2) project $\hat{\mathbf{x}}_0$ to satisfy the aggregate constraint (see Sec. \ref{sec:project}), yielding $\tilde{\mathbf{x}}_0$, and (3) perform the sampling step using $\tilde{\mathbf{x}}_0$ as the predicted endpoint. This projection–guided diffusion ensures the aggregate constraint is incorporated throughout the denoising trajectory (see Alg. \ref{alg:sample-guided}). This process produces latent $x_0^\approx$ samples that are consistent with the aggregate constraints, which we can then use in the M-step to train the model. 

\paragraph{M-Step}

With these unit-level samples in hand, the M–step runs the bridge process as in Section \ref{sec:count-bridges}. But instead of computing the loss on the unit-level latents, we compute the loss with respect to the aggregates. 
Given the ground-truth aggregate $a_0$, we lift the same strictly proper score to aggregates:
\small
\[
S^{A}_\rho\!\big(p,a\big)
 = \tfrac{1}{2}\,\mathbb{E}_p\big[\rho(A(\mathbf X),A(\mathbf X'))\big] - \mathbb{E}_p\big[\rho(A(\mathbf X),a)\big] \text{ and } \mathcal{L}_{\mathrm{agg}}(\theta)
 = \mathbb{E}_{A_0,\mathbf X_t, t}\!\Big[
S^{A}_\rho\!\big(q_\theta(\cdot\mid \mathbf X_t,t,z),\,A_0\big)
\Big]
\]
\normalsize
with the plug-in obtained by sampling $\hat{\mathbf X}_0^{(j)}\!\sim q_\theta(\cdot\mid \mathbf X_t,t,z)$
and forming $\hat{a}^{(j)} = A(\hat{\mathbf X}_0^{(j)})$.

\paragraph{Approximate Sampling from the conditional distribution}\label{sec:project}

Given a predicted endpoint $\hat{\mathbf{x}}_0$ from our diffusion model and target aggregate $a_0$, we need to sample from the conditional distribution $Q_\theta(\cdot \mid A(\mathbf{X}_{0}) = a_0)$. While this is intractable, we can derive a principled approximation.

\begin{proposition}[First–order aggregate projection]\label{prop:ipf-first-order}
Let $A(\mathbf X_0)$ be the aggregate, and let $p_0$ be the prior law of 
$\mathbf X_0$. Under the regularity conditions in App.~\ref{app:ipf}, the
aggregate–conditional law $Q_\theta(\cdot \mid A_0=a_0)$ admits a first–order
exponential tilt.  
The corresponding generalized KL projection
\[
\Pi(\mathbf x_0)
=\arg\min_{\mathbf y_0:\,A(\mathbf y_0)=a_0} 
D_{\mathrm{KL}}(\mathbf y_0\Vert \mathbf x_0)
\]
gives a kind of first–order approximation to $Q_\theta(\cdot\mid A_0=a_0)$.  
For an elementwise sum $A(\mathbf x_0)=\sum_g x_{g0}$ this projection is the
simple scaling
\( \Pi(x_0)_g = a_0 x_{g0} / (\sum_{g'} x_{g'0}). \)
\end{proposition}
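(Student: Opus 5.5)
The argument has three parts: the exponential tilt, the generalized KL projection onto the constraint set, and the closed form for the elementwise sum.

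\textbf{Step 1: the exponential tilt.} Start from the definition
\[
Q_\theta(\mathbf X_0\mid a_0)\propto \prod_g q_\theta(X_{g0}\mid x_t,t,z)\,\mathbf 1\{A(\mathbf X_0)=a_0\},
\]
where the product prior has independent coordinates. Write the indicator through its Fourier/Laplace representation,
\[
\mathbf 1\{A(\mathbf x)=a_0\}=\frac{1}{2\pi}\int_{-\pi}^{\pi} e^{i u (A(\mathbf x)-a_0)}\,du .
\]
Then do a saddle-point (large-deviation) expansion, treating the number of units $G$ as large. The conditional law is asymptotically the tilted product law
\[
q^{\eta}(\mathbf x)\propto \prod_g q_\theta(x_g)\,e^{\eta\, a_g x_g},
\]
with $A(\mathbf x)=\sum_g a_g x_g$. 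The parameter $\eta$ is chosen so that $\mathbb E_{q^\eta}[A(\mathbf X)]=a_0$; this is the standard Gibbs-conditioning / equivalence-of-ensembles principle. The regularity conditions I would assume are:
\begin{itemize}
\item finite exponential moments of $q_\theta$ in a neighbourhood of $0$,
\item $a_0$ lying in the interior of the range of $\mathbb E_{q^\eta}A$,
\item a lattice non-degeneracy condition for the local CLT.
\end{itemize}
Under these, the marginals of $Q_\theta$ on any fixed finite set of coordinates converge to those of $q^\eta$, with an $O(1/G)$ error. This is the sense in which the tilt is ``first order.''

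\textbf{Step 2: the generalized KL projection.} Next, identify the tilt with the projection. The I-projection $\arg\min_{p:\,\mathbb E_pA=a_0}\mathrm{KL}(p\Vert q_\theta)$ has the Gibbs form $q^\eta$, by Csisz\'ar's theorem (equivalently, Lagrange multipliers). Passing to means, treat $\mathbf x_0$ and $\mathbf y_0$ as intensity vectors of independent Poisson coordinates; this is the natural first-order surrogate for count data. Then
\[
\mathrm{KL}\bigl(\Poi(\mathbf y)\Vert\Poi(\mathbf x)\bigr)=\sum_g\bigl(y_g\log(y_g/x_g)-y_g+x_g\bigr)=D_{\mathrm{KL}}(\mathbf y\Vert\mathbf x),
\]
which is the generalized KL. Tilting a Poisson law by $e^{\eta a_g x}$ simply rescales its mean to $x_g e^{\eta a_g}$. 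So the mean of the tilted law coincides with the minimizer of the generalized KL over the affine constraint $A(\mathbf y)=a_0$. The stationarity conditions read
\[
\log(y_g/x_g)=\eta a_g \quad\Longleftrightarrow\quad y_g=x_g e^{\eta a_g},
\]
and these are exactly the Gibbs form above.

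\textbf{Step 3: the elementwise sum.} For $A=\sum_g x_g$ we have $a_g\equiv1$, so $y_g=c\,x_g$ with $c=e^{\eta}$. The constraint forces $c=a_0/\sum_{g'}x_{g'0}$, and strict convexity of the generalized KL gives uniqueness. This yields the stated scaling. A sanity check supports the claim: for exact Poisson priors, conditioning independent Poissons on their sum gives a multinomial with mean exactly $a_0 x_g/\sum x$, so the approximation is exact in that case.

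\textbf{Main obstacle.} The hard step is Step 1: making the ``first-order'' claim rigorous for arbitrary lattice priors $q_\theta$, which requires a local CLT with a uniform saddle-point error. I would first try to prove it only for the case $G\to\infty$ with fixed-coordinate marginals, using the Fourier integral and a Laplace expansion around $u=-i\eta$. I would also state the result honestly as a first-order approximation rather than an equality.
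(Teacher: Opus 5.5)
Your closed form is derived exactly as in the paper, by Lagrangian stationarity plus strict convexity (Lemma~\ref{lem:projection-scaling}). The justification around it follows a genuinely different route.

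\emph{The paper's route.} The paper never proves the equivalence of ensembles. It assumes Cram\'er regularity of the aggregate log-mgf and defines the canonical tilt by $\nabla\Lambda(\hat\lambda)=a_0$. It notes that tilting leaves the hard conditional unchanged (Lemma~\ref{lem:tilt-invariant}) and cites the Gibbs conditioning principle without proof. It then locates the meaning of ``first-order'' in the linearization $\hat\lambda=\Sigma^{-1}\delta+O(\|\delta\|^2)$, with $\delta=a_0-\mu$ (Lemma~\ref{lem:lambda-firstorder}). The point projection is introduced separately as a Bregman projection, with no formal tie to the tilt.

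\emph{Your route.} You propose to derive the conditioning principle yourself, via Fourier inversion and a saddle point at $u=-i\eta$; the contour shift there is the Fourier-side form of the paper's tilt-invariance lemma. You read ``first-order'' as an $O(1/G)$ error on finite-dimensional marginals. That is a stronger, quantitative claim than the paper makes (it only asserts convergence), and it rests on the uniform local CLT you correctly flag as the hard step.

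\emph{What you add.} Your real gain is the bridge the paper leaves implicit. Under a Poisson surrogate, the generalized KL is exactly $\mathrm{KL}(\Poi(\mathbf y)\Vert\Poi(\mathbf x))$, and Poisson tilts are mean rescalings. Hence the projection equals the mean vector of the I-projection, and your multinomial computation shows it is the exact conditional mean for Poisson priors.

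\emph{How the two views fit together.} For a general independent prior, the paper's linearization shifts unit $g$'s mean by $\sigma_g^2\,\delta/\sum_{g'}\sigma_{g'}^2$. This agrees with proportional rescaling only when variances are proportional to means. So your Poisson assumption is precisely what the paper's ``kind of first-order approximation'' tacitly requires. In that case your exact parameter $\eta=\log\bigl(a_0/\sum_{g'}x_{g'0}\bigr)$ linearizes to $\bigl(a_0-\sum_{g'}x_{g'0}\bigr)/\sum_{g'}x_{g'0}=\Sigma^{-1}\delta$, matching the paper's expansion.

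\emph{What you omit.} Relative to the paper, you skip the linearization step itself, though you do not need it in the Poisson case. You also skip the coordinatewise extension to $D$ features, which is immediate because the generalized KL separates across features. Finally, you should state the hypothesis $\sum_{g'}x_{g'0}>0$, which the closed form needs to be well defined.
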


The proposition shows that the natural rescaling operation is not ad hoc, but can be justified as a kind of first-order approximation to the true conditional distribution in a large sample regime (see Appendix~\ref{app:ipf}). When unit-level training data exist, we can learn a projection $\Pi_\psi(\hat{\mathbf{x}}_0,z,a_0)$ that actually enables sampling conditional on the mean. See Sec. \ref{sec:applications} where we show how to learn such a projection.

\begin{figure}
\begin{minipage}[t]{0.48\linewidth}
\vspace{0pt}
\begin{algorithmic}[1]
\Require $(\mathbf{x}_1, a_0, z)$, $w(\cdot), \Lambda_\pm(\cdot)$, $q_\theta$, $\Pi$
\For{$k=K, K-1, \dots, 2$}
  \State Sample $\hat{\mathbf{x}}_{0,t_k} \sim q_\theta(\cdot\mid \mathbf{x}_{t_k}, t_k, z)$
  \State $\tilde{\mathbf{x}}_{0,t_k} \gets \Pi(\hat{\mathbf{x}}_{0,t_k}, a_0, z)$
  \State Update $\mathbf{x}_{t_{k-1}}$ by running the reverse step
  \State \hspace{4mm} using steps 4--10 of Alg.~\ref{alg:sample},
         with $\tilde{\mathbf{x}}_{0,t_k}$
\EndFor
\State $\mathbf{x}_0^\approx \gets $ sample and project $\hat{\mathbf{x}}_{0,t_1}$
\State \Return $\mathbf{x}_0^\approx$
\end{algorithmic}
\captionsetup{type=algorithm}
\captionof{algorithm}{Guided Sampling to for $x_0^\approx$}
\label{alg:sample-guided}
\end{minipage}
\hfill
\begin{minipage}[t]{0.5\linewidth}
\vspace{0pt}
\begin{algorithmic}[1]
\Require $(\mathbf{x}_1, a_0, z)$, $w(\cdot), \Lambda_\pm(\cdot)$, $q_\theta$, $\Pi$
  \For{each minibatch}
    \State \textbf{E-step:} Sample latent $\mathbf x_0^\approx$ from 
    \State \hspace{4mm} $\mathbf{x}_1$ conditional on $a_0$ via Alg.~\ref{alg:sample-guided}
  \State \textbf{M-step:} $t\sim\mathrm{Unif}[0,1]$
  \State \hspace{4mm} Sample $\mathbf{x}_t$ via the forward bridge on
  \State \hspace{8mm} $(\mathbf{x}_{0}^\approx, \mathbf{x}_{1})$ using steps 4--10 of Alg.~\ref{alg:train}
  \State Update $\theta$ using the gradient of $-\mathcal{L}_{\text{agg}}(\theta)$
\EndFor
\end{algorithmic}
\captionsetup{type=algorithm}
\captionof{algorithm}{Training with Aggregate Supervision}
\label{alg:train-aggregate}
\end{minipage}
\end{figure} 




%% file: figs/process/process.tex
\begin{figure}[t]
  \centering
  \includegraphics[width=\textwidth]{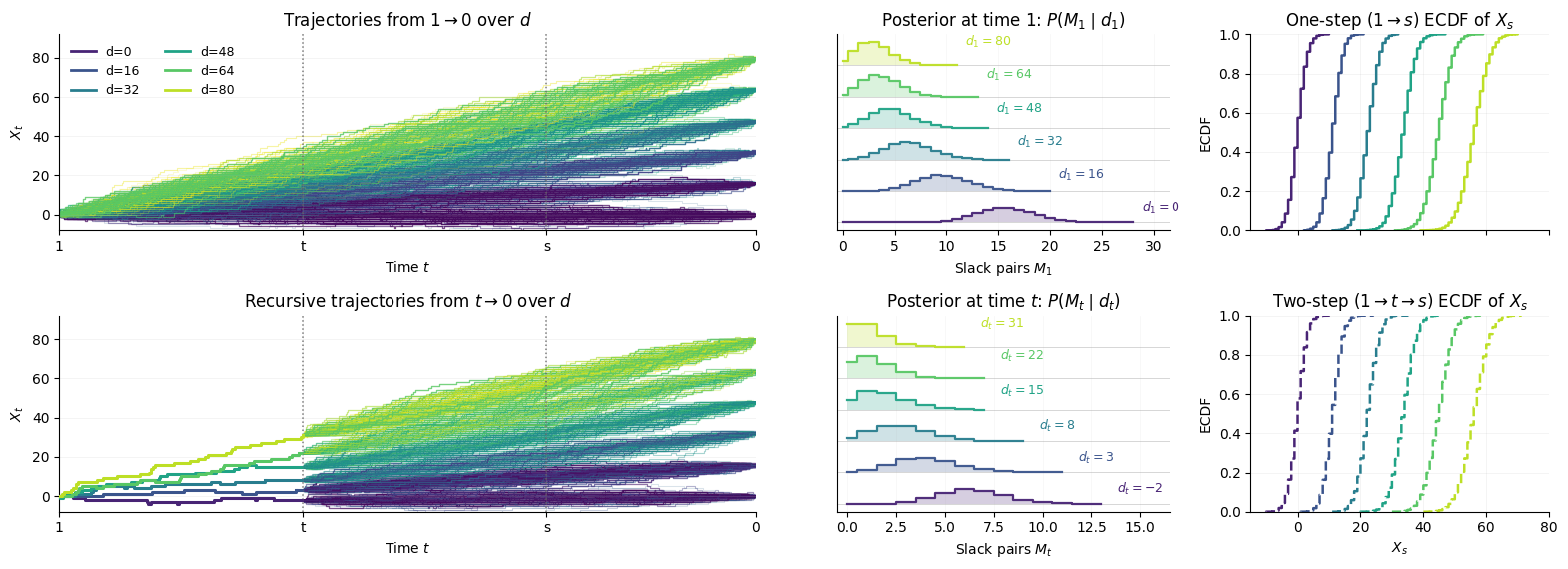}
  \caption{
  \emph{Left:} Sample paths for several endpoint gaps $d_1$ (top).  Fixing the prefix $[0,t]$ resample $(t,1]$ by the recursive kernel (bottom).
  \emph{Middle:} Bessel slack posteriors at initial and intermediate times. The slack $M_t$ concentrates near $0$ as $|d|$ grows.
  \emph{Right:} ECDFs of $X_s$ from a one–step kernel $(1\!\to\!s)$ and a two–step kernel $(1\!\to\!t\!\to\!s)$ are indistinguishable, confirming composition.}
  \label{fig:bd-bridge}
\end{figure}

%% file: figs/trajectory/trajectory.tex
\begin{figure}
    \centering
    \begin{minipage}{0.58\linewidth}
        \includegraphics[width=\linewidth]{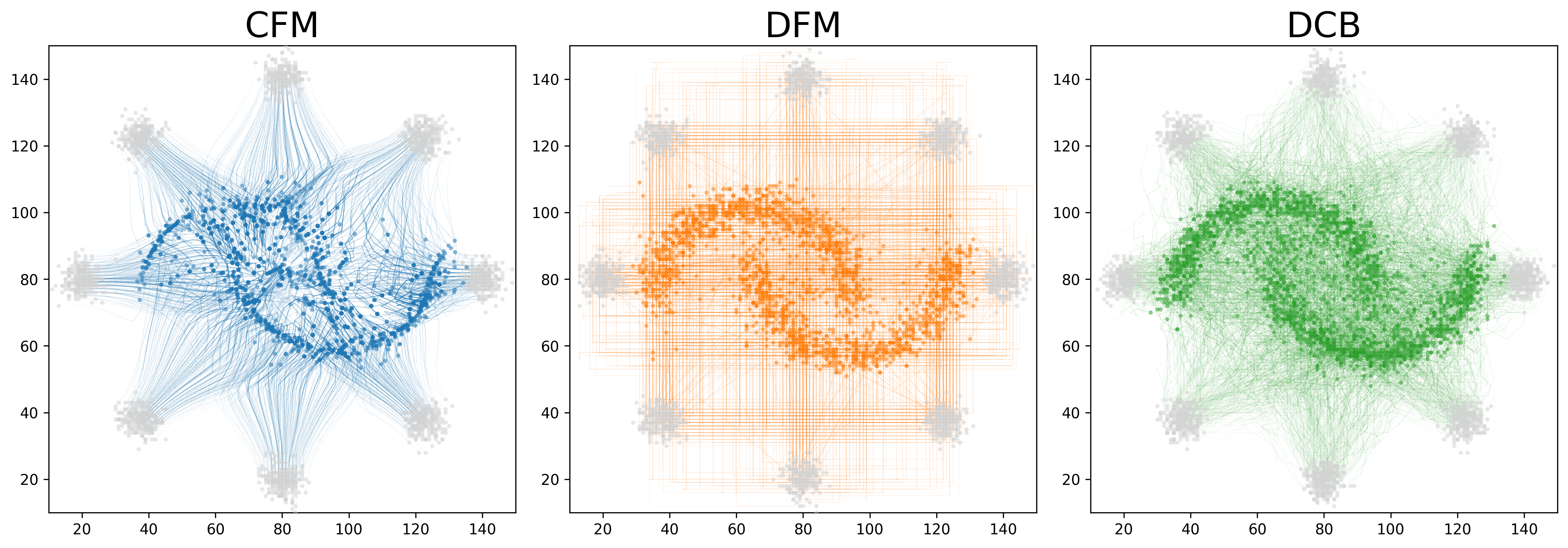}
    \end{minipage}%
    \hfill
    \begin{minipage}{0.4\linewidth}
        \vspace{5mm}
        \caption{A scaled and rounded variant of the classic 8 gaussian to two moons task. Here we compare the trajectories of continuous flow matching, discrete flow matching, and count bridges. CB achieves the lowest $W_2$, MMD, and EMD, see Table \ref{tab:discrete_moons}.}
        \label{fig:trajectory}
    \end{minipage}
\end{figure}

%% file: tex/related_work.tex
\section{Related works}

\paragraph{Stochastic interpolants.} Our formulation allows us to transport any integer-valued distribution $p_1$ to another integer-valued distribution $p_0$. In the case of Euclidean state space early works such as \citep{de2021diffusion,vargas2021solving,chen2021likelihood} have shown how to perform such an interpolation leveraging (Entropic) Optimal transport and the concept of Schr\"odinger Bridges. In more recent works, ignoring the Optimal Transport constraints, several works have proposed to bridge distributions in a more relaxed formulation leveraging the concept of Markov projection, see \cite{peluchetti2023non,albergo2023stochastic,delbracio2023inversion,liu2022flow,liu20232} for instance. Those frameworks can be shown to be strictly equivalent to diffusion models in the case where one of the end distribution is a unit Gaussian, see \cite{gao2025diffusion}. However, those works are limited to the Euclidean setting, and extension to the integer-valued setting is required.

\paragraph{Discrete diffusion models.} Recently, with the advent of language diffusion models such as \cite{ye2025dream7bdiffusionlarge,song2025seeddiffusionlargescalediffusion,sahoo2024simple,shi2024simplified,ou2024your,arriola2025block,nie2024scaling,zheng2023reparameterized}, discrete diffusion models have gained considerable traction. Most works rely on discrete equivalents of the original formulation of diffusion models, explicitly or implicitly replacing the continuous Gaussian noising process by a Continuous-Time Markov Chain (CTMC) \citep{austin2021structured,campbell2022continuous,lou2023discrete,campbell2024generative,kitouni2024diskdiffusionmodelstructured,sun2023scorebasedcontinuoustimediscretediffusion}. Other approaches include relying on some Euclidean relaxation \citep{chen2022analog} or modelling the space of probability \citep{avdeyev2023dirichlet,stark2024dirichlet}. Similarly, flow matching techniques have been extended to cover this paradigm \citep{gat2024discrete}. 
Most of these works focus on \emph{categorical} data and therefore consider uninformed forward process such as uniform or masking process. In contrast, in this work, we focus on ordinal data. To the best of our knowledge, the only existing work that also deals with such a process is Blackout Diffusion \citep{santos2023blackout}, which considers a pure-death process where an image is taken to the all-zero limit, as opposed to an endpoint conditioned bridge. Our approach generalizes this setup in two ways: first, we allow births and deaths at every time, recovering their pure birth construction in the limit as $\kappa \rightarrow 0$; second, we generalize the process to a bridge which can transport $X_1$ to $X_0$. 
\nic{i made a pass here; I guess we could derive a likelihood based // score based loss but man its nasty and weird. it would be nice to formally show that it sucks somehow.}

Finally, we highlight that diffusion models have been extended to the very general setting where only an \emph{infinitesimal generator} is available \cite{benton2024denoising,holderrieth2024generator}. While our work can be seen as an instanciation of this general framework, these general frameworks do not give any information regarding the design of the forward process for integer-valued data, the specific parameretization in terms of slack variables and the necessity of the distributional diffusion loss. 


\valentin{Maybe also a paragraph for non diffusion ordinal generative models}


\paragraph{Distributional Diffusion Models.} In \cite{de2025distributional, shen2025reverse}, the authors learn the conditional distribution $p_{0|t}(x_0|x_t)$ through the use of scoring rules, going beyond the classical training framework of diffusion, which approximates the conditional mean $\mathbb{E}[X_0|X_t=x_t]$.  The importance of approximating the covariance was already noted by \citet{nichol2021improved} and further analyzed in \citep{ho2020denoising,nichol2021improved,bao2022estimating,bao2022analytic,ou2024diffusion}. In a similar flavor \citep{xiao2021tackling} uses a GAN to approximate $p_{0|t}(x_0|x_t)$. \valentin{Why do we need scoring rules in the context of ordinal data, explain that here}


\paragraph{Sequence-to-expression models}

An ambitious goal in biology is to predict gene expression from DNA sequence information. There have been several attempts to train deep learning models for sequence-to-expression prediction tasks \citep{Barbadilla-Martinez2025-bb}, including Enformer \citep{Avsec2021-cp}, a state-of-the-art transformer-based DNA sequence model. While powerful, Enformer, like the vast majority of sequence-to-expression models, was trained on bulk gene expression data and is not able to predict single-cell expression profiles, missing the cellular heterogeneity and fine-grained regulatory patterns that shape tissue function.

\paragraph{Spatial transcriptomic deconvolution}

Spatial transcriptomics encompasses a family of recently developed techniques which measure gene expression and spatial location in tissues. The majority of these techniques are not capable of resolving individual cells, instead providing aggregate information over small neighborhoods consisting of on the order of tens of cells \citep{Moses2022-rx}. To address this limitation, a number of deconvolution methods have been developed to infer single-cell level information \citep{Li2023-wz}. The majority of these methods, including cell2location \citep{Kleshchevnikov2022-ox} and RCTD \citep{Cable2022-jl}, require a paired non-spatially resolved scRNA-seq atlas, and output cluster-level mixture proportions rather than single cell counts. The ideal deconvolution would recover full single-cell count profiles directly from spatial data without requiring external reference atlases. DestVI \citep{Lopez2022-tg}, which outputs count profiles but requires a reference, and STDeconvolve \citep{Miller2022-vs} which does not require a reference but outputs cluster-level predictions, both take steps toward this goal.

%% file: tex/applications.tex
\section{Applications}\label{sec:applications}

We evaluate with three distributional metrics: the Energy score, the Wasserstein-2 distance, and the MMD (RBF). For deconvolution, we evaluate cell-type proportion predictions using RMSE, the Jensen-Shannon Divergence (JSD), and Spearman correlation following \cite{Li2023-wz}. Synthetic tasks have std. errors over 3 training seeds; main applications have std. errors 3 over inference seeds.

\subsection{Synthetic Distributions}

Here, we benchmark count bridges (CB) against continuous flow matching (CFM) \citep{lipman2022flow} and discrete flow matching (DFM) \citep{gat2024discrete} across a range of synthetic experiments.

\paragraph{Discrete 8-Gaussians to 2-Moons.} 
We adapt this classic task to the integers. We plot the learned trajectories in Fig \ref{fig:trajectory}. Qualitatively CB achieves the best performance. DFM is much more competitive in this experiment than CFM, but DFM trajectories are decoupled from the underlying geometry, whereas CB produces OT-like trajectories similar to CFM. These qualitative evaluations are confirmed quantitatively: CB achieves the best performance across $W_2$, Energy, and MMD (see App. \ref{app:discrete-moons}). 

\input{figs/scaling/scaling}

\paragraph{Scaling in Low-Rank Gaussian Mixtures.} 
To test scalability to higher dimensions, we construct integer-valued datasets with fixed intrinsic dimensionality while ambient dimension $d$ increases in powers of two from 4 to 512. Each dataset is a 5-component Gaussian mixture with latent rank $r=3$, projected to $\mathbb{Z}^d$. 
In Fig. \ref{fig:scaling} see that CB has the best scaling in dimensionality (see App. \ref{app:lowrank-gaussian} for more). 

\input{figs/deconv/deconv}
\paragraph{Deconvolution of Gaussian Mixtures.} 
We extend the low-rank mixture task to evaluate deconvolution capabilities. In this experiment, each observation is an aggregate constructed by summing a group of $G$ samples. For each group, the $G$ samples are drawn from a group-specific Gaussian mixture whose component weights are sampled from a Dirichlet distribution with concentration parameters $(\alpha_1, \dots, \alpha_5)$. 
The labels of the $G$ source components are provided as unit-level side information. We then vary the size of the group $G$ and the extent of variation between groups by changing the concentration parameter $\alpha$ (see Appendix~\ref{app:deconv-gmm-config} for details). In Fig. \ref{fig:deconv} we see performance degrades as groups become more uniform and larger. We explore the theoretical limits to deconvolution in Apps. \ref{app:realizable-recoverable} and \ref{app:largeG-nonident}, which confirm that deconvolution requires between-group heterogeneity to enable identification, which is inherently lost as groups become large. Despite these limits, we demonstrate practical deconvolution on moderately-sized groups in our spatial transcriptomics application (Section \ref{sec:merfish}).

\subsection{Modelling gene expression at single-cell and single-nucleotide resolution}\label{sec:scrna}

\begin{table}
\begin{minipage}{0.49\linewidth}
\begin{tabular}{lcc|}
\toprule
\textbf{Method} & \textbf{Bulk MSE} & \textbf{CT MSE} \\ 
\midrule
Fine-tuned Enformer & 2.590 & 3.142 \\
Count Bridge   & \stackerror{\textbf{0.601}}{0.000} & \stackerror{\textbf{1.410}}{0.002} \\
\bottomrule
\end{tabular}
\caption{Nucleotide-level MSE for bulk and bulked cell-type (CT) specific predictions.}\label{tab:bulk-pred}
\end{minipage}\hfill
\begin{minipage}{0.49\linewidth}
    \begin{tabular}{|lccc}
\toprule
\textbf{Comparison} & \textbf{MMD} & $\mathbf{W_2}$ & \textbf{Energy} \\
\midrule
Bulk mean & 0.515 & 0.208  & 56.800 \\
Count Bridge & \stackerror{\textbf{0.446}}{0.000} & \stackerror{\textbf{0.182}}{0.001} & \stackerror{\textbf{28.583}}{0.003} \\
\bottomrule
\end{tabular}
\caption{Gene expression count profile deconvolution error for bulk RNA sequencing data.}\label{tab:bulk-distnl}
\end{minipage}
\vspace{-5mm}
\end{table}

\input{tex/seq}

\subsection{Deconvolving spatial transcriptomic spots into single-cell counts}\label{sec:merfish}\vspace{-2mm}

\input{tex/st}

%% file: figs/scaling/scaling.tex
\begin{figure}
    \centering
    \includegraphics[width=\linewidth]{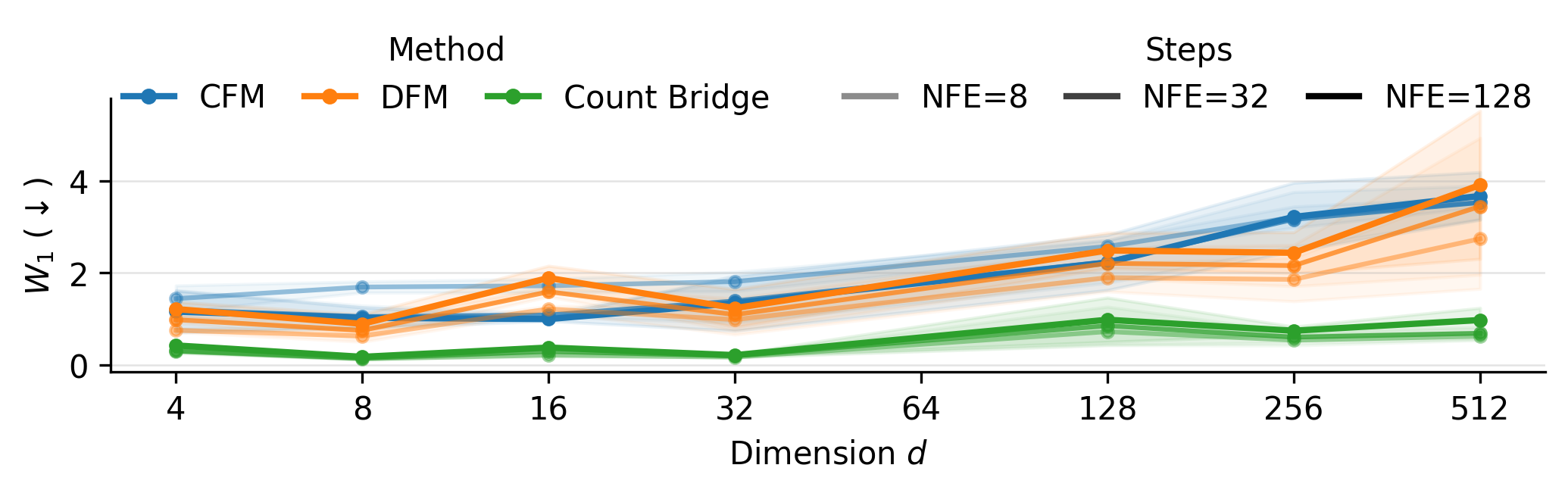}
    \caption{CFM, DFM, and CB on our low-rank mixture of Gaussians transport experiment across dimensions and NFE. See App. \ref{app:lowrank-gaussian} for full details.}
    \label{fig:scaling}
\end{figure}

%% file: figs/deconv/deconv.tex
\begin{wrapfigure}{r}{0.35\textwidth}
    \vspace{-1.\intextsep}
    \centering
    \includegraphics[width=\linewidth]{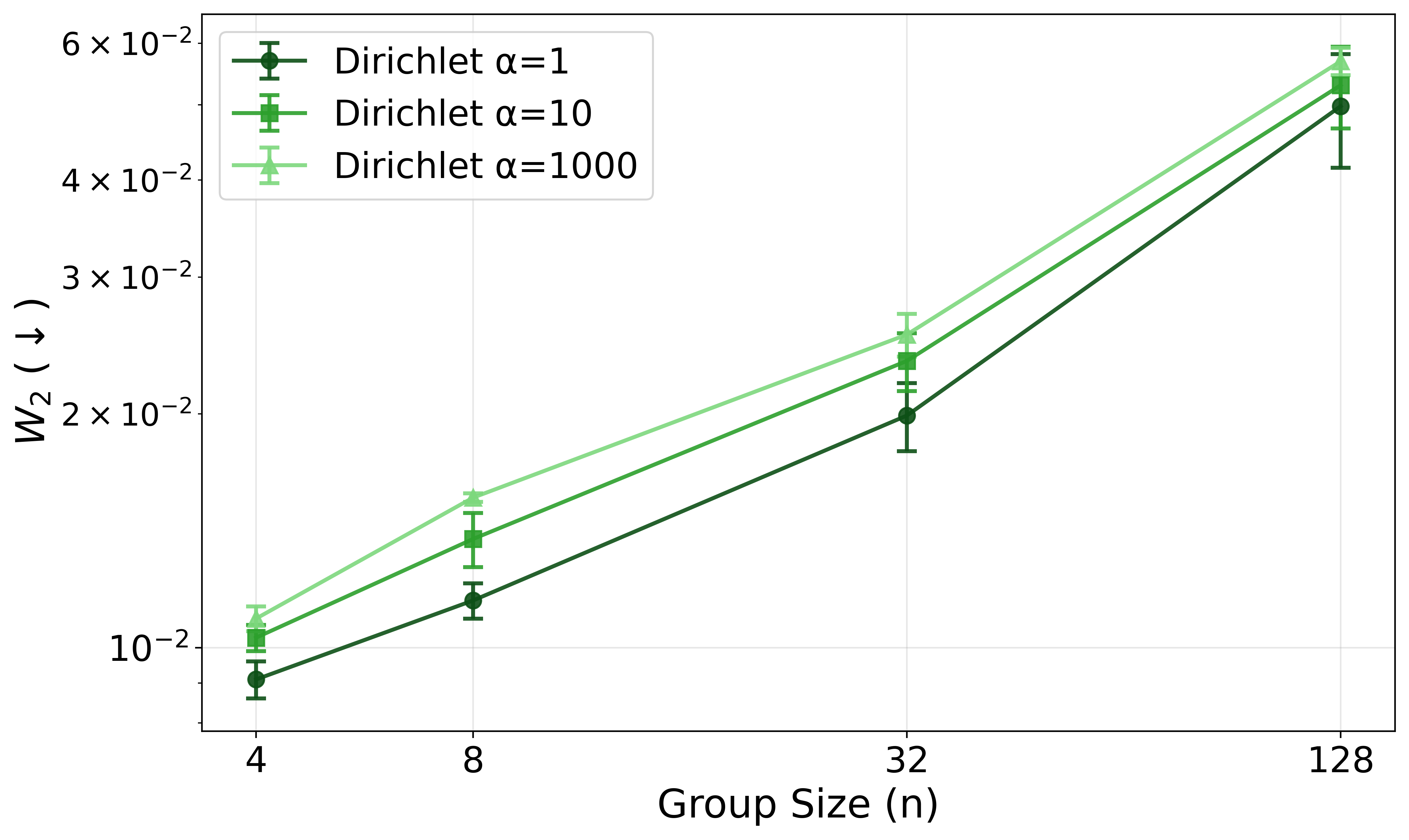}
    \vspace{-\intextsep}
    \caption{Deconvolution of the low-rank Gaussian mixture across different group sizes and levels of between-group heterogeneity.}\label{fig:deconv}
    \vspace{-1.25\intextsep}
\end{wrapfigure}

%% file: tex/seq.tex
A central goal in biology is to understand the relationship between DNA sequence and gene expression. Many models 
relate sequence and expression, the most prominent of which, such as Enformer \citep{Avsec2021-cp}, are Transformer-based models that predict expression from sequence. 
More recent work has explored fine-tuning Enformer on single-cell data \citep{Hingerl2024-it}. On the other hand, there is a mature literature on deconvolving bulk RNA-seq \citep{newman2019determining,wang2019bulk}. These methods operate at the gene (rather than nucleotide) level, leveraging bulk cell-type profiles or single-cell references to deconvolve bulk profiles into cell-type proportions (not count profiles). 

\begin{wraptable}{r}{0.55\textwidth}
 \vspace{-1.3\intextsep}
    \centering
\begin{tabular}{lccc}
\toprule
\textbf{Method} & \textbf{JSD} & \textbf{RMSE} & \textbf{Spearman} \\
\midrule
CIBERSORTx  & 0.194 & 0.109 & 0.079 \\
MuSiC  & 0.313 & 0.140 & 0.186 \\
Count Bridge  & \stackerror{\textbf{0.113}}{0.001} & \stackerror{\textbf{0.073}}{0.000} & \stackerror{\textbf{0.267}}{0.005}\\
\bottomrule
\end{tabular}
\caption{Cell-type proportion deconvolution error for nucleotide level bulk RNA sequencing data.}\label{tab:seq-deconv}
 \vspace{-1.3\intextsep}
\end{wraptable}


We use CBs to jointly model sequence and single-cell expression counts in scRNA-seq data, and to enable nucleotide-level deconvolution of bulk profiles. To validate CBs in this setting, we demonstrate two key results.
First, we show that CBs trained on single-cell data produce meaningful count profiles and outperform a fine-tuned Enformer model on sequence-to-expression prediction. Second, we show that conditioning CBs on bulk profiles enables deconvolution of bulk gene expression into inferred single-cell gene expression profiles. We validate these deconvolved profiles distributionally and show that they achieve state-of-the-art performance relative to cell-type proportion deconvolution models.

\noindent
\textbf{Modeling sequence and single-cell counts}
We train CBs on PBMC scRNA-seq counts at nucleotide resolution using $10^6$ cells across $10^3$ donors \citep{Yazar2022-ut}. Each training example corresponds to a nucleotide position in a single cell, and is represented by the noisy count $x_t$ and diffusion time $t$ from the CB forward process, a cell-type embedding, a local genomic context $z$ obtained by encoding the surrounding DNA sequence with Enformer, and i.i.d.\ noise $\zeta$ for the distributional loss. These features are concatenated and passed through residual multi–head attention blocks and a final softplus head that parameterizes the conditional count distribution $X_0 | X_t, t, z$. The model is trained directly on unit-level (single-cell) expression profiles rather than only on aggregated counts. During training we randomly mask cell-type labels so that the model supports both unconditional and cell-type-conditional sampling at test time.

\noindent
\textbf{Learned projection for deconvolution}
Since we have unit-level data we can learn a better projection operator than the simple rescaling function in Prop. ~\ref{prop:ipf-first-order}. We augment the CB with a small projection module $\Pi_\psi$, an attention block operating on each nucleotide (represented by $z$) across cells in the batch. Given an initial CB prediction $\hat x_0$, an observed aggregate $a_0$, and the noisy state $x_t$, the module outputs
\(
\tilde{\mathbf{x}}_0 = \Pi_\psi(\hat{\mathbf{x}}_0, a_0, \mathbf{x}_t, z),
\)
we train this using the distributional loss to learn to sample $X_0 \mid A(X_0){=}a_0, X_t, t$. To support both unconditional and aggregate-conditioned inference, we apply the projection module only on a random 10\% of training examples where $a_0$ is provided.


\noindent
\textbf{Bulk gene expression}  We first evaluate the ability of our model to predict expression from sequence, both unconditionally and conditional on cell type. As a baseline, we use an Enformer model fine-tuned directly on the PBMC dataset. We find that Count Bridge predictions outperform fine-tuned Enformer (Table \ref{tab:bulk-pred}, for results by cell type and further details see App. \ref{app:seq}).

\noindent
\textbf{Deconvolved profiles} We can use this unit-level model for deconvolution tasks: we can condition on an aggregate (bulk profile) 
to sample single-cell profiles from the model while matching that aggregate. We next evaluate the ability of CBs to deconvolve mixtures of cell types from held-out individuals. We held out 10\% of patients from our training set and synthetically bulked these patients. Since we have the ground truth data, we can then evaluate deconvolution quality. We first evaluate the distributional quality of these predictions against the bulk mean, further validating the CB count profiles (Table \ref{tab:bulk-distnl}). As a more robust set of baselines, we compare to CIBERSORTx \citep{newman2019determining} and MuSiC \citep{wang2019bulk}. To facilitate comparison, we aggregate our nucleotide-level predictions into gene counts and assign each of our deconvolved cells to the closest cell type. CBs achieve better performance on JSD, RMSE, and Spearman correlation while providing nucleotide-level counts (Table \ref{tab:seq-deconv}). In App. \ref{app:seq} we plot the UMAP for qualitative comparison.

%% file: tex/st.tex
Next, we show how CBs can be used to infer single cell gene expression profiles from spot-level aggregates in spatial transcriptomic data. In spatial transcriptomic data generated by Visium \citep{Stahl2016-jv}, it is common to have access to side information beyond the spot-level count aggregates. In particular, many datasets include images of the cells with a nuclear stain \citep{Palla2022-ok}. CBs provide a natural way to leverage this cell-level side information to deconvolve aggregate count data. 

\noindent
\textbf{Modeling spatial aggregates} 
We train CBs on a MERFISH mouse brain dataset \citep{Vizgen2021-fo}, which is resolved at the single-cell level, and artificially aggregate neighborhoods of cells to simulate spot-level Visium data. This synthetic dataset gives us access to spot-level aggregates and their corresponding single-cell ground truth, as well as single-cell nuclear images. Following the notation in Sec. \ref{sec:gem-aggregate}, the spot-level counts can be treated as aggregates $a_0$, and single-cell images can be treated as unit-level side information $z$. In this application, we never observe single-cell count profiles, only spot-level aggregates and the single-cell images. We leverage a UViT \citep{bao2023all} extended to incorporate count and noise patches (see App. \ref{app:merfish}). We use a simple source distribution $X_1 \sim \mathrm{Poi}(10)$.

\begin{wraptable}{r}{0.50\textwidth}
    \vspace{-1.\intextsep}
    \begin{tabular}{lccc}
    \toprule
    \textbf{Method} & \textbf{JSD} & \textbf{RMSE} & \textbf{Spearman} \\
    \midrule
    STDeconvolve & 0.288 & 0.177 & 0.255 \\
    Count Bridge & \stackerror{\textbf{0.231}}{0.002} & \stackerror{\textbf{0.110}}{0.001} &
\stackerror{\textbf{0.332}}{0.001} \\
    \bottomrule
    \end{tabular}
    \vspace{-0.5\intextsep}
    \caption{Cell-type proportion deconvolution error for spatial transcriptomics. }\label{tab:jsd_rmse}
     \vspace{-1.\intextsep}
\end{wraptable}

\noindent
\textbf{Cell type proportions} We benchmark CBs against STDeconvolve \citep{Miller2022-vs}, a widely used spatial transcriptomic deconvolution method which is state-of-the-art among reference-free approaches \cite{Li2023-wz} (see Appendix~\ref{app:merfish} for comparisons to reference-based methods). STDeconvolve outputs cell type (cluster identity) proportions for each spot rather than single cell counts. As such, we evalute the quality of deconvolution by comparing the predicted cell type proportions to the true cell type proportions per spot. 
For CBs, which provide single-cell count profile predictions rather than cell type proportions, we assign each predicted count profile its nearest neighbor cell type in order to compare against STDeconvolve. CBs outperforms STDeconvolve on both the JSD and the RMSE (Table \ref{tab:jsd_rmse}).

\begin{wraptable}{r}{0.45\textwidth}
 \vspace{-0.75\intextsep}
\begin{tabular}{lccc}
\toprule
\textbf{Comparison} & \textbf{MMD} & $\mathbf{W_2}$ & \textbf{Energy} \\
\midrule
Spot mean & 0.409 & 0.030 & 41.717\\
Count Bridge & \stackerror{\textbf{0.203}}{0.000} & \stackerror{\textbf{0.017}}{0.000} & \stackerror{\textbf{8.903}}{0.014}  \\
\bottomrule
\end{tabular}
\caption{Gene expression count profile deconvolution error for spatial transcriptomics}\label{tab:mse_comparisons}
 \vspace{-1\intextsep}
\end{wraptable}

\noindent
\textbf{Count profiles} We next evaluate the quality of the count profiles inferred by CBs. Here, because STDeconvolve does not provide these predictions, we instead consider a simple baseline: predicting the spot-level mean ($a_0/G$) for each cell. This baseline, while seemingly naive, is actually biologically well-motivated. In spatial transcriptomics, cells within a spot 
represent local tissue organization where neighboring cells coordinate their functions \citep{Armingol2021-yq}. As such, we expect cells in spatial neighborhoods to have correlated gene expression profiles, making the spot mean a reasonable baseline. Nonetheless, CBs outperform the spot-level mean baseline (see Table \ref{tab:mse_comparisons}), showing CBs can learn meaningful unit-level distributions from real-world aggregate data. In App. \ref{app:merfish} we provide a more detailed biological evaluation of the cell types and pathways in our generated data, alongside the UMAP to facilitate qualitative comparison.

%% file: tex/conclusion.tex
\section{Conclusion}
Count Bridges offer a tractable, discrete‐native alternative to continuous diffusion models, unifying direct count generation with deconvolution from aggregates. 
We demonstrate the power of Count Bridges for nucleotide-level deconvolution of bulk RNA-seq and spatial transcriptomic deconvolution.

\textbf{Limitations} (i) When counts are well‐approximated as continuous, Euclidean models may match or exceed performance. (ii) Identifiability in pure deconvolution degrades as group sizes grow or between-group heterogeneity shrinks, so our EM procedure is most reliable at moderate aggregation. (iii) The projection step we use is a first-order surrogate and lacks serious theoretical support.

Despite these caveats, Count Bridges lay the groundwork for rigorous discrete generative modeling and invite future work on a deeper understanding of the projection-guided sampler, sharper identifiability bounds, and generally stronger guarantees for projection-guided EM.

%% file: tex/proofs.tex
\section{Count Bridges}\label{app:bridge}

\input{tex/proofs/bridge}

\section{Lifting Count Bridges to Aggregates}
\input{tex/proofs/deconv}

\section{Projection and Rounding Algorithms}\label{app:algorithms}
\input{tex/proofs/algorithms}

%% file: tex/proofs/bridge.tex
\subsection{Count Bridges}

To review the setup, we have $X_0 \sim p_0$, $X_1 \sim p_1$, where $X_0,X_1$ lie on a discrete lattice ($\mathbb{Z}^+$). We derive an unconditional forward process kernel $K_{t|0} = P(X_t | X_0 )$ and a family of bridge kernels $K_{s|0,t} = P(X_s|X_t,X_0), 0 \leq s \leq t \leq 1$ such that these kernels satisfy the consistency property described in \ref{eq:bridge-consistency}.

In this section, we define the model and the associated kernels formally, and show how the Binomial and Hypergeometric distributions emerge in our framework. We prove they satisfy the consistency property thus establishing their correctness as diffusion kernels for count data.

\subsubsection{Sampling Process}
\label{subsec:sampling-process}

\textbf{Forward Kernel:}
Let $w:[0,1]\to\R_{\ge0}$ be an increasing
``jump–intensity'' shape function with $w(0)=0$ and $w(1)=1$. Intuitively, $w(t)$ describes the proportion of ``jumps'' that have occurred by time $t$ for $t \in [0,1]$. Further, let $\lambda_+,\lambda_->0$ be intensities for the birth and death processes respectively. We then define the cumulative birth (+) / death (-) intensities
\[
\Lambda_+(t) = \lambda_+\,w(t),
\qquad
\Lambda_-(t) = \lambda_-\,w(t),
\]
We use these jump intensities to define the independent (non-homogeneous) Birth/Death Poisson processes, $(B_t)_{t\in[0,1]}$ and $(D_t)_{t\in[0,1]}$ respectively, such that:
\[
B_t \sim \Poi(\Lambda_+(t)), D_t \sim \Poi(\Lambda_-(t))
\]
These Random Variables define counts at $t$, $X_t = X_0 + B_t - D_t$, giving us the unconditional forward kernel $K_{t|0}$.

\textbf{Bridge Kernels:} To derive $K_{s|0,t}$, we first define three more random variables that arise through the Poisson process. \\
Let $d_t$ denote the net displacement at time $t$, i.e:
\[
d_t = X_t - X_0 \qquad
t \in [0,1]
\]
Let $N_t$ denote the total number of jumps that have occurred until time $t$, i.e:
\[
N_t = B_t + D_t \qquad
t \in [0,1]
\]
Finally, we define $M_t$, denoting the number of ``slack'' jumps that cancel each other out during the process. For example, a birth followed by a death results in no net displacement, but result in $1$ ``slack'' jump. By definition:
\[
M_t = min(B_t,D_t) \qquad
t \in [0,1]
\]
It follows from the definitions that any two of the variables define the third, as shown in \eqref{eq:change_of_variables_discrete}. 

While $d_t$ is observed, $N_t$ and $M_t$ are not. For the following lemmas, we first assume that $M_t = m$ is known and fixed before deriving a tractable posterior distribution for $M_t$ in the next section. We now condition on the points $X_0 = x_0$, $X_t = x_t$ and $M_t = m$ to obtain:
\[
d_t = x_t - x_0, \qquad
N_t = |d_t| + 2m \qquad
B_t = \frac{1}{2}(N_t + d_t), \qquad
D_t = N_t - B_t.
\]

We first characterize two properties of $(N_t)_{t\in(0,1)}$:
\begin{proposition}[Total jumps process and time-rescaling invariance]
\label{prop:total-jumps}
$(N_t)_{t\in[0,1]}$ is a
(non-homogeneous) Poisson process with cumulative intensity
$\Lambda(t) = (\lambda_+ + \lambda_-)\,w(t)$. Moreover,
conditional on $N_1=n$, the $n$ unordered jump times are i.i.d.\ with cdf
\[
\mathbb P(T \le t \mid N_1\ge1)
~=~ w(t), \qquad t\in[0,1],
\]
hence $w$ is precisely the time–rescaling that makes the jump times uniform
on $[0,1]$.
\end{proposition}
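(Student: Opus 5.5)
The plan is to use Poisson superposition followed by the standard order-statistics property of Poisson processes, after a deterministic time change. Throughout, $(B_t)$ and $(D_t)$ are taken to be independent non-homogeneous Poisson processes with independent increments and cumulative intensities $\Lambda_\pm(t)=\lambda_\pm w(t)$. This is the intended meaning of the construction; the marginal statements $B_t\sim\Poi(\Lambda_+(t))$ alone would not suffice.

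\textbf{Step 1: superposition.} For $0\le s<t\le 1$, the increment $N_t-N_s=(B_t-B_s)+(D_t-D_s)$ is a sum of two independent Poisson variables. Its means are $\lambda_+(w(t)-w(s))$ and $\lambda_-(w(t)-w(s))$, so the sum is $\Poi\big((\lambda_++\lambda_-)(w(t)-w(s))\big)$. Increments of $N$ over disjoint intervals are functions of disjoint increments of $B$ and $D$, so they are independent. Since $w$ is continuous and increasing, the jumps of $B$ and $D$ almost surely never coincide, so $N$ is simple. Together with $N_0=0$, this shows that $N$ is a non-homogeneous Poisson process with cumulative intensity $\Lambda(t)=(\lambda_++\lambda_-)w(t)$.

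\textbf{Step 2: order statistics.} Condition on $N_1=n$ and fix a partition $0=t_0<t_1<\dots<t_k=1$. Write $\Delta_j=w(t_j)-w(t_{j-1})$ and $\lambda=\lambda_++\lambda_-$. By independence of increments, for $n_1+\dots+n_k=n$,
\[
\mathbb P(N_{t_j}-N_{t_{j-1}}=n_j\ \forall j\mid N_1=n)
=\frac{\prod_j e^{-\lambda\Delta_j}(\lambda\Delta_j)^{n_j}/n_j!}{e^{-\lambda}\lambda^n/n!}
=\binom{n}{n_1,\dots,n_k}\prod_j\Delta_j^{n_j}.
\]
This uses $\sum_j\Delta_j=w(1)-w(0)=1$. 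The right-hand side is exactly the law of the cell counts of $n$ i.i.d. points with cdf $w$. Counts on all finite partitions determine the law of the unordered point configuration, so the unordered jump times are i.i.d. with cdf $w$. In particular $\mathbb P(T\le t\mid N_1\ge1)=w(t)$. Equivalently, $w(T)$ is uniform on $[0,1]$, which is the time-rescaling statement.

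\textbf{Main obstacle.} The technical point is passing from finite-dimensional cell counts to the law of the point set. The cleanest route is to note that, for continuous increasing $w$, $N_{w^{-1}(u)}$ is a homogeneous Poisson process of rate $\lambda$ on $[0,1]$. The classical result then says its jump times given $N_1=n$ are i.i.d. uniform, and mapping back through $w^{-1}$ gives the cdf $w$. If $w$ is only nondecreasing, I would use the generalized inverse and note that flat pieces of $w$ carry no jumps almost surely.

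A useful by-product of the display, taken with $k=2$, is $N_s\mid N_t\sim\Bin(N_t,w(s)/w(t))$, which is the first half of Prop.~\ref{prop:sampling_interpolation_count}.
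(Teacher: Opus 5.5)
Your proof is correct. Step 1 is exactly the paper's argument: Poisson superposition of the independent birth and death processes.

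The difference is in the second claim. The paper disposes of it in one line as ``a direct result of the Probability Integral Transform.'' On its own, that only justifies the closing ``hence'': $w(T)$ is uniform once $T$ is known to have continuous cdf $w$. It silently assumes the conditional order-statistics property of Poisson processes, namely that given $N_1=n$ the points are i.i.d.\ with law proportional to the mean measure. Your multinomial cell-count computation proves that property directly for the non-homogeneous process, using only independent increments and $w(1)-w(0)=1$. Your time-change remark is what the paper's appeal to the probability integral transform amounts to once that classical fact is supplied: $N_{w^{-1}(u)}$ is homogeneous, and you push the uniform order statistics back through $w^{-1}$. So your route gives a self-contained argument and makes explicit the hypotheses the paper leaves implicit, namely independent increments and continuity of $w$, at essentially no extra cost.

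One small fix to your closing aside. The $k=2$ case of your display gives $N_s\mid N_1\sim\Bin(N_1,w(s))$, not the law of $N_s$ given $N_t$. For the latter, take $k=3$ with $0<s<t<1$ and use the conditional structure of the multinomial. Alternatively, rerun the computation on $[0,t]$ with cell masses $\Delta_j/w(t)$. Either way you get $\Bin(N_t,w(s)/w(t))$.
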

\begin{proof}
Note that $N_t = B_t + D_t$ is the sum of two independent Poisson processes, and by Poisson superposition principle, is a Poisson process itself with cumulative intensity equal to the sum of the cumulative intensities of the individual processes. The second statement is a direct result of the Probability Integral Transform.
\end{proof}

To compute $K_{s|0,t} = P(X_s = X_0 + B_s - D_s|X_0,X_t)$, we essentially have to compute ($N_s,B_s$) or ($B_s,D_s$), both being equivalent to computing the state $X_s$, given the state $X_t$ and $(N_t,B_t)$. For now, we consider the $(0,1)$ bridge, i.e $K_{t|0,1}.$ 

This problem can be visualized in the one-dimensional case as follows: Given $(N_1,B_1)$, view the total $N_1$ jumps as a set of $N_1$ points
on $[0,1]$, each labeled $+1$ denoting birth or $-1$ denoting death, with the density governed by the function $w(\cdot)$. Note that these labels are exchangeable, for both processes have the same intensity function. Thus the goal is to find the number of points to the left of $t$ ($N_t$), and the number of those that are $+1$s ($B_t$).

From this perspective, counting how many of the $N_1$ points fall to the left of $t$ is analogous to sampling coloured balls from an urn with replacement (each jump independently lands before $t$ with probability $w(t)$), giving a Binomial draw. Determining how many of those are births is equivalent to drawing a subset of size $N_t$
from the original population with $B_1$ successes without replacement, i.e a Hypergeometric draw. We formalize this:

\begin{lemma}[Binomial--Hypergeometric structure of Count Bridges]
\label{lem:bin-hyp-structure}
Fix $t\in(0,1]$ and condition on $N_1=n_1$ and $B_1=b_1$. Then:
\begin{enumerate}
\item \emph{Binomial structure.}
Each of the $n_1$ jumps independently falls before time $t$ 
with probability $w(t)$ (Proposition~\ref{prop:total-jumps}), hence
\[
N_t \mid N_1 = n_1 \;\sim\; \mathrm{Bin}\!\bigl(n_1,\,w(t)\bigr).
\]

\item \emph{Hypergeometric structure.}
Since the jump labels are exchangeable, the $n_t$ jumps in $(0,t]$
form a uniformly random subset of the $n_1$ total jumps, hence
\[
B_t \mid (N_1,B_1,N_t) = (n_1,b_1,n_t)
\;\sim\;
\mathrm{Hyp}\!\bigl(n_1,\,b_1,\,n_t\bigr).
\]
\end{enumerate}
\end{lemma}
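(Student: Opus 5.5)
The proof will rest on one representation of the pair of Poisson processes. Conditional on $N_1=n_1$, the jumps of $(N_t)$ on $[0,1]$ are $n_1$ points $T_1,\dots,T_{n_1}$. Each point carries a label $\epsilon_i\in\{+1,-1\}$, where $+1$ marks a jump of $B$ and $-1$ a jump of $D$. I will use the marking (coloring) theorem for Poisson processes. Since $B$ and $D$ are independent with cumulative intensities $\lambda_\pm w(t)$, the superposition $N$ has intensity $(\lambda_++\lambda_-)w(t)$ by Proposition~\ref{prop:total-jumps}. Moreover, each point of $N$ is independently labeled $+1$ with probability $p=\lambda_+/(\lambda_++\lambda_-)$, independently of its location. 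This holds because the ratio of intensity measures $\lambda_+\,dw/((\lambda_++\lambda_-)\,dw)$ is constant in time. This time-independence of the mark probability is the key structural fact, and it is what the statement's phrase ``labels are exchangeable'' is really using.

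For the Binomial part, I combine Proposition~\ref{prop:total-jumps} with the standard order-statistics property. Conditional on $N_1=n_1$, the unordered jump times are i.i.d.\ with cdf $w$. Hence $N_t=\sum_i \mathbf 1\{T_i\le t\}$ is a sum of $n_1$ i.i.d.\ Bernoulli$(w(t))$ variables, giving $\mathrm{Bin}(n_1,w(t))$. Conditioning on $B_1$ changes nothing: by the marking theorem the labels are independent of the times given $N_1$, so $N_t$ is independent of $B_1=\#\{i:\epsilon_i=+1\}$ given $N_1$. I will state this independence explicitly, since the lemma conditions on $(N_1,B_1)$ jointly.

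For the Hypergeometric part, condition on $N_1=n_1$ and on the times $(T_i)$. The labels are i.i.d.\ Bernoulli$(p)$ and independent of the times. Conditioning further on $B_1=b_1$ makes the label vector uniformly distributed over the $\binom{n_1}{b_1}$ arrangements with exactly $b_1$ plus-signs, still independent of the times. Exchangeability therefore follows, and it does not depend on $p$. The set of indices with $T_i\le t$ has size $n_t$ and is a function of the times only. Then $B_t$ counts the plus-labels within a fixed subset of size $n_t$ under a uniform arrangement, which is exactly $\mathrm{Hyp}(n_1,b_1,n_t)$. One can verify this directly as $\binom{b_1}{k}\binom{n_1-b_1}{n_t-k}/\binom{n_1}{n_t}$ by counting arrangements. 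Integrating out the times, given $N_t=n_t$, preserves the law.

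The main obstacle is being careful about conditioning on the whole vector $(N_1,B_1,N_t)$ rather than on marginals. I will handle it by working on the finer $\sigma$-algebra generated by $N_1$, the times, and $B_1$, and then averaging. One subtlety is the claim ``both processes have the same intensity function.'' Only proportionality of the two intensities, through the shared $w$, is needed. If $\lambda_+\ne\lambda_-$, the mark probability $p$ is not $1/2$, but it is still constant in time, and that suffices.
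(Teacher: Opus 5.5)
Your proof is correct and follows the same route as the paper. The Binomial part comes from the i.i.d.\ jump times with cdf $w$ in Proposition~\ref{prop:total-jumps}, and the Hypergeometric part from exchangeability of the birth/death labels, which the paper simply asserts. You also supply the justification the paper omits: the marking theorem makes the labels independent of the times with time-constant probability $\lambda_+/(\lambda_++\lambda_-)$, so $N_t$ is independent of $B_1$ given $N_1$, and only proportionality of the intensities through the shared $w$ is needed, not the paper's ``same intensity function.''
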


We can generalize this to arbitrary time points. For $0 < s < t < 1$ define $\gamma = w(s)/w(t)$. Conditional on $(N_t,B_t)=(n_t,b_t)$,
the $n_t$ jumps in $(0,t]$ form a non-homogeneous Poisson process with cumulative intensity
$\Lambda(\cdot)/\Lambda(t)$, so the $N_s$ jumps in $(0,s]$ satisfy
\[
N_s \mid N_t = n_t \;\sim\; \mathrm{Bin}\!\bigl(n_t,\,\gamma),
\]
and, by the same exchangeability argument applied within $(0,t]$,
\[
B_s \mid (N_t,B_t,N_s) = (n_t,b_t,n_s)
\;\sim\;
\mathrm{Hyp}\!\bigl(n_t,\,b_t,\,n_s\bigr),
\qquad
D_s = N_s - B_s.
\]

\subsubsection{Composition conditional on the slack}
\label{subsec:composition-slack}

In this section we prove two elementary lemmas showing that the Binomial and Hypergeometric distributions have a compositional property, and then use these results to show that the bridge satisfies the consistency equation under a fixed and known slack. In the next sections, we derive the posterior distribution of the slack and show that the result maintains for the general case when we ``mix'' the bridge over the slack posterior.

Binomial composition arises from the observation that each of the $N \in \mathbb{N}$ objects in a binomial draw are treated completely independent of each other. It states that the following two-step process: 
\begin{enumerate}
    \item Given $N$ balls in an urn, classify each ball as ``success'' or ``failure'' independently with probability $\theta$. Let $K$ balls be marked ``success''.
    \item Of those $K$ balls, further classify each ball as ``success'' or ``failure'' independently again with probability $\eta$. Let $L$ balls be marked ``success''.
\end{enumerate}
is equivalent to the one-step process of classifying each of the original $N$ balls as ``success'' or ``failure'' with probability $\theta \eta$. We prove this formally in the following lemma.

\begin{lemma}[Binomial composition]
\label{lem:bin-comp}
Let $N \in \mathbb{N}$ and $ \theta, \eta \in [0,1]$. Let $K\sim\Bin(N,\theta)$ and conditional on $K$, let $L\sim\Bin(K,\eta)$.
Then $L\sim\Bin(N,\theta\eta)$.
\end{lemma}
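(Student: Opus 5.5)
I will prove Lemma~\ref{lem:bin-comp} by a direct computation of the marginal pmf of $L$, and then cross-check it with a coupling argument that matches the urn intuition given just before the statement.

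\textbf{Direct computation.} Fix $\ell\in\{0,\dots,N\}$. By the law of total probability,
\[
\P(L=\ell)=\sum_{k=\ell}^{N}\binom{N}{k}\theta^k(1-\theta)^{N-k}\binom{k}{\ell}\eta^\ell(1-\eta)^{k-\ell}.
\]
I will apply the identity $\binom{N}{k}\binom{k}{\ell}=\binom{N}{\ell}\binom{N-\ell}{k-\ell}$, pull out $\binom{N}{\ell}(\theta\eta)^\ell$, and reindex with $j=k-\ell$. This gives
\[
\P(L=\ell)=\binom{N}{\ell}(\theta\eta)^\ell\sum_{j=0}^{N-\ell}\binom{N-\ell}{j}\bigl(\theta(1-\eta)\bigr)^j(1-\theta)^{N-\ell-j}.
\]
By the binomial theorem, the remaining sum equals $(\theta(1-\eta)+1-\theta)^{N-\ell}=(1-\theta\eta)^{N-\ell}$. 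This is exactly the $\Bin(N,\theta\eta)$ pmf.

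\textbf{Coupling.} As an alternative, let $U_i,V_i$ for $i=1,\dots,N$ be i.i.d.\ Bernoulli variables with parameters $\theta$ and $\eta$ respectively. Set $K=\sum_i U_i$ and $L=\sum_i U_iV_i$. Conditional on $U$, the variable $L$ is a sum of $K$ independent Bernoulli$(\eta)$ variables, so $L\mid K\sim\Bin(K,\eta)$; this means the coupling reproduces the joint law in the statement. Meanwhile, the products $U_iV_i$ are i.i.d.\ Bernoulli$(\theta\eta)$, so $L\sim\Bin(N,\theta\eta)$.

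The only step requiring any care is the binomial coefficient identity together with the reindexing. I will also check the edge cases $\theta\in\{0,1\}$ and $\eta\in\{0,1\}$: these are covered by the convention $0^0=1$, and the coupling argument handles them without any special treatment.
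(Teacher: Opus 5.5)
Your direct computation is correct and is essentially the paper's proof: the same joint pmf, the same identity $\binom{N}{k}\binom{k}{\ell}=\binom{N}{\ell}\binom{N-\ell}{k-\ell}$, the reindexing $j=k-\ell$, and the binomial theorem. Your version is slightly cleaner because you sum to $\bigl(\theta(1-\eta)+(1-\theta)\bigr)^{N-\ell}$ directly, whereas the paper normalizes by $1-\theta\eta$, which is undefined when $\theta=\eta=1$; your Bernoulli coupling is a valid additional confirmation that formalizes the urn intuition the paper gives before the lemma.
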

\begin{proof}
We begin by writing the joint distribution directly from the model:
\[
\P(K=k,\,L=\ell)
=\P(K=k)\,\P(L=\ell\mid K=k)
=\binom{N}{k}\theta^{k}(1-\theta)^{N-k}\;
  \binom{k}{\ell}\eta^{\ell}(1-\eta)^{k-\ell}.
\]
Apply the standard combinatorial identity
\[
\binom{N}{k}\binom{k}{\ell}
=\binom{N}{\ell}\binom{N-\ell}{\,k-\ell\,},
\]
to obtain the factorization
\[
\P(K=k,\,L=\ell)
=\binom{N}{\ell}(\theta\eta)^\ell(1-\theta\eta)^{N-\ell}
  \binom{N-\ell}{k-\ell}
  \Bigl(\tfrac{\theta(1-\eta)}{1-\theta\eta}\Bigr)^{k-\ell}
  \Bigl(\tfrac{1-\theta}{1-\theta\eta}\Bigr)^{(N-k)}.
\]
To get the marginal law of $L$, sum the joint pmf over all $k\ge\ell$.
\[
\P(L=\ell)
=\sum_{k=\ell}^{N}\binom{N}{\ell}(\theta\eta)^\ell(1-\theta\eta)^{N-\ell}
  \binom{N-\ell}{k-\ell}
  \Bigl(\tfrac{\theta(1-\eta)}{1-\theta\eta}\Bigr)^{k-\ell}
  \Bigl(\tfrac{1-\theta}{1-\theta\eta}\Bigr)^{(N-k)}.
\]
Let $m=k-\ell$; then $m$ ranges from $0$ to $N-\ell$, and
\[
\P(L=\ell)
=\sum_{m=0}^{N-\ell}
   \binom{N}{\ell}(\theta\eta)^\ell(1-\theta\eta)^{N-\ell}
   \binom{N-\ell}{m}
   \Bigl(\tfrac{\theta(1-\eta)}{1-\theta\eta}\Bigr)^m
   \Bigl(\tfrac{1-\theta}{1-\theta\eta}\Bigr)^{(N-\ell)-m}.
\]
Let
\[
a=\frac{\theta(1-\eta)}{1-\theta\eta}, \qquad b=\frac{1-\theta}{1-\theta\eta}.
\]
and factor out all terms not depending on $m$:
\[
\P(L=\ell)
=\binom{N}{\ell}(\theta\eta)^\ell(1-\theta\eta)^{N-\ell}
  \sum_{m=0}^{N-\ell}
     \binom{N-\ell}{m}a^m b^{(N-\ell)-m},
\]
Since $a+b=1$, the inner sum is exactly the Binomial theorem,
\[
\sum_{m=0}^{N-\ell} \binom{N-\ell}{m}a^m b^{(N-\ell)-m} = (a+b)^{N-\ell} = 1.
\]
Thus
\[
\P(L=\ell)
=\binom{N}{\ell}(\theta\eta)^\ell(1-\theta\eta)^{N-\ell},
\]
which is the pmf of $\Bin(N,\theta\eta)$. 
\end{proof}
We now show hypergeometric composition:
\begin{lemma}[Hypergeometric composition]
\label{lem:hyp-comp}
Let $\mathcal{N}$ denote a population of $N \in \mathbb{N}$ balls in an urn, with $M\leq N$ marked ``success''. Suppose we perform a two-stage sampling without replacement:
\begin{enumerate}
    \item Draw $K \leq N$ balls from the population without replacement and keep them aside. Amongst those $K$ balls, let $B \leq K$ be the number of balls marked ``success''. I.e, $ B \sim \Hyp(N,M,K)$
    \item From those $K$ balls, draw another $L < K$ balls, and let $A \leq L$ be the number of successes observed. I.e, $A \sim \Hyp(K,B,L)$
\end{enumerate}
Then the marginal distribution of $A$ is:
\[
A | L \sim \Hyp(N,M,L)
\]
\end{lemma}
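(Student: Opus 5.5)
The cleanest route is a symmetry (random-subset) argument, with a direct pmf computation as an independent check, mirroring the proof of Lemma~\ref{lem:bin-comp}. Label the $N$ balls and realise stage one as drawing a uniformly random subset $S_1\subseteq\mathcal N$ with $|S_1|=K$. Stage two, conditional on $S_1$, draws a uniformly random subset $S_2\subseteq S_1$ with $|S_2|=L$. By construction $B=|S_1\cap\mathcal M|$ and $A=|S_2\cap\mathcal M|$, where $\mathcal M$ is the set of $M$ successes.

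The key step is to show that $S_2$ is marginally uniform over all $L$-subsets of $\mathcal N$. For a fixed $L$-subset $T$,
\[
\P(S_2=T)=\sum_{S_1\supseteq T}\binom{N}{K}^{-1}\binom{K}{L}^{-1}.
\]
The number of $K$-subsets containing $T$ is $\binom{N-L}{K-L}$. The identity $\binom{N}{K}\binom{K}{L}=\binom{N}{L}\binom{N-L}{K-L}$, which is the same one used in Lemma~\ref{lem:bin-comp}, then gives $\P(S_2=T)=\binom{N}{L}^{-1}$. Since $A$ depends only on $S_2$, counting the successes in a uniformly random $L$-subset of $\mathcal N$ yields $A\sim\Hyp(N,M,L)$. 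This argument needs neither the strict inequality $L<K$ nor any boundary cases.

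As a sanity check, I would also write the marginal directly:
\[
\P(A=a)=\sum_b \frac{\binom{M}{b}\binom{N-M}{K-b}}{\binom{N}{K}}\cdot\frac{\binom{b}{a}\binom{K-b}{L-a}}{\binom{K}{L}}.
\]
Apply $\binom{M}{b}\binom{b}{a}=\binom{M}{a}\binom{M-a}{b-a}$ and $\binom{N-M}{K-b}\binom{K-b}{L-a}=\binom{N-M}{L-a}\binom{N-M-L+a}{K-b-L+a}$. Pulling out the factors that do not depend on $b$ leaves the sum $\sum_b\binom{M-a}{b-a}\binom{N-M-L+a}{K-L-(b-a)}$. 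By Vandermonde this equals $\binom{N-L}{K-L}$, and the same binomial identity as above then reduces the expression to the hypergeometric pmf.

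The main obstacle is only this binomial bookkeeping, in particular matching the index ranges so that Vandermonde applies; the subset argument sidesteps it.
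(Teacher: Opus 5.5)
Your proposal is correct and follows essentially the same route as the paper: you show that the second-stage $L$-subset is marginally uniform by counting its $\binom{N-L}{K-L}$ possible first-stage supersets and using $\binom{N}{K}\binom{K}{L}=\binom{N}{L}\binom{N-L}{K-L}$, then read off $\Hyp(N,M,L)$ by counting successes in a uniform $L$-subset. Your Vandermonde computation is a valid independent check that the paper does not include.
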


\begin{proof}
We prove this by computing the probability of observing a given $L$-subset $A$ after the two-step procedure, and show that it equals the pmf of a Hypergeometric draw.

To do so, we first show that all $L$-subsets are equally likely. The problem then simply amounts to calculating the proportion of $L$-subsets with $A$ successes and $L-A$ failures.

Let $\mathcal{K}$ be the subset of $\mathcal{N}$ of size $K$ selected after the first draw, and $\mathcal{L} \subset \mathcal{K}$ be the set of size $L$ selected after the second draw.

Let $S_{\mathcal{L}}$ denote all possible sets of size $K$ that could have formed $\mathcal{L}$. Formally, $S_{\mathcal{L}} = \{U: \mathcal{L} \subset U \subset \mathcal{N}, |U| = K\}$. Note that $S_{\mathcal{L}}$ cannot be empty for any given $\mathcal{L}$. 
Moreover, under sampling without replacement, each set in $S_{\mathcal{L}}$ is equally likely. Therefore, the probability of selecting any $L$ sized subset $\mathcal{L}$ is:
\[
\P(\mathcal{L})
= \sum_{\substack{U \in S_{\mathcal{L}}}}
   \frac{1}{\binom{N}{K}} \cdot \frac{1}{\binom{K}{L}}
\]
Note that the number of such sets in $S_{\mathcal{L}}$ is equal to the number of ways to select the remaining $K-L$ items (since $L$ is already determined by selecting $\mathcal{L}$) from the remaining $N-L$ balls. Therefore:
\[
\P(\mathcal{L})
= \frac{\binom{N-L}{K-L}}{\binom{N}{K}\binom{K}{L}}
= \frac{1}{\binom{N}{L}}
\]
The right-hand side does not depend on $\mathcal{L}$, so
conditional on $L$ every $L$–subset of $\{1,\dots,N\}$ is equally likely.

There are $\binom{N}{L}$ such subsets, among which exactly
\[
\binom{M}{A}\binom{N-M}{\,L-A\,}
\]
subsets contain $A$ successes and $L-A$ failures.
Hence
\[
\P(A = a \mid L = \ell)
= \frac{\binom{M}{a}\binom{N-M}{\,\ell-a\,}}{\binom{N}{\ell}},
\]
which is the pmf of the hypergeometric distribution $\Hyp(N,M,L)$.
\end{proof}

We can now state composition of the birth–death bridge for fixed slack using the above two lemmas. We show transitioning through the fixed-slack bridge kernel to timepoint $s$ from the starting timepoint $1$ yields the same state $X_s$ in distribution regardless of the choice of the number of steps taken to reach $s$.

\begin{theorem}[Bridge composition conditional on slack]
\label{thm:consistency-slack}
Fix $0<s<t<1$ and endpoints $X_0=x_0$, $X_1=x_1$, and slack $M_1=m$
(equivalently, fix $(N_1,B_1)$ via \eqref{eq:change_of_variables_discrete}).
Let $w:[0,1]\to[0,1]$ be the time–rescaling function from the
Poisson birth–death reference.

Define the fixed–slack bridge kernels
\[
K^{(m)}_{a\mid 0,b}(x_a \mid x_0,x_b)
~=~
\P\bigl(X_a = x_a \,\big|\, X_0=x_0,X_b=x_b,M_1=m\bigr),
\qquad 0\le a\le b\le1.
\]
Then the two–stage transition $(1\to t\to s)$ yields the same law for $(N_s,B_s)$
as the single–stage transition $(1\to s)$, and the fixed–slack kernels satisfy
the bridge–consistency identity
\begin{equation}
\label{eq:bridge-consistency-m-again}
K^{(m)}_{s\mid 0,1}(x_s \mid x_0,x_1)
=
\sum_{x_t}
K^{(m)}_{s\mid 0,t}(x_s \mid x_0,x_t)\,
K^{(m)}_{t\mid 0,1}(x_t \mid x_0,x_1).
\end{equation}
\end{theorem}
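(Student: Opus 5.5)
The plan is to work at the level of the pair $(N_\cdot,B_\cdot)$ rather than $X_\cdot$. Conditional on $X_0=x_0$, the state $X_a = x_0 + 2B_a - N_a$ is a deterministic function of $(N_a,B_a)$. Once $(N_1,B_1)=(n_1,b_1)$ is fixed by the slack $m$, every fixed-slack kernel $K^{(m)}_{a\mid 0,b}$ is the pushforward of the Binomial--Hypergeometric transition of Lemma~\ref{lem:bin-hyp-structure} and its time-$(s,t)$ generalization. So it suffices to show that the two-stage law of $(N_s,B_s)$ coincides with the one-stage law. I will then sum over intermediate states and push forward through $(n,b)\mapsto x_0+2b-n$ to get the identity \eqref{eq:bridge-consistency-m-again}.

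\textbf{Step 1: the $N$-marginal.} In the two-stage procedure, $N_t\mid N_1=n_1\sim\Bin(n_1,w(t))$ and $N_s\mid N_t\sim\Bin(N_t,w(s)/w(t))$. Lemma~\ref{lem:bin-comp} with $\theta=w(t)$ and $\eta=w(s)/w(t)$ gives $N_s\sim\Bin(n_1,w(s))$, which is exactly the one-stage law from Lemma~\ref{lem:bin-hyp-structure}.

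\textbf{Step 2: $B_s$ given $N_s$.} In the two-stage procedure, $B_t\mid(n_1,b_1,N_t)\sim\Hyp(n_1,b_1,N_t)$ and then $B_s\mid(N_t,B_t,N_s)\sim\Hyp(N_t,B_t,N_s)$. Fix $N_s=\ell$ and $N_t=k$. Lemma~\ref{lem:hyp-comp} with $N=n_1$, $M=b_1$, $K=k$, $L=\ell$ gives $B_s\mid(N_t=k,N_s=\ell)\sim\Hyp(n_1,b_1,\ell)$. This law does not depend on $k$. Averaging over $N_t\mid N_s=\ell$ therefore leaves $B_s\mid N_s=\ell\sim\Hyp(n_1,b_1,\ell)$, which matches the one-stage conditional. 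Combining with Step 1 identifies the joint laws of $(N_s,B_s)$.

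\textbf{Step 3: assemble.} I will write the two-stage law as $\sum_{(n_t,b_t)} \P(N_s,B_s\mid n_t,b_t)\,\P(n_t,b_t\mid n_1,b_1)$ and group the $(n_t,b_t)$ according to $x_t=x_0+2b_t-n_t$.

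The main obstacle is here. The kernel $K^{(m)}_{s\mid 0,t}(\cdot\mid x_0,x_t)$ conditions only on $x_t$, not on $(N_t,B_t)$, yet the slack at time $t$ is not determined by $x_t$ and $m$ alone. I will interpret the fixed-slack kernel as conditioning on the full state $(N_t,B_t)$ generated along the path, which is what the notation $K^{(m)}$ with $M_1=m$ fixed suggests. Under that reading, the Markov structure of $(N,B)$ given $(N_1,B_1)$ makes the sum over $x_t$ a sum over $(n_t,b_t)$ consistent with $x_t$, and Steps 1--2 finish the proof.

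The edge cases $w(s)=0$ and $L=K$ in Lemma~\ref{lem:hyp-comp} are trivial and will be noted separately.
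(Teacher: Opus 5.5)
Your proposal is correct and follows the paper's proof. Steps 1--2 are exactly its use of Lemma~\ref{lem:bin-comp} with $\theta=w(t)$, $\eta=w(s)/w(t)$ followed by Lemma~\ref{lem:hyp-comp}, and your remark that the law of $B_s$ given $(N_t,N_s)=(k,\ell)$ does not depend on $k$ is a slightly more careful version of the same step; the paper then passes to the kernel identity with a bare ``Equivalently'', tacitly using the same $(N_t,B_t)$-level reading of the intermediate kernel that you make explicit in Step 3 (if every kernel is instead read as also conditioned on $X_1=x_1$, the identity is just the law of total probability, so the real content is the $(N_s,B_s)$ composition you prove).
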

\begin{proof}
We show that the two-step transition $(1\to t\to s)$ is equivalent to the one-step transition $(1\to s)$ for $(N_s,B_s)$, or equivalently, $X_s (X_s = x_0 + 2B_s - N_s)$. \\
Conditional on $(N_1,B_1)$ (equivalently on $(x_0,x_1,M_1=m)$), we have $N_1$ jumps with $B_1$ ``births'' and $N_1 - B_1$ ``deaths''. \\
The Binomial structure of
Lemma~\ref{lem:bin-hyp-structure} yields
\[
N_t\mid N_1 \sim \Bin\!\bigl(N_1,w(t)\bigr),\qquad
N_s\mid N_t \sim \Bin\!\bigl(N_t,w(s)/w(t)\bigr).
\]
Applying Lemma~\ref{lem:bin-comp} with $\theta=w(t)$ and $\eta=w(s)/w(t)$ yields
\[
N_s\mid N_1 \sim \Bin\!\bigl(N_1,w(s)\bigr),
\]
which is equivalent to the direct $(1\to s)$ transition.

Now conditioning on the number of jumps at $t$, $N_t$, the Hypergeometric structure described in Lemma~\ref{lem:bin-hyp-structure} yields
\[
B_t | N_t \sim \Hyp(N_1,B_1,N_t) \qquad
B_s|(N_t,B_t,N_s) \sim \Hyp(N_t,B_t,N_s)
\]
Further, drawing $N_s$ using the binomial draw and applying Lemma~\ref{lem:hyp-comp} yields:
\[
B_s|(N_1,B_1,N_s) \sim Hyp(N_1,B_1,N_s)
\]
which is equivalent to the direct $(1\to s)$ transition.

Thus, at fixed $(x_0,x_1,M_1=m)$, the two–step transition $(1\to t\to s)$
and the one–step transition $(1\to s)$ agree in law for $(N_s,B_s)$, and
hence for $X_s$.
Equivalently, the conditional distributions
\(
K^{(m)}_{s\mid 0,1}(x_s \mid x_0,x_1) = 
\P(X_s=x_s\mid X_0=x_0,X_1=x_1,M_1=m)
\)
obey the kernel identity \eqref{eq:bridge-consistency-m-again}, which is
exactly the bridge consistency with fixed slack.
\end{proof}

\subsubsection{Slack Posterior}
\label{subsec:slack-posterior}

In the previous section, we established that, \emph{conditional on a fixed
slack value}, the bridge satisfies exact composition. 

Note that $N_t,B_t,D_t,M_t$ are not observed in the sampling process. To obtain an observable Markov bridge, we \emph{resample the
slack at each time} from its posterior distribution, conditional only on the \emph{observed} $d_t$:
\[
\pi_t(m\mid d_t)
= \P\!\bigl(M_t=m \mid d_t=x_t-x_0\bigr),
\]
Once we sample $M_t$, we can compute $(N_t, B_t)$ and apply the conditional BH transition on $(N_t,B_t)$ to obtain $(N_s,B_s)$, as described in Lemma \ref{lem:bin-hyp-structure}.

This subsection derives the explicit form of $\pi_t(m\mid d_t)$ and,
crucially, proves its closure under the time-rescaling $w(t)$ of the birth-death process.  

\begin{proposition}[Bessel slack posterior]
\label{prop:bessel-posterior}
Under the Poisson birth-death reference described in section \ref{subsec:sampling-process}, fix $t\in(0,1]$ and let $d_t = X_t - X_0 = B_t - D_t$.
Conditional on $d_t$, the pair $(B_t,D_t)$ lies on the lattice
\[
(B_t,D_t)=
\begin{cases}
(m+d_t,\,m), & d_t\ge 0,\\[2pt]
(m,\,m+|d_t|), & d_t<0,
\end{cases}
\qquad m\in\mathbb N,
\]
and the induced posterior pmf of the slack $M_t=\min(B_t,D_t)$ is
\begin{equation}
\label{eq:posterior}
\pi_t(m\mid d_t)
=\P(M_t=m\mid d_t)
\;\propto\;
\frac{(\Lambda_+(t)\Lambda_-(t))^m}{(m+|d_t|)!\,m!},
\qquad m=0,1,2,\dots
\end{equation}
with normalizer
\[
Z_t(d_t)
=\bigl(\Lambda_+(t)\Lambda_-(t)\bigr)^{-|d_t|/2}
 I_{|d_t|}\!\left(2\sqrt{\Lambda_+(t)\Lambda_-(t)}\right),
\]
where $I_\nu$ is the modified Bessel function of the first kind.
\end{proposition}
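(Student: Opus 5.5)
The plan is to compute the posterior of the slack directly from the joint law of $(B_t,D_t)$. By construction $B_t\sim\Poi(\Lambda_+(t))$ and $D_t\sim\Poi(\Lambda_-(t))$ are independent, so their joint pmf is
\[
\P(B_t=b,D_t=d)=e^{-\Lambda_+(t)-\Lambda_-(t)}\frac{\Lambda_+(t)^b\Lambda_-(t)^d}{b!\,d!}.
\]
The first step is the lattice description. Conditioning on $d_t=B_t-D_t=\delta$ restricts $(B_t,D_t)$ to pairs with $b-d=\delta$. When $\delta\ge0$ the smaller coordinate is $D_t$, so with $m=\min(b,d)$ the pair is $(m+\delta,m)$. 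When $\delta<0$ the pair is $(m,m+|\delta|)$. In either case the map $m\mapsto(b,d)$ is a bijection from $\mathbb N$ onto the fibre $\{d_t=\delta\}$. This gives the stated lattice form, and it shows that conditioning on $d_t$ is the same as conditioning on the event that $M_t$ indexes this fibre.

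The second step is the posterior itself. Substitute the parametrization into the joint pmf and drop every factor that does not depend on $m$. In the case $\delta\ge0$ this gives
\[
\P(B_t=m+\delta,D_t=m)=e^{-\Lambda_+-\Lambda_-}\Lambda_+^{\delta}\,\frac{(\Lambda_+\Lambda_-)^m}{(m+\delta)!\,m!}.
\]
The case $\delta<0$ is symmetric, with the prefactor $\Lambda_-^{|\delta|}$ instead. Bayes' rule then gives $\pi_t(m\mid d_t)\propto(\Lambda_+\Lambda_-)^m/((m+|d_t|)!\,m!)$, which is the claimed pmf. The $m$-dependence is symmetric in $\Lambda_\pm$ because only their product appears.

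The last step is the normalizer, and the only real content is matching it to the series for the Bessel function. Recall
\[
I_\nu(z)=\sum_{m\ge0}\frac{(z/2)^{2m+\nu}}{m!\,(m+\nu)!}.
\]
Set $z=2\sqrt{\Lambda_+\Lambda_-}$ and $\nu=|d_t|$. Then $(z/2)^{2m+\nu}=(\Lambda_+\Lambda_-)^{m}(\Lambda_+\Lambda_-)^{|d_t|/2}$. Hence
\[
\sum_{m\ge0}\frac{(\Lambda_+\Lambda_-)^m}{(m+|d_t|)!\,m!}=(\Lambda_+\Lambda_-)^{-|d_t|/2}I_{|d_t|}\bigl(2\sqrt{\Lambda_+\Lambda_-}\bigr)=Z_t(d_t).
\]
This series converges for all $t>0$, so the posterior is a proper distribution.

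I expect the only delicate point to be bookkeeping: the sign cases for $d_t$, and the convention that $t\in(0,1]$, so that $\Lambda_\pm(t)>0$ and the normalizer is nondegenerate. As a consistency check on the constant, I would verify that summing the unnormalized joint over $m$ recovers the Skellam pmf $e^{-\Lambda_+-\Lambda_-}(\Lambda_+/\Lambda_-)^{d_t/2}I_{|d_t|}(2\sqrt{\Lambda_+\Lambda_-})$.
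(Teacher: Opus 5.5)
Your proposal is correct and follows the paper's proof. Both restrict the independent-Poisson joint pmf to the fibre $\{B_t-D_t=d_t\}$ parametrized by the slack $m$, factor out the $m$-independent prefactor, and apply Bayes' rule. The only difference is that you check the normalizer explicitly against the series $I_\nu(z)=\sum_{m\ge0}(z/2)^{2m+\nu}/(m!\,(m+\nu)!)$, whereas the paper just cites it as the standard Skellam normalizer.
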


\begin{proof}
We treat $d_t\ge 0$, the other case being symmetric.
If $d_t=B_t-D_t\ge0$ then $B_t=D_t+d_t$, $M_t = min(B_t,D_t) = D_t$ and thus
\[
(B_t,D_t)=(m+d_t,m)
\quad\Longleftrightarrow\quad M_t=m.
\]
By independence of the Poisson processes, the joint pmf at $(m+d_t,m)$ is
\[
\P(B_t=m+d_t,\,D_t=m)
=e^{-(\Lambda_+(t)+\Lambda_-(t))}
 \frac{\Lambda_+(t)^{m+d_t}}{(m+d_t)!}
 \frac{\Lambda_-(t)^m}{m!}.
\]
Applying Bayes' rule and conditioning on $d_t$:
\[
\pi_t(m\mid d_t)
= \frac{
\P(B_t=m+d_t,\,D_t=m)
}{
\P(d_t)
}
=\frac{
  \Lambda_+(t)^{m+d_t}\Lambda_-(t)^m / \bigl((m+d_t)!\,m!\bigr)
}{
  \sum_{r=0}^\infty
  \Lambda_+(t)^{r+d_t}\Lambda_-(t)^r / \bigl((r+d_t)!\,r!\bigr)
}.
\]
Factor $\Lambda_+(t)^{d_t}$ from numerator and denominator to obtain
\[
\pi_t(m\mid d_t)
= \frac{(\Lambda_+(t)\Lambda_-(t))^m}{(m+d_t)!\,m!}\;
  \bigg/
  \sum_{r=0}^\infty \frac{(\Lambda_+(t)\Lambda_-(t))^r}{(r+d_t)!\,r!},
\]
which is exactly \eqref{eq:posterior}.
The denominator is the standard Skellam normalizer, given by the stated
Bessel expression.
\end{proof}
The slack variable $M_t$ represents the ``hidden'' jumps that cancel out (e.g., a birth followed by a death), and the Bessel posterior tells us their exact distribution. Notably, such ``hidden'' jumps are increasingly unlikely with increasing displacement $d_t$ due to the factorial growth in the denominator, as observed in Figure~\ref{fig:bd-bridge}

We now show that the slack posterior is closed under time-rescaling. In the next section, we define the observable bridge as a ``mixture'' over the slack posterior, and show that it satisfies the bridge consistency \eqref{eq:bridge-consistency} using the closure property.

\begin{corollary}[Slack posterior is closed under time-rescaling]
\label{cor:bessel-closure}

For any $0<s<t$,
\[
\pi_s(\,\cdot\mid d_s)
\text{ has the same functional form as }
\pi_t(\,\cdot\mid d_t),
\]
with parameters obtained by replacing
$(\Lambda_+(t),\Lambda_-(t))$ by $(\Lambda_+(s),\Lambda_-(s))$.
Since $\Lambda_\pm(s)=\lambda_\pm w(s)$, this closure depends only on the
rescaling of $w(\cdot)$ and not on any other properties of the process.
\end{corollary}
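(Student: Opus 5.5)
The plan is to re-run the derivation of Proposition~\ref{prop:bessel-posterior} at time $s$ instead of $t$. That proof used only three facts about time $t$: that $B_t$ and $D_t$ are independent, that $B_t\sim\Poi(\Lambda_+(t))$, and that $D_t\sim\Poi(\Lambda_-(t))$. By construction in Section~\ref{subsec:sampling-process}, the processes $(B_u)$ and $(D_u)$ are independent Poisson processes with cumulative intensities $\Lambda_\pm(u)=\lambda_\pm w(u)$. So for every $s\in(0,t)$ we still have $B_s\sim\Poi(\Lambda_+(s))$ and $D_s\sim\Poi(\Lambda_-(s))$, independent, with $\Lambda_\pm(s)>0$ because $w$ is increasing with $w(0)=0$.

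First, for $d_s=B_s-D_s\ge 0$, I will note that the fibre $\{d_s\}$ is parametrised by $M_s=m$ via $(B_s,D_s)=(m+d_s,m)$. The case $d_s<0$ is handled symmetrically.

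Second, I will write the joint pmf, cancel the common factors $e^{-(\Lambda_+(s)+\Lambda_-(s))}$ and $\Lambda_+(s)^{d_s}$, and apply Bayes' rule. This gives
\[
\pi_s(m\mid d_s)\propto \frac{(\Lambda_+(s)\Lambda_-(s))^m}{(m+|d_s|)!\,m!},
\]
with normalizer $(\Lambda_+(s)\Lambda_-(s))^{-|d_s|/2}I_{|d_s|}\bigl(2\sqrt{\Lambda_+(s)\Lambda_-(s)}\bigr)$. This is exactly \eqref{eq:posterior} with $(\Lambda_+(t),\Lambda_-(t))$ replaced by $(\Lambda_+(s),\Lambda_-(s))$.

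Third, I will make the rescaling explicit by writing the parameter as $\Lambda_+(s)\Lambda_-(s)=\lambda_+\lambda_- w(s)^2$. Thus the posterior is $\mathrm{Bes}(|d_s|;\cdot)$ with argument $2\kappa w(s)$, where $\kappa=\sqrt{\lambda_+\lambda_-}$. Passing from $t$ to $s$ changes only the scalar $w(t)\mapsto w(s)$, which is the sense in which the closure depends only on $w$.

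There is little genuine difficulty. The one point to check is that time $s$ is treated marginally, as a fresh time, and not conditionally on the time-$t$ state. Conditional on $(N_t,B_t)$, the pair $(B_s,D_s)$ is Binomial–Hypergeometric rather than Poisson, so the Bessel form would not follow. The corollary is therefore a statement about the unconditional reference at time $s$, and I will phrase it that way. I would also mention that the same conclusion is consistent with Lemma~\ref{lem:bin-hyp-structure}: Poisson thinning of $N_t\sim\Poi((\lambda_++\lambda_-)w(t))$ by the factor $w(s)/w(t)$ returns $N_s\sim\Poi((\lambda_++\lambda_-)w(s))$, and the birth/death labels are split with the same proportions. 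So the time-$s$ marginal is again an independent Poisson pair, and the family $\{\pi_u\}$ is indexed by $w(u)$ alone.
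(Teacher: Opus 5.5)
Your proposal is correct and follows the paper's proof: the paper likewise notes that $B_s\sim\Poi(\Lambda_+(s))$ and $D_s\sim\Poi(\Lambda_-(s))$ are independent and reruns the conditioning argument of Proposition~\ref{prop:bessel-posterior} at time $s$. Your additional remarks are accurate refinements that the paper leaves implicit: the Bessel argument becomes $2\kappa w(s)$, and the statement concerns the marginal reference at time $s$, not $(B_s,D_s)$ conditional on the time-$t$ state.
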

\begin{proof}
For closure, note that for any $s<t$ we have independent thinning
\[
B_s \sim \Poi(\Lambda_+(s)),\qquad
D_s \sim \Poi(\Lambda_-(s)),
\qquad \Lambda_\pm(s)=\lambda_\pm w(s).
\]
Repeating the same conditioning argument with $(B_s,D_s)$ and $d_s$
yields the same functional family as \eqref{eq:posterior}, with parameters
$(\Lambda_+(s),\Lambda_-(s))$ replacing $(\Lambda_+(t),\Lambda_-(t))$.
\end{proof}

\subsubsection{Composition over Slack}

We now define the observable bridge as a mixture of the fixed-slack kernels over the slack posterior defined in the previous section. 

\begin{definition}[Observable Bridge]
\label{def:observable-bridge}
Let $K^{(m)}_{a\mid 0,b}$ be the fixed-slack bridge kernel, 
\[
K^{(m)}_{a\mid 0,b}(x_a \mid x_0,x_b)
~=~
\P\bigl(X_a = x_a \,\big|\, X_0=x_0,X_b=x_b,M_1=m\bigr),
\qquad 0\le a\le b\le1.
\]
and let  $\pi_t(\cdot\mid d_t)$ be the Bessel slack posterior described in \ref{subsec:slack-posterior}. Then define the observable bridge in (0,1) at terminal time $1$ as
\[
K_{t\mid 0,1}(x_t \mid x_0,x_1)
~=~
\sum_{m=0}^\infty \pi_1(m\mid d_1)\,
K^{(m)}_{t\mid 0,1}(x_t \mid x_0,x_1),
\qquad d_1 = x_1 - x_0.
\]
At an intermediate time $0<t<1$, define the observable bridge in (0,t) as:
\[
K_{s\mid 0,t}(x_s \mid x_0,x_t)
~=~
\sum_{m=0}^\infty \pi_t(m\mid d_t)\,
K^{(m)}_{s\mid 0,t}(x_s \mid x_0,x_t),
\qquad d_t = x_t - x_0,
\]
\end{definition}
In other words, the observable bridge is the expectation of the fixed-slack bridges over the slack posterior.  Using the fixed-slack bridge consistency shown in \ref{thm:consistency-slack} and properties of the slack posterior described in \ref{subsec:slack-posterior}:, we prove the general consistency of the observable bridge.

\begin{theorem}[Consistency of the observable count bridge]
\label{thm:observable-bridge-consistency}
Under the Poisson birth–death reference described in section \ref{subsec:sampling-process}, the observable bridge, defined as in \ref{def:observable-bridge},
satisfies the discrete form of
\eqref{eq:bridge-consistency}:
for all $0 < s < t < 1$ and endpoints $x_0,x_1$,
\begin{equation}
\label{eq:bridge-consistency-count}
K_{s\mid 0,1}(x_s \mid x_0,x_1)
=
\sum_{x_t}
K_{s\mid 0,t}(x_s \mid x_0,x_t)\,
K_{t\mid 0,1}(x_t \mid x_0,x_1).
\end{equation}
\end{theorem}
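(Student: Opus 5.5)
The plan is to avoid manipulating the Bessel mixtures directly. Instead, I will show that every observable kernel in Definition~\ref{def:observable-bridge} equals a genuine conditional law of the reference process $X_t = X_0 + B_t - D_t$. Bridge consistency then reduces to the tower property combined with the Markov property. Concretely, the first claim is that for $0<s<t\le 1$,
\[
K_{s\mid 0,t}(x_s\mid x_0,x_t) = \P\bigl(X_s = x_s \,\big|\, X_0=x_0,\,X_t=x_t\bigr).
\]
Here the fixed-slack kernel inside the $(0,t)$ mixture is read as conditioning on $M_t=m$, since its weights are $\pi_t(m\mid d_t)$.

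\textbf{Step 1: the mixture is a conditional law.} Conditioning on $d_t$ and $M_t=m$ is the same as conditioning on $(N_t,B_t)$, via \eqref{eq:change_of_variables_discrete}. By Lemma~\ref{lem:bin-hyp-structure} and its extension to $0<s<t$, the law of $(N_s,B_s)$ given $(N_t,B_t)$ is $\Bin(N_t,w(s)/w(t))$ followed by $\Hyp(N_t,B_t,N_s)$. This is exactly $K^{(m)}_{s\mid 0,t}$. Proposition~\ref{prop:bessel-posterior} gives the exact posterior $\P(M_t=m\mid d_t)=\pi_t(m\mid d_t)$, and Corollary~\ref{cor:bessel-closure} guarantees that this formula holds at every time $t$, not only at $t=1$. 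Because $(B,D)$ is independent of $X_0$, the law of total probability over $m$ then gives $\sum_m \pi_t(m\mid d_t)\,K^{(m)}_{s\mid 0,t} = \P(X_s=x_s\mid X_0=x_0,X_t=x_t)$. The same argument at $t=1$ identifies $K_{t\mid 0,1}$ and $K_{s\mid 0,1}$ with $\P(X_t\mid X_0,X_1)$ and $\P(X_s\mid X_0,X_1)$.

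\textbf{Step 2: Markov property of the observable process.} $B$ and $D$ are independent Poisson processes with independent increments. Hence $d_u = B_u - D_u$ has independent increments, so it is a time-inhomogeneous Lévy (Skellam) process, independent of $X_0$. In particular, given $(X_0,X_t)$, the increment $X_1-X_t$ is independent of $(X_s)_{s\le t}$. Therefore
\[
\P(X_s=x_s\mid X_0,X_t,X_1)=\P(X_s=x_s\mid X_0,X_t).
\]

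\textbf{Step 3: conclude.} By the tower property,
\[
\P(X_s=x_s\mid X_0,X_1)=\sum_{x_t}\P(X_s=x_s\mid X_0,X_t=x_t,X_1)\,\P(X_t=x_t\mid X_0,X_1).
\]
Applying Step 2 and then Step 1 to each factor gives \eqref{eq:bridge-consistency-count}.

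The main obstacle is Step 1, namely ensuring that the slack used at the intermediate time is the true posterior of $M_t$ given $d_t$ alone. A naive pushforward of the time-1 slack through $1\to t$ would condition on $X_1$ as well. Step 2 is exactly what resolves this: it shows that the extra information in $X_1$ is irrelevant once $X_t$ is known. I would write out carefully the independence between $(N_s,B_s,N_t,B_t)$ and the increments on $(t,1]$, because the identification of $K^{(m)}$ with the Bin/Hyp law is only stated conditionally on $(N_t,B_t)$. An alternative route would mix Theorem~\ref{thm:consistency-slack} over $\pi_1$ and verify that the induced law of $M_t$ given $(d_t,d_1)$ collapses appropriately. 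I expect that calculation to be messier, so I keep it only as a fallback.
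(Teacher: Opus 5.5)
Your proof is correct, and it takes a different route from the paper. Both arguments rest on the same fact: given $X_t$, the endpoint $X_1$ carries no information about the pre-$t$ path. They deploy it differently.

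\textbf{The paper's route.} It starts from the fixed-slack consistency of Theorem~\ref{thm:consistency-slack}, which is built on the composition Lemmas~\ref{lem:bin-comp} and~\ref{lem:hyp-comp}. It mixes that identity over $\pi_1(\cdot\mid d_1)$ and interchanges the sums over $x_t$ and $m$. It then performs a ``Bayes flip'': $\pi_1(m\mid d_1)\,K^{(m)}_{t\mid 0,1}(x_t\mid x_0,x_1)$ is rewritten as a slack posterior given $(X_0,X_t,X_1)$ times $K_{t\mid 0,1}(x_t\mid x_0,x_1)$, and that posterior is replaced by $\pi_t(m\mid d_t)$. The proof closes with the law of total expectation.

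\textbf{Your route.} You never use the composition lemmas. You show that every observable kernel \emph{is} the conditional law $\mathbb{P}(X_s=x_s\mid X_0,X_t)$ of the process with Skellam increments. Consistency then reduces to the tower property plus independent increments. The Bin/Hyp structure of Lemma~\ref{lem:bin-hyp-structure} and the exact Bessel posterior of Proposition~\ref{prop:bessel-posterior} do all the work.

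\textbf{What your route buys.} It makes explicit which slack is being conditioned on, a point the paper's write-up blurs. In the flip, the slack attached to $K^{(m)}_{t\mid 0,1}$ is $M_1$. Yet the identity $\mathbb{P}(M=m\mid X_t,X_0,X_1)=\pi_t(m\mid d_t)$ and the subsequent total-expectation step hold only for $M_t$. For example, if $d_t=d_1=0$, then $M_1=M_t+M'$, where $M'$ is the slack accumulated on $(t,1]$. Your reading of $K^{(m)}_{s\mid 0,t}$ as conditioning on $M_t=m$, combined with your Step~2, makes the argument airtight. That reading matches Proposition~\ref{prop:sampling_interpolation_count} and Algorithm~\ref{alg:sample}, but not the literal wording of Definition~\ref{def:observable-bridge}, so state the reinterpretation explicitly.

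\textbf{What the paper's route buys.} It gives a statement at the level of the sampler itself: at fixed slack, a one-step and a two-step Bin/Hyp draw agree in law. This is the composition property the implementation relies on and that Figure~\ref{fig:bd-bridge} checks empirically.
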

\begin{proof}
Fix $x_0,x_1, d_1 = x_1 - x_0,$ and an arbitrary bounded test function $\varphi$ on the state space. 
We prove \eqref{eq:bridge-consistency-count} in weak form by computing
$\E[\varphi(X_s)\mid X_0=x_0,X_1=x_1]$ in two ways:

First, we write the expectation as a mixture over the slack posterior: 
\begin{equation}
\label{eq:observable-bridge-consistency-1}
\E[\varphi(X_s)\mid X_0=x_0,X_1=x_1]
= \sum_{m=0}^\infty \pi_1(m\mid d_1)\,
  \E[\varphi(X_s)\mid X_0=x_0,X_1=x_1,M=m],
\end{equation}
Note that each fixed-slack bridge satisfies bridge-consistency as described in Theorem \ref{thm:consistency-slack}:

\[
K^{(m)}_{s\mid 0,1}(x_s \mid x_0,x_1)
=
\sum_{x_t}
K^{(m)}_{s\mid 0,t}(x_s \mid x_0,x_t)\,
K^{(m)}_{t\mid 0,1}(x_t \mid x_0,x_1).
\]
where $K^{(m)}_{s\mid 0,1}(x_s \mid x_0,x_1) = \P(X_s = x_s \mid X_0 = x_0, X_1 = x_1, M= m)$. By taking expectation over the test function $\varphi$, we get:
\begin{align*}
&\E[\varphi(X_s)\mid X_0=x_0,X_1=x_1,M=m]\\
&\qquad= \sum_{x_t}
   \E[\varphi(X_s)\mid X_0=x_0,X_t=x_t,M=m]\,
   K^{(m)}_{t\mid 0,1}(x_t \mid x_0,x_1).
\end{align*}
Plug the above into equation \ref{eq:observable-bridge-consistency-1} and interchange the sums over
$x_t$ and $m$:
\begin{equation}
\label{eq:observable-bridge-consistency-2}
\begin{split}
&\E[\varphi(X_s)\mid X_0=x_0,X_1=x_1] \\
&\quad= \sum_{x_t} \sum_{m=0}^\infty \pi_1(m\mid d_1) \, K^{(m)}_{t\mid 0,1}(x_t \mid x_0,x_1) \, \E[\varphi(X_s)\mid X_0=x_0,X_t=x_t,M=m]
\end{split}
\end{equation}

Now consider the first two terms inside the sum over $m$. Note that they form of the joint likelihood of $(M,X_t)$ given $(X_0,X_1)$:
\[
\pi_1(m\mid d_1) \, K^{(m)}_{t\mid 0,1}(x_t \mid x_0,x_1) \, = 
\P(M = m, X_t = x_t \mid X_0 = x_0, X_1 = x_1)
\]

Using the definition of conditional probability, we can ``flip'' this joint density to condition on the intermediate state $X_t$:
\begin{equation}
\label{eq:bayes-flip}
\P(M, X_t \mid X_0, X_1) = \P(M \mid X_t, X_0, X_1) , \P(X_t \mid X_0, X_1).
\end{equation}
Once $X_t$ is fixed, the future terminal state $X_1$ provides no additional information about the slack jumps that occurred before time $t$, thereby isolating $M_t$ on $(0,t)$. As a result $\P(M|X_t,X_0,X_1)$ is simply the posterior distribution of $M_t$, following the Bessel form described in Proposition \ref{prop:bessel-posterior} as a result of the time-rescaling Corrolary~\ref{cor:bessel-closure}. To summarize:
\[
\P(M = m \mid X_t = x_t, X_0 = x_0, X_1 = x_1) = \P(M = m \mid X_t = x_t, X_0 = x_0) = \pi_t(m \mid d_t)
\]
Substituting this back into equation \ref{eq:bayes-flip} gives the identity:
\begin{equation}
\label{eq:the-swap}
\pi_1(m\mid d_1) , K^{(m)}_{t\mid 0,1}(x_t \mid x_0,x_1) = \pi_t(m \mid d_t) , K_{t\mid 0,1}(x_t \mid x_0,x_1).
\end{equation}

Plugging the above into equation \ref{eq:observable-bridge-consistency-2}:

\begin{equation}
\begin{split}
&\E[\varphi(X_s)\mid X_0=x_0,X_1=x_1] \\
&\quad= \sum_{x_t} \sum_{m=0}^\infty \pi_t(m\mid d_t) \, K_{t\mid 0,1}(x_t \mid x_0,x_1) \, \E[\varphi(X_s)\mid X_0=x_0,X_t=x_t,M=m]
\end{split}
\end{equation}

By the law of total expectation:
\[
\sum_{m=0}^\infty
\pi_t(m\mid d_t)\,
\E[\varphi(X_s)\mid X_0=x_0,X_t=x_t,M=m]
= \E[\varphi(X_s)\mid X_0=x_0,X_t=x_t],
\]
Plugging this into the above equation:
\[
\E[\varphi(X_s)\mid X_0=x_0,X_1=x_1] =
\sum_{x_t} K_{t\mid 0,1}(x_t \mid x_0,x_1) \, \E[\varphi(X_s)\mid X_0=x_0,X_t=x_t]
\]

and the definition of $K_{s\mid 0,t}$ then gives
\[
\E[\varphi(X_s)\mid X_0=x_0,X_t=x_t]
= \sum_{x_s} \varphi(x_s)\,K_{s\mid 0,t}(x_s \mid x_0,x_t).
\]

Putting everything together,
\begin{align*}
\E[\varphi(X_s)\mid X_0=x_0,X_1=x_1]
&=
\sum_{x_t}
\Bigl(
   \sum_{x_s} \varphi(x_s)\,K_{s\mid 0,t}(x_s \mid x_0,x_t)
\Bigr)
K_{t\mid 0,1}(x_t \mid x_0,x_1) \\
&=
\sum_{x_s} \varphi(x_s)
\Bigl(
   \sum_{x_t} K_{s\mid 0,t}(x_s \mid x_0,x_t)\,K_{t\mid 0,1}(x_t \mid x_0,x_1)
\Bigr).
\end{align*}
On the other hand, by the definition of $K_{s\mid 0,1}$,
\[
\E[\varphi(X_s)\mid X_0=x_0,X_1=x_1]
=
\sum_{x_s} \varphi(x_s)\,K_{s\mid 0,1}(x_s \mid x_0,x_1).
\]
Since $\varphi$ is arbitrary, the coefficients of $\varphi(x_s)$ must
agree for all $x_s$, that is:
\[
K_{s\mid 0,1}(x_s \mid x_0,x_1) =  \sum_{x_t} K_{s\mid 0,t}(x_s \mid x_0,x_t)\,K_{t\mid 0,1}(x_t \mid x_0,x_1)
\]

which yields \eqref{eq:bridge-consistency-count}.
\end{proof}

\noindent
We have now shown that the observable birth-death bridge satisfies the same bridge consistency property \eqref{eq:bridge-consistency} as the Gaussian diffusion bridge, confirming the correctness of the bridge as a diffusion kernel.

\subsection{Link with Schr\"odinger Bridges}\label{app:schrodinger-proof}

We now justify the Schr\"odinger-bridge interpretation stated in Sec.~\ref{sec:count-bridges}.
Recall the unconditional birth--death construction
\[
X_t \;=\; X_0 + B_t - D_t,
\]
where $(B_t)_{t\in[0,1]}$ and $(D_t)_{t\in[0,1]}$ are independent Poisson processes with cumulative intensities
\(
\Lambda_+(t)=\lambda_+ w(t),\;
\Lambda_-(t)=\lambda_- w(t),
\)
and \(w(0)=0,w(1)=1\).  Let
\(
\kappa = \sqrt{\lambda_+\lambda_-}
\)
denote the (scalar) jump intensity.  Then for each \(x_0\in\mathcal X\), the forward kernel at time \(t=1\) has Skellam pmf
\[
K^\kappa_{1\mid 0}(x_1\mid x_0)
= \exp\!\big[-\Lambda_+(1)-\Lambda_-(1)\big]\,
\Big(\tfrac{\Lambda_+(1)}{\Lambda_-(1)}\Big)^{\frac{x_1-x_0}{2}}\,
I_{|x_1-x_0|}\!\big(2\kappa\big),
\]
where \(I_\nu\) is the modified Bessel function of the first kind.

Let
\[
p_{\mathrm{ref}}^\kappa(x_0,x_1)
\;=\;
p_0(x_0)\,K^\kappa_{1\mid 0}(x_1\mid x_0)
\]
be the \emph{reference joint law} of \((X_0,X_1)\) induced by the birth--death process and the data prior \(p_0\).
Over the space of couplings
\[
\mathcal C(p_0,p_1)
=
\bigl\{C \text{ on } \mathcal X\times\mathcal X :
C(\cdot,\mathcal X)=p_0,\; C(\mathcal X,\cdot)=p_1\bigr\},
\]
we consider the entropic projection
\[
C_\kappa \in \arg\min_{C\in\mathcal C(p_0,p_1)}
\mathrm{KL}\bigl(C \,\|\, p_{\mathrm{ref}}^\kappa\bigr).
\]

Fix any \(C\in\mathcal C(p_0,p_1)\) and write expectations with respect to \(C\) as \(\E_C[\cdot]\).
By definition,
\begin{align*}
\mathrm{KL}\bigl(C \,\|\, p_{\mathrm{ref}}^\kappa\bigr)
&=
\E_C\!\biggl[
\log\frac{dC}{dp_{\mathrm{ref}}^\kappa}(X_0,X_1)
\biggr]\\
&= -H(C) - \E_C\!\big[\log K^\kappa_{1\mid 0}(X_1\mid X_0)\big]
   + C_\kappa(p_0,p_1),
\end{align*}
where \(H(C)\) is the Shannon entropy of \(C\) and
\(C_\kappa(p_0,p_1)\) collects all terms depending only on the marginals
(including \(\Lambda_\pm(1)\) and \(\E_C[X_1{-}X_0]\), which are fixed across
all couplings with marginals \(p_0,p_1\)).

Using the explicit Skellam form, we can isolate the Bessel contribution:
\[
\mathrm{KL}\bigl(C \,\|\, p_{\mathrm{ref}}^\kappa\bigr)
=
C_\kappa(p_0,p_1)
   - \E_C\!\big[\log I_{|X_1-X_0|}(2\kappa)\big]
   - H(C).
\]

To study the limits \(\kappa\to\infty\) and \(\kappa\to0^+\), we use standard
asymptotics for \(I_\nu\) \cite[§10.41(ii)]{NIST:DLMF}:
\[
I_\nu(z)=\frac{e^{z}}{\sqrt{2\pi z}}\,(1+o(1))
\quad\text{as } z\to\infty,
\qquad
I_\nu(z)=\frac{(z/2)^\nu}{\Gamma(\nu{+}1)}\,(1+o(1))
\quad\text{as } z\to0^+.
\]

\paragraph{High-noise limit \(\kappa\to\infty\).}
For fixed \(\nu\), the large-\(z\) expansion shows that
\(\log I_\nu(2\kappa)\) depends on \(\nu\) only through \(O(1/\kappa)\) terms.
Hence, as \(\kappa\to\infty\),
\[
\E_C\!\big[\log I_{|X_1-X_0|}(2\kappa)\big]
=
C'_\kappa(p_0,p_1) + o_\kappa(1),
\]
where \(C'_\kappa(p_0,p_1)\) does not depend on the choice of coupling \(C\).
It follows that
\[
\mathrm{KL}\bigl(C \,\|\, p_{\mathrm{ref}}^\kappa\bigr)
=
C - H(C) + o_\kappa(1),
\qquad \kappa\to\infty,
\]
for some constant \(C\) that is independent of \(C\).
Maximizing \(H(C)\) over \(\mathcal C(p_0,p_1)\) yields the independent
coupling \(p_0\otimes p_1\), so in the high-noise limit
\(C_\kappa \to p_0\otimes p_1\).

\paragraph{Low-noise limit \(\kappa\to0^+\).}
For small \(z\), we have
\[
\log I_\nu(2\kappa)
=
\nu\log\!\Big(\tfrac{2}{\kappa}\Big) + O(1)
\quad\text{as }\kappa\to0^+,
\]
uniformly for integer \(\nu\) in any fixed finite range.
Applying this with \(\nu=|X_1{-}X_0|\) gives
\[
-\E_C\!\big[\log I_{|X_1-X_0|}(2\kappa)\big]
=
\log\!\Big(\tfrac{2}{\kappa}\Big)\,\E_C|X_1{-}X_0|
+ O(1),
\quad \kappa\to0^+.
\]
Substituting into the expression for the KL divergence, we obtain
\[
\mathrm{KL}\bigl(C \,\|\, p_{\mathrm{ref}}^\kappa\bigr)
=
C
+ \log\!\Big(\tfrac{2}{\kappa}\Big)\,\E_C|X_1{-}X_0|
- H(C)
+ o_\kappa(1),
\qquad \kappa\to0^+,
\]
for some constant \(C\) independent of \(C\).

Thus, in the low-noise limit the dominant term in
\(\mathrm{KL}(C \,\|\, p_{\mathrm{ref}}^\kappa)\) is proportional to
\(\E_C|X_1{-}X_0|\); minimizing the KL over
\(\mathcal C(p_0,p_1)\) therefore recovers discrete optimal transport with
cost \(|x_1{-}x_0|\), while the entropy term \(-H(C)\) corresponds to the
usual entropic regularization.  This proves the characterization stated in
Sec.~\ref{sec:count-bridges}.

%% file: tex/proofs/deconv.tex

We make two central assumptions that enable deconvolution.

\begin{assumption}[Realizability and recoverability]
\label{ass:realizable-recoverable}
There exists $\theta^\star$ such that for all $t \in [0,1]$:
\begin{enumerate}
\item \textbf{Realizability:} $Q_{\theta^\star}(a_0 \mid \mathbf{x}_t, t, z) = p_{0}(a_0 \mid \mathbf{x}_t, t, z)$ almost surely
\item \textbf{Recoverability:} The aggregate-to-unit map has local modulus $\kappa_{\text{loc}}(t)$: for $\theta$ near $\theta^\star$,
\[
D_{\text{KL}}(p_{0}(\mathbf{x}_0 \mid \mathbf{x}_t, t, z) \,\|\, q_\theta(\cdot \mid \mathbf{x}_t, t, z)) \leq \kappa_{\text{loc}}(t) \cdot D_{\text{KL}}(p_{0}(a_0 \mid \mathbf{x}_t, t, z) \,\|\, Q_\theta(\cdot \mid \mathbf{x}_t, t, z))
\]
\end{enumerate}
\end{assumption}

Recoverability means that the aggregate distribution uniquely determines the unit-level distribution—if we know the sum perfectly, we can deduce the summands. This is not always possible. Consider the simplest case: if $X_1, X_2 \sim \text{Poisson}(\lambda_1), \text{Poisson}(\lambda_2)$ are independent, their sum is $\text{Poisson}(\lambda_1 + \lambda_2)$. Observing only the sum, we cannot distinguish $(\lambda_1=3, \lambda_2=2)$ from $(\lambda_1=4, \lambda_2=1)$—both yield $\text{Poisson}(5)$. Now if we had side information that identified the ``component" each Poisson was drawn from we could identify this, but it illustrates the difficulties we will face here.

Recoverability holds when units have sufficient diversity. Formally, it requires that units are conditionally independent given $(\mathbf{x}_1, z)$ and have distinct factorial cumulant signatures—essentially, different statistical fingerprints that survive aggregation. For count data, this means units must have different parameters (e.g., different Poisson rates or negative binomial dispersions) that are distinguishable through the covariates $z$. 

Appendix~\ref{app:realizable-recoverable} provides a detailed discussion. In practice, this means units should have heterogeneous characteristics captured by $z$—for example, different images associated with the transciptomics is spatial single cell data.

Recoverability faces fundamental limits as the number of units $G$ grows. By the central limit theorem, when $G \to \infty$, the standardized aggregate $(A_0 - \mu_G)/\sigma_G$ converges to a Gaussian regardless of the unit-level distributions. The higher-order cumulants that distinguish different unit configurations vanish at rate $O(G^{-(k-2)/2})$ for order $k > 2$, leaving only the mean and variance (Appendix~\ref{app:largeG-nonident}). 

This CLT collapse means our method is most powerful for moderate $G$ (tens to hundreds of units) where unit heterogeneity is preserved in aggregates. For very large $G$, additional structure is needed—either parametric constraints (e.g., unit parameters follow a low-dimensional model), multiple aggregate observations under different conditions, or direct observation of some unit-level data.

We illustrate these issues in Fig. \ref{fig:deconv} in the main text: as the dirichlet concentration increases the group-level mixture weights concentrate. When combined with large group sizes this forces all groups to become identical. This gives a clear empirical sense for the limits of deconvolution.

\subsection{Approximately Sampling Conditional on the Sum}\label{app:ipf}

The most central part of our deconvolution approach 
is our projection operation. This appendix formalizes the rescaling operator used in
Prop.~\ref{prop:ipf-first-order} and explains precisely in what sense it
approximates the aggregate--conditional distribution
$Q_\theta(\,\cdot\,\mid A(\mathbf X_0)=a_0)$.

\subsubsection{Setup and two ensembles}

Let $\mathbf X_0\in\mathbb Z_{\ge 0}^{G\times D}$ be a random count array with
(unconstrained) law $P_\theta$, and define the aggregate statistic
\[
A_0 \;:=\; A(\mathbf X_0)\in\mathbb Z_{\ge 0}^D.
\]
In the main text, we are interested in the \emph{microcanonical} (hard-constraint)
conditional law
\[
Q_\theta(\,\cdot\,\mid A_0=a_0) \;:=\; P_\theta(\,\cdot\,\mid A(\mathbf X_0)=a_0),
\qquad\text{for feasible } a_0 \text{ with } P_\theta(A_0=a_0)>0,
\]
which is supported on the affine constraint set $\{\mathbf x_0:\,A(\mathbf x_0)=a_0\}$.

A key point is that $Q_\theta(\,\cdot\,\mid A_0=a_0)$ is not, in general, an
exponential tilt of $P_\theta$. Exponential tilts instead arise naturally from the
canonical (moment-constrained) relaxation, which replaces the hard
constraint $A_0=a_0$ by $\mathbb E[A_0]=a_0$.

Throughout this subsection, we work with count arrays
$\mathbf X_0\in\mathbb Z_{\ge 0}^{G\times D}$, so the aggregate
$A_0=A(\mathbf X_0)\in\mathbb Z_{\ge 0}^D$ is lattice-valued. Hence for any
\emph{feasible} $a_0$ with $P_\theta(A_0=a_0)>0$, the microcanonical conditional
$P_\theta(\cdot\mid A_0=a_0)$ is an ordinary conditional distribution (no
measure-zero issue). When defining the deterministic rescaling projection
$\Pi$ below, we relax the projection domain to $\mathbb R_{\ge 0}^{G\times D}$
to obtain a closed form.

\subsubsection{Canonical exponential tilt and first-order expansion}

\begin{assumption}[Cram\'er regularity for the aggregate]
\label{ass:ipf-cramer}
The log-mgf
\[
\Lambda(\lambda)\;:=\;\log \mathbb E_{P_\theta}\!\Big[\exp\{\lambda^\top A_0\}\Big]
\]
is finite on an open neighborhood of $\lambda=0$, twice continuously differentiable there, and has positive-definite covariance
\[
\Sigma \;:=\; \nabla^2 \Lambda(0)\;=\;\mathrm{Cov}_{P_\theta}(A_0)\;\succ\;0.
\]
Moreover, $\nabla^2\Lambda$ is locally Lipschitz in a neighborhood of $0$.
\end{assumption}

\begin{definition}[Canonical moment-constrained tilt]
\label{def:canonical-tilt}
For $\lambda$ in the regularity neighborhood, define the tilted law
$P_{\theta,\lambda}$ on $\mathbf X_0$ by
\[
\frac{dP_{\theta,\lambda}}{dP_\theta}(\mathbf x_0)
\;=\;\exp\!\big\{\lambda^\top A(\mathbf x_0)-\Lambda(\lambda)\big\}.
\]
If $a_0$ is sufficiently close to $\mu:=\mathbb E_{P_\theta}[A_0]=\nabla\Lambda(0)$,
the canonical parameter $\hat\lambda$ is the unique solution of
\[
\nabla\Lambda(\hat\lambda)\;=\;a_0,
\]
and $P_{\theta,\hat\lambda}$ is the unique KL-minimizer of
$D_{\mathrm{KL}}(\,\cdot\,\|P_\theta)$ over distributions with
$\mathbb E[A_0]=a_0$.
\end{definition}

\begin{lemma}[First-order expansion of the canonical parameter]
\label{lem:lambda-firstorder}
Under Assumption~\ref{ass:ipf-cramer}, let $\delta:=a_0-\mu$. Then for $\|\delta\|$
sufficiently small,
\[
\hat\lambda \;=\; \Sigma^{-1}\delta \;+\; O(\|\delta\|^2).
\]
\end{lemma}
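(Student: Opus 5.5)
The plan is to treat the defining equation $\nabla\Lambda(\hat\lambda)=a_0$ as a perturbation of $\nabla\Lambda(0)=\mu$ and invert it locally. Define $F(\lambda):=\nabla\Lambda(\lambda)-\mu$ on the regularity neighborhood $U$ of $0$ from Assumption~\ref{ass:ipf-cramer}. Then $F(0)=0$, $F$ is $C^1$, and its Jacobian is $DF(0)=\nabla^2\Lambda(0)=\Sigma\succ0$, which is invertible. The inverse function theorem therefore gives neighborhoods $V\ni0$ and $W\ni0$ with $F:V\to W$ a $C^1$ diffeomorphism. For $\|\delta\|$ small enough that $\delta\in W$, the canonical parameter is $\hat\lambda=F^{-1}(\delta)$. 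This $\hat\lambda$ is unique in $V$; strict convexity of $\Lambda$, coming from $\nabla^2\Lambda\succ0$ near $0$, also gives uniqueness on the convex neighborhood, consistent with Definition~\ref{def:canonical-tilt}.

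Next we get an a priori linear bound $\|\hat\lambda\|\le C\|\delta\|$. Since $\nabla^2\Lambda$ is continuous and $\Sigma\succ0$, we can shrink $V$ to a ball $B_r(0)$ on which $\nabla^2\Lambda(\lambda)\succeq \tfrac12\sigma_{\min}(\Sigma)\,\Id$. The mean value theorem in integral form gives
\[
F(\lambda)=\Big(\int_0^1\nabla^2\Lambda(u\lambda)\,du\Big)\lambda .
\]
Hence $\|\delta\|=\|F(\hat\lambda)\|\ge\tfrac12\sigma_{\min}(\Sigma)\|\hat\lambda\|$, which yields $\|\hat\lambda\|\le 2\sigma_{\min}(\Sigma)^{-1}\|\delta\|$.

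Now we expand $F$ to second order using the local Lipschitz constant $L$ of $\nabla^2\Lambda$ on $B_r(0)$. Applying the same integral formula,
\[
\|F(\lambda)-\Sigma\lambda\|\le\int_0^1\|\nabla^2\Lambda(u\lambda)-\nabla^2\Lambda(0)\|\,\|\lambda\|\,du\le \tfrac{L}{2}\|\lambda\|^2 .
\]
Evaluating at $\lambda=\hat\lambda$ gives $\delta=\Sigma\hat\lambda+R$ with $\|R\|\le \tfrac L2\|\hat\lambda\|^2$. Multiplying by $\Sigma^{-1}$, we get $\hat\lambda=\Sigma^{-1}\delta-\Sigma^{-1}R$. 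The a priori bound then gives $\|\Sigma^{-1}R\|\le \|\Sigma^{-1}\|\tfrac L2\cdot 4\sigma_{\min}(\Sigma)^{-2}\|\delta\|^2=O(\|\delta\|^2)$, which is the claim.

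The main point needing care is the a priori bound $\|\hat\lambda\|=O(\|\delta\|)$. Without it, the quadratic remainder in $\hat\lambda$ cannot be converted into one in $\delta$. The bound requires $\hat\lambda$ to lie in the small ball where $\nabla^2\Lambda$ is uniformly positive definite, and this is exactly what the inverse function theorem (continuity of $F^{-1}$ at $0$) supplies once $\|\delta\|$ is small. Everything else is the routine Taylor estimate above, with constants depending only on $\Sigma$ and $L$.
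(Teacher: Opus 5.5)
Your proof is correct and takes the paper's route: Taylor-expand $\nabla\Lambda$ at $0$ with the Lipschitz remainder $\|R(\lambda)\|\le \tfrac{L}{2}\|\lambda\|^2$, then solve $\hat\lambda=\Sigma^{-1}(\delta-R(\hat\lambda))$. Your inverse-function-theorem step (existence of $\hat\lambda$ near $0$) and your a priori bound $\|\hat\lambda\|\le 2\sigma_{\min}(\Sigma)^{-1}\|\delta\|$ spell out what the paper's one-line proof leaves implicit when it turns $O(\|\hat\lambda\|^2)$ into $O(\|\delta\|^2)$.
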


\begin{proof}
Taylor expand $\nabla\Lambda(\lambda)$ at $0$:
$\nabla\Lambda(\lambda)=\mu+\Sigma\lambda+R(\lambda)$ with
$\|R(\lambda)\|\le\tfrac{L}{2}\|\lambda\|^2$.
Solving $\mu+\Sigma\hat\lambda+R(\hat\lambda)=\mu+\delta$ yields
$\hat\lambda=\Sigma^{-1}(\delta-R(\hat\lambda))$ and the stated bound.
\end{proof}

\subsubsection{How the microcanonical conditional relates to the canonical tilt}

We record an identity and a standard approximation principle.

\begin{lemma}[Tilting does not change the hard conditional]
\label{lem:tilt-invariant}
For any $\lambda$ in the regularity neighborhood and any feasible $a_0$,
\[
P_\theta(\,\cdot\,\mid A_0=a_0)\;=\;P_{\theta,\lambda}(\,\cdot\,\mid A_0=a_0).
\]
\end{lemma}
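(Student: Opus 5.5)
The identity follows from a direct computation with Bayes' rule on the lattice, using that the tilt density depends on $\mathbf x_0$ only through $A(\mathbf x_0)$. Since $\mathbf X_0$ is count-valued and $a_0$ is feasible, $P_\theta(A_0=a_0)>0$, so both conditionals are ordinary elementary conditional probabilities and no disintegration argument is needed. I will also note that $P_{\theta,\lambda}$ and $P_\theta$ are mutually absolutely continuous. The density $\exp\{\lambda^\top A(\mathbf x_0)-\Lambda(\lambda)\}$ is strictly positive and finite for $\lambda$ in the regularity neighborhood of Assumption~\ref{ass:ipf-cramer}. Hence $P_{\theta,\lambda}(A_0=a_0)>0$ as well, and the right-hand conditional is well defined.

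The key step is to write, for any $\mathbf x_0$ in the fiber $\{A(\mathbf x_0)=a_0\}$,
\[
P_{\theta,\lambda}(\mathbf x_0)=P_\theta(\mathbf x_0)\,e^{\lambda^\top a_0-\Lambda(\lambda)},
\]
because $A(\mathbf x_0)=a_0$ on the fiber. Summing over the fiber gives
\[
P_{\theta,\lambda}(A_0=a_0)=P_\theta(A_0=a_0)\,e^{\lambda^\top a_0-\Lambda(\lambda)}.
\]
Dividing the first identity by the second, the common factor $e^{\lambda^\top a_0-\Lambda(\lambda)}$ cancels. This yields $P_{\theta,\lambda}(\mathbf x_0\mid A_0=a_0)=P_\theta(\mathbf x_0\mid A_0=a_0)$ on the fiber. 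Off the fiber, both conditionals vanish. For full generality I would phrase this with indicator test functions: for any bounded $\varphi$,
\[
\E_{P_{\theta,\lambda}}[\varphi(\mathbf X_0)\mathbf 1\{A_0=a_0\}]=e^{\lambda^\top a_0-\Lambda(\lambda)}\,\E_{P_\theta}[\varphi(\mathbf X_0)\mathbf 1\{A_0=a_0\}].
\]
This also covers the case of a possibly infinite fiber, where the interchange of sum and factor is justified because the factor is a constant.

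There is no real obstacle. The only points requiring care are (i) checking that the normalizer $\Lambda(\lambda)$ is finite, which is exactly what Assumption~\ref{ass:ipf-cramer} provides, and (ii) checking positivity of the conditioning event under the tilted law, handled above. I would close by remarking that this is the sufficiency of $A_0$ for the exponential family $\{P_{\theta,\lambda}\}_\lambda$. That is why the canonical tilt of Definition~\ref{def:canonical-tilt} with $\lambda=\hat\lambda$ can be used in subsequent arguments as an equivalent base measure for the microcanonical conditional $Q_\theta(\cdot\mid A_0=a_0)$.
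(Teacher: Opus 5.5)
Your proposal is correct and follows the same route as the paper: the Radon--Nikodym factor $\exp\{\lambda^\top A(\mathbf x_0)-\Lambda(\lambda)\}$ is constant on the fiber $\{A(\mathbf x_0)=a_0\}$, so it cancels between the numerator and denominator of the conditional. Your extra checks, finiteness of $\Lambda(\lambda)$ and positivity of $P_{\theta,\lambda}(A_0=a_0)$, only make explicit what the paper's one-line proof leaves implicit.
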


\begin{proof}
On the event $\{A_0=a_0\}$, the Radon--Nikodym factor
$\exp\{\lambda^\top A_0-\Lambda(\lambda)\}$ is constant, hence cancels in the
conditional density ratio.
\end{proof}

Lemma~\ref{lem:tilt-invariant} lets us choose the most convenient tilt for
analysis of the conditional. The canonical choice $\lambda=\hat\lambda$ is
particularly useful because under $P_{\theta,\hat\lambda}$, the aggregate is
centered at the constraint: $\mathbb E_{P_{\theta,\hat\lambda}}[A_0]=a_0$.

To connect the hard conditional to the tilt, one needs an additional
large-system regime (e.g.\ many summed contributions, or large aggregate
counts) ensuring that conditioning on $A_0=a_0$ only induces a small correction
beyond matching the mean. A classical form is the Gibbs conditioning
principle.

Consider a sequence of problems indexed by $n$ with independent random arrays
$\mathbf X_0^{(n)}=(X_{10}^{(n)},\ldots,X_{n0}^{(n)})$ under $P_\theta^{(n)}$ and
aggregate $A_0^{(n)}=\sum_{g=1}^n X_{g0}^{(n)}$ (elementwise).
Assume a uniform Cram\'er condition and a local limit theorem for
$A_0^{(n)}$ under the canonical tilt $P_{\theta,\hat\lambda_n}^{(n)}$
(where $\hat\lambda_n$ solves $\nabla\Lambda_n(\hat\lambda_n)=a_0^{(n)}$ and
$\Lambda_n(\lambda)=\log\mathbb E[\exp\{\lambda^\top A_0^{(n)}\}]$).
Then, for any fixed $m$ and any bounded measurable $f$ depending only on
$(X_{10}^{(n)},\ldots,X_{m0}^{(n)})$,
\[
\mathbb E_{Q_\theta^{(n)}(\cdot\mid A_0^{(n)}=a_0^{(n)})}[f]
\;-\;
\mathbb E_{P_{\theta,\hat\lambda_n}^{(n)}}[f]
\;\longrightarrow\;0
\qquad (n\to\infty),
\]
where $Q_\theta^{(n)}(\cdot\mid A_0^{(n)}=a_0^{(n)})$ denotes the microcanonical
conditional.

Essentially, for local
observables in a large-sum regime, the conditional behaves like the canonical
tilt with parameter $\hat\lambda$, and Lemma~\ref{lem:lambda-firstorder} gives the
first-order expansion $\hat\lambda=\Sigma^{-1}\delta+O(\|\delta\|^2)$.
This is the precise meaning of the phrase ``admits a first-order exponential tilt''
in Prop.~\ref{prop:ipf-first-order}. This first-order argument requires a large-$n$ justification, which conflicts with our identifiability perspective below, but we provide it as a useful perspective on the projection operation here since it may serve as a blueprint for thinking about reasonable projections in other settings.

\subsubsection{The rescaling map as an entropic projection}

The operator $\Pi$ in Prop.~\ref{prop:ipf-first-order} is a projection of a
\emph{point} onto the hard constraint set. Since points are not distributions, we
use the standard \emph{generalized KL} (a Bregman divergence on
$\mathbb R_{\ge0}^{G\times D}$):
\[
D_{\mathrm{KL}}(\mathbf y\Vert \mathbf x)
\;:=\;\sum_{g=1}^G\sum_{d=1}^D
\Big(y_g^{(d)}\log\frac{y_g^{(d)}}{x_g^{(d)}} - y_g^{(d)} + x_g^{(d)}\Big),
\]
with the conventions $0\log 0=0$ and $y\log(y/0)=+\infty$ for $y>0$.

\begin{lemma}[Closed form of the projection for elementwise sums]
\label{lem:projection-scaling}
Let $A(\mathbf x_0)=\sum_g x_{g0}$ (elementwise) and assume
$\sum_g x_{g0}^{(d)}>0$ for each $d$ with $a_0^{(d)}>0$.
Then the minimizer
\[
\Pi(\mathbf x_0)
=\arg\min_{\mathbf y_0:\,A(\mathbf y_0)=a_0} D_{\mathrm{KL}}(\mathbf y_0\Vert \mathbf x_0)
\]
exists, is unique, and is given by the coordinatewise scaling
\[
\Pi(\mathbf x_0)_g^{(d)}
\;=\;x_{g0}^{(d)}\cdot \frac{a_0^{(d)}}{\sum_{g'}x_{g'0}^{(d)}},
\qquad d=1,\dots,D.
\]
\end{lemma}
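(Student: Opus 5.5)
The plan is to observe that both the objective and the constraint split across the feature index $d$, so the problem reduces to $D$ independent scalar problems of the form
\[
\min_{y\in\R_{\ge0}^G:\ \sum_g y_g = a}\ \sum_g \Bigl(y_g\log\tfrac{y_g}{x_g} - y_g + x_g\Bigr),
\]
with $x_g = x_{g0}^{(d)}$ and $a = a_0^{(d)}$. Existence, uniqueness and the closed form can then be argued one coordinate $d$ at a time.

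\textbf{Degenerate cases.} First I will dispose of two degenerate cases.
\begin{itemize}
\item If $a=0$, the only feasible point is $y=0$, and the scaling formula also gives $0$, using the convention that the formula reads $0$ when $a=0$.
\item For indices $g$ with $x_g=0$, finiteness of the objective forces $y_g=0$, since $y\log(y/0)=+\infty$ for $y>0$. The scaling formula gives $y_g=0$ there as well.
\end{itemize}
I will therefore restrict to the support $S=\{g: x_g>0\}$. This set is nonempty by the hypothesis $\sum_g x_g>0$ whenever $a>0$.

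\textbf{Existence and uniqueness.} On $S$ the feasible set $\{y\ge0,\ \sum_{g\in S}y_g=a\}$ is a nonempty compact simplex. The objective is continuous there, with the convention $0\log 0=0$, so a minimizer exists. The function $y\mapsto y\log(y/x)-y+x$ has second derivative $1/y>0$ on $(0,\infty)$ and is continuous at $0$, so it is strictly convex on $[0,\infty)$. A sum of such functions is strictly convex on the convex feasible set, which gives uniqueness.

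\textbf{Identifying the minimizer.} To identify the minimizer I will use a Lagrange multiplier $\nu$ for the linear constraint. Stationarity gives $\log(y_g/x_g) = \nu$, so $y_g = x_g e^{\nu}$ for all $g\in S$. Imposing $\sum_g y_g = a$ yields $e^\nu = a/\sum_{g'} x_{g'}$, which is exactly the stated scaling.

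The one point that needs care is that the minimizer cannot sit on the boundary $y_g=0$ for some $g\in S$, where the stationarity equation does not apply. I will handle this directly rather than through KKT machinery. The derivative of $y\log(y/x)$ tends to $-\infty$ as $y\downarrow0$. So moving a small mass $\epsilon$ from any coordinate with $y_{g'}>0$ to a coordinate with $y_g=0$ decreases the objective to first order in $\epsilon\log\epsilon$, which contradicts optimality. Hence the minimizer is interior to the simplex on $S$. Since the problem is convex with linear constraints, the stationary point found above is the unique global minimizer.

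This boundary/interior check is the only mildly delicate step. Everything else is routine. Reassembling the coordinates $d=1,\dots,D$ gives the claimed formula for $\Pi(\mathbf x_0)$.
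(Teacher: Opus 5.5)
Your proof is correct and follows the same route as the paper's: strict convexity of the separable generalized-KL objective under affine constraints gives uniqueness, and Lagrangian stationarity, which forces $\log(y_g/x_g)$ to be the same constant for all $g$, combined with the sum constraint yields the scaling. Your treatment of $x_g=0$ and $a=0$, existence by compactness, and the boundary exclusion supply details the paper's proof skips. The boundary check and the compactness argument are not strictly needed, though: the candidate point is strictly positive on the support and satisfies the first-order condition of a convex problem with linear constraints, so it is already a global minimizer.
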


\begin{proof}
The objective is strictly convex in $\mathbf y_0$ on $\{\mathbf y_0\ge 0\}$, and
the constraints are affine, so the minimizer is unique.
Form the Lagrangian with multipliers $\nu^{(d)}$ for the $D$ constraints:
\[
\mathcal L(\mathbf y,\nu)=
\sum_{g,d}\Big(y_g^{(d)}\log\frac{y_g^{(d)}}{x_g^{(d)}} - y_g^{(d)} + x_g^{(d)}\Big)
+\sum_d \nu^{(d)}\Big(\sum_g y_g^{(d)}-a_0^{(d)}\Big).
\]
Stationarity gives $\partial \mathcal L/\partial y_g^{(d)}=\log(y_g^{(d)}/x_g^{(d)})+\nu^{(d)}=0$,
hence $y_g^{(d)}=x_g^{(d)}e^{-\nu^{(d)}}$.
Imposing $\sum_g y_g^{(d)}=a_0^{(d)}$ yields
$e^{-\nu^{(d)}}=a_0^{(d)}/\sum_g x_g^{(d)}$, proving the formula.
\end{proof} 

Since we are then operating on count matrices here we additionally introduce a randomized rounding algorithm which enables us to exactly sample from a count-valued point on this projected set.

\subsection{Conditions for Recoverability}
\label{app:realizable-recoverable}

We give conditions under which aggregate supervision can, in principle, identify
(unit-level) count distributions from aggregate observations.  Throughout we use
the notation of Sec.~\ref{sec:gem-aggregate}: $\mathbf X_0=(X_{10},\dots,X_{G0})\in\mathbb Z_{\ge 0}^G$,
$A_0:=A(\mathbf X_0)$, and in the sum case $A(\mathbf x_0)=\sum_{g=1}^G x_{g0}$.
We fix $(\mathbf x_t,t,z)$ and suppress this conditioning in the notation when convenient.

\subsubsection{Factorial cumulant framework}

For a nonnegative integer-valued random variable $X$, define its probability generating function (pgf)
$G_X(s)=\E[s^X]$ for $s\in[0,1]$, and write the factorial moment generating function (FMGF) as
\[
M_X(t)=\E[(1+t)^X]=G_X(1+t),\qquad t\in(-1,0]\ \text{(and possibly for small }t>0\text{)}.
\]
Whenever $M_X(t)$ is finite on a neighborhood of $t=0$, define the factorial cumulant generating function (FCGF)
\[
C_X(t)=\log M_X(t)=\sum_{k\ge 1}\frac{\kappa_k(X)}{k!}\,t^k,
\]
where $\kappa_k(X)=C_X^{(k)}(0)$ are the factorial cumulants.
The pgf $G_X$ (hence $C_X$) determines the law of $X$ uniquely.

\begin{lemma}[Additivity under (conditional) independence]
\label{lem:factorial-additivity}
If $X$ and $Y$ are independent (or conditionally independent given a $\sigma$-field $\mathcal C$), then
\[
C_{X+Y}(t)=C_X(t)+C_Y(t)
\qquad
\bigl(\text{or } C_{X+Y}(t\mid\mathcal C)=C_X(t\mid\mathcal C)+C_Y(t\mid\mathcal C)\bigr),
\]
on any interval where the FCGFs are finite.
\end{lemma}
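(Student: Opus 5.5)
The plan is to reduce additivity of the factorial cumulant generating functions to multiplicativity of the factorial moment generating functions, and then take logarithms. Fix $t$ in an interval where $M_X(t)$, $M_Y(t)$ and $M_{X+Y}(t)$ are all finite. For $t\in(-1,0]$ finiteness is automatic, since $0\le (1+t)^X\le 1$. For small $t>0$ we assume it, as the statement does.

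\textbf{Step 1 (factorization).} Since $(1+t)^{X+Y}=(1+t)^X(1+t)^Y$ and $X,Y$ are independent, we get
\[
M_{X+Y}(t)=\E\big[(1+t)^X(1+t)^Y\big]=\E\big[(1+t)^X\big]\,\E\big[(1+t)^Y\big]=M_X(t)M_Y(t).
\]
For $t>-1$ the factors $(1+t)^X$ and $(1+t)^Y$ are nonnegative measurable functions of $X$ and $Y$ respectively. Hence the product rule $\E[f(X)g(Y)]=\E[f(X)]\,\E[g(Y)]$ applies by Tonelli, even before integrability is known, and finiteness of the right side gives finiteness of the left.

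\textbf{Step 2 (logarithms).} To take logarithms we need $M_X(t)>0$ and $M_Y(t)>0$. For $t\in(-1,0]$ this holds because $(1+t)^X>0$ pointwise: we have $1+t>0$, and $X$ is a nonnegative integer, so $0^0$ never arises. The same positivity holds for small $t>0$. Then
\[
C_{X+Y}(t)=\log M_X(t)+\log M_Y(t)=C_X(t)+C_Y(t).
\]
Differentiating $k$ times at $0$ also gives $\kappa_k(X+Y)=\kappa_k(X)+\kappa_k(Y)$, where the series converge.

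\textbf{Step 3 (conditional version).} Here I would repeat the argument with $\E[\cdot\mid\mathcal C]$ in place of $\E$. Conditional independence gives
\[
\E\big[f(X)g(Y)\mid\mathcal C\big]=\E\big[f(X)\mid\mathcal C\big]\,\E\big[g(Y)\mid\mathcal C\big]\quad\text{a.s.}
\]
for nonnegative $f,g$, by extending from bounded functions through monotone convergence. Positivity of each conditional expectation holds a.s. because the integrand is strictly positive. So the identity holds almost surely, using regular conditional distributions if a pointwise version is wanted.

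The main obstacle is only bookkeeping. One must ensure finiteness and positivity on the chosen interval before taking logarithms, and in the conditional case one must handle the a.s. qualifiers. The algebra itself is immediate.
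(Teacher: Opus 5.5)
Your proof is correct. The paper states this lemma without any proof, treating it as standard, and your argument is exactly the standard justification it implicitly relies on: factor $M_{X+Y}(t)=\E[(1+t)^X(1+t)^Y]=M_X(t)M_Y(t)$ using independence (Tonelli, since the integrands are nonnegative for $t>-1$), check positivity so the logarithm is defined, and repeat with $\E[\cdot\mid\mathcal C]$ for the conditional case. Your side remark that differentiating at $0$ gives $\kappa_k(X+Y)=\kappa_k(X)+\kappa_k(Y)$ needs finiteness on a two-sided neighborhood of $0$, but that is precisely the condition under which the paper defines $C_X$, so nothing is missing.
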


\subsubsection{Additive identifiability of factorial-cumulant signatures}

Recovering unit-level distributions from the law of a sum is \emph{not automatic}:
distinct unit configurations can yield the same aggregate law (e.g., sums of independent Poissons depend
only on the total rate).  To state correct conditions, we separate a \emph{structural identifiability}
property from regularity.

\begin{definition}[$G$-additive identifiability]
\label{def:additive-identifiability}
Let $\{F(\cdot;\psi):\psi\in\Psi\}$ be a family of real-analytic functions on a neighborhood of $t=0$.
We say the family is \emph{$G$-additively identifiable} if for any parameters
$\psi_1,\dots,\psi_G$ and $\psi_1',\dots,\psi_G'$,
\begin{align*}
    \sum_{g=1}^G F(t;\psi_g)\equiv \sum_{g=1}^G F(t;\psi_g')\quad\text{(as analytic functions near $0$)}\\
\ \Longrightarrow\
\{\psi_1,\dots,\psi_G\}\ \text{and}\ \{\psi_1',\dots,\psi_G'\}\ \text{coincide as multisets}.
\end{align*}
\end{definition}

\begin{lemma}[A simple sufficient condition]
\label{lem:linind-sufficient}
If the family $\{F(\cdot;\psi):\psi\in\Psi\}$ is (finitely) linearly independent as analytic functions,
i.e., any finite linear combination $\sum_{j=1}^m c_j F(\cdot;\psi_j)\equiv 0$ with distinct $\psi_j$
forces $c_j=0$, then it is $G$-additively identifiable for every $G$.
\end{lemma}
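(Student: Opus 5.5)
Suppose we are given two multisets $\{\psi_1,\dots,\psi_G\}$ and $\{\psi_1',\dots,\psi_G'\}$ with $\sum_{g=1}^G F(t;\psi_g)\equiv\sum_{g=1}^G F(t;\psi_g')$ near $t=0$. The plan is to rewrite this identity as a single finite linear combination over distinct parameters and then apply the assumed linear independence.

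First, let $\phi_1,\dots,\phi_m$ be the distinct elements of the union $\{\psi_g\}\cup\{\psi'_g\}$; there are at most $2G$ of them. For each $j$, define the multiplicity difference
\[
c_j := \#\{g:\psi_g=\phi_j\}-\#\{g:\psi'_g=\phi_j\}\in\mathbb Z .
\]
Grouping equal terms in both sums turns the hypothesis into
\[
\sum_{j=1}^m c_j\,F(\cdot;\phi_j)\equiv 0
\]
as analytic functions on the common neighborhood of $0$. Since finite sums of functions analytic near $0$ are analytic on the intersection of their domains, this is a legitimate identity of analytic functions.

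Second, the $\phi_j$ are distinct, so finite linear independence forces $c_j=0$ for every $j$. This says each $\phi_j$ occurs the same number of times in both lists, so the two multisets coincide. This is exactly $G$-additive identifiability (Definition~\ref{def:additive-identifiability}). Nothing in the argument depends on $G$, so the conclusion holds for every $G$. Note that the argument even allows lists of different lengths, although the definition only needs equal length $G$.

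The only point needing care is bookkeeping: we must merge repeated parameters before invoking independence, because the hypothesis of the lemma applies only to distinct $\psi_j$. We must also make sure the identity holds on one common neighborhood of $0$, which we can take to be the intersection of the finitely many domains involved.
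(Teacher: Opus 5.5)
Your proposal is correct and matches the paper's proof: both merge repeated parameters into a single integer linear combination $\sum_j c_j F(\cdot;\tilde\psi_j)\equiv 0$ over distinct parameters, with $c_j$ the multiplicity differences. Both then use linear independence to force $c_j=0$, which gives equal multiplicities. Your added care about the common neighborhood of $0$ and the remark on unequal list lengths are correct refinements that the paper leaves implicit.
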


\begin{proof}
If $\sum_{g=1}^G F(\cdot;\psi_g)\equiv \sum_{g=1}^G F(\cdot;\psi_g')$, bring all terms to one side and group
identical parameters. This yields a finite linear combination $\sum_{j=1}^m c_j F(\cdot;\tilde\psi_j)\equiv 0$
with integer coefficients $c_j\in\mathbb Z$. Linear independence forces all $c_j=0$, hence matching multiplicities,
i.e., the multisets coincide.
\end{proof}

For $X\sim\mathrm{Poi}(\lambda)$ one has $C_X(t)=\lambda t$, so the family is \emph{not} additively identifiable:
$\lambda_1 t + \lambda_2 t = (\lambda_1+\lambda_2)t$ shows only the total rate is recoverable from the sum.
Thus recoverability requires families whose FCGFs contain richer (nonlinear) signatures than a single linear term.

\subsubsection{Recoverability from aggregate laws}

We now give a clean identifiability statement for the \emph{sum} aggregate under conditional independence.
This captures the mathematical core of when deconvolution from aggregates is possible.

\begin{theorem}[Recoverability up to permutation from the sum]
\label{thm:recoverability-sum}
Fix $(\mathbf x_t,t,z)$ and suppose that conditional on this information the units $X_{10},\dots,X_{G0}$ are independent and
\[
X_{g0}\mid(\mathbf x_t,t,z)\sim P_{\psi_g},\qquad g=1,\dots,G,
\]
where each $P_{\psi}$ is a count distribution whose FCGF equals $F(\cdot;\psi)$ on a neighborhood of $0$.
Assume $\{F(\cdot;\psi):\psi\in\Psi\}$ is $G$-additively identifiable (Def.~\ref{def:additive-identifiability}).
Then the conditional law of the sum
\[
A_0=\sum_{g=1}^G X_{g0}
\]
uniquely determines the multiset $\{\psi_1,\dots,\psi_G\}$ (hence the multiset of unit laws $\{P_{\psi_g}\}$),
up to permutation of units.
\end{theorem}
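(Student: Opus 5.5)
The argument moves from the law of the sum to its factorial cumulant generating function and then applies additive identifiability directly. Two configurations $\{\psi_g\}$ and $\{\psi_g'\}$ are compared, each satisfying the hypotheses and inducing the same conditional law of $A_0=\sum_g X_{g0}$ given $(\mathbf x_t,t,z)$. The goal is to show the multisets coincide.

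\emph{Step 1: equal laws give equal FCGFs.} Equal laws of $A_0$ mean equal probability generating functions $G_{A_0}(s)=\E[s^{A_0}]$ on $[0,1]$. Hence the FMGFs $M_{A_0}(t)=G_{A_0}(1+t)$ agree for $t\in(-1,0]$. The FMGF is positive there, so the FCGFs $C_{A_0}(t)=\log M_{A_0}(t)$ agree on $(-1,0]$ as well.

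\emph{Step 2: the FCGF of the sum is additive.} By conditional independence, apply Lemma~\ref{lem:factorial-additivity} inductively over the $G$ units. This gives
\[
C_{A_0}(t)=\sum_{g=1}^G F(t;\psi_g) \quad\text{and}\quad C_{A_0}(t)=\sum_{g=1}^G F(t;\psi_g')
\]
on the common neighborhood of $0$ where all the $F(\cdot;\psi)$ are finite. Combining with Step 1, $\sum_g F(t;\psi_g)=\sum_g F(t;\psi_g')$ on a one-sided interval $(-\epsilon,0]$.

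\emph{Step 3: from a one-sided interval to an identity of analytic functions.} Both sides are finite sums of functions that are real-analytic on a neighborhood of $0$. Their difference is therefore analytic near $0$ and vanishes on an interval with nonempty interior. By the identity theorem it vanishes identically near $0$, so the two sums agree as analytic functions, which is exactly the hypothesis of Definition~\ref{def:additive-identifiability}.

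\emph{Step 4: conclude.} $G$-additive identifiability yields equality of the multisets $\{\psi_g\}$ and $\{\psi_g'\}$. Since $\psi\mapsto P_\psi$ is a function, the multisets of unit laws also agree, and the labelling of units is recovered only up to permutation.

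\emph{Main obstacle.} Step 3 is the delicate point. The difficulty is ensuring that equality of the aggregate laws really forces equality on a set to which analyticity applies. The FMGF is guaranteed finite only for $t\le 0$, so the plan relies on Step 1 equality on $(-1,0]$, the assumed analyticity of each $F(\cdot;\psi)$ on a two-sided neighborhood, and the identity theorem. Two smaller things must be handled along the way. First, $M_{A_0}$ is nonzero on the relevant interval, so the logarithm is well defined; this holds because $M_{A_0}>0$ for $t>-1$. Second, a single neighborhood must serve for all $2G$ functions, which is possible because a finite intersection of neighborhoods of $0$ is still a neighborhood of $0$.
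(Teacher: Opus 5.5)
Your proposal is correct and follows the same route as the paper: additivity of the FCGF over conditionally independent units (Lemma~\ref{lem:factorial-additivity}), equality of the pgf and hence the FCGF of $A_0$ on an interval, passage to an identity of analytic functions, and then $G$-additive identifiability. Your Step~3 spells out the paper's one-line ``equality on an interval, hence equality of the analytic function''; it could also be skipped, because $M_{A_0}(t)=\E[(1+t)^{A_0}]$ is determined by the law of $A_0$ for $t>0$ as well, and by independence it is finite near $0$ whenever each $M_{X_{g0}}$ is.
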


\begin{proof}
By Lemma~\ref{lem:factorial-additivity},
\[
C_{A_0}(t\mid \mathbf x_t,t,z)=\sum_{g=1}^G C_{X_{g0}}(t\mid \mathbf x_t,t,z)=\sum_{g=1}^G F(t;\psi_g).
\]
Equality in distribution of $A_0$ implies equality of its pgf (hence FCGF) on an interval, hence equality of the analytic
function $C_{A_0}(\cdot)$. By $G$-additive identifiability, the multiset of parameters is unique.
\end{proof}

\paragraph{How covariates $z$ make learning possible.}
Theorem~\ref{thm:recoverability-sum} is the strongest statement one can make from a single sum: because $A_0$ is symmetric in the
units, one can in general identify unit distributions only up to permutation.  Side information $z$ enables \emph{learnability}
once it imposes additional structure that breaks this symmetry across groups. Two sufficient structures are:

\begin{enumerate}
\item \textbf{Finite labeled types (composition varies across groups).}
Suppose each unit has an observed type label $z_g\in\{1,\dots,K\}$ and
$X_{g0}\mid z_g=k \sim P_{\psi_k}$, so all units of a given type share parameters.
Then for a group with type counts $(n_1,\dots,n_K)$,
\[
C_{A_0}(t\mid n_1,\dots,n_K)=\sum_{k=1}^K n_k\,F(t;\psi_k).
\]
Across multiple groups whose compositions $(n_1,\dots,n_K)$ vary sufficiently (a full-rank design),
the shared parameters $(\psi_1,\dots,\psi_K)$ become identifiable even though within any single group the sum is symmetric.

\item \textbf{Low-dimensional parameterization by covariates.}
More generally, assume $\psi_g=\psi_\eta(z_g)$ for a \emph{shared} low-dimensional parameter $\eta$ (e.g., a regression model, a neural net,
or another identifiable function class). Then the aggregate FCGF
$\sum_g F(\cdot;\psi_\eta(z_g))$ varies with $(z_1,\dots,z_G)$; if the covariates span the function class
and the induced map $\eta\mapsto \mathrm{Law}(A_0\mid z)$ is injective (locally, in our setting),
then $\eta$ is learnable from aggregate supervision.
\end{enumerate}

\subsection{Large-$G$ limits: Gaussian collapse of aggregate information}
\label{app:largeG-nonident}

We explain why learning unit-level structure from aggregates becomes statistically ill-conditioned
as the number of units $G$ grows.

Fix $(\mathbf x_t,t,z)$.  Let $\mathbf X_0=(X_{10},\dots,X_{G0})\in\mathbb Z_{\ge 0}^G$
be conditionally independent given $(\mathbf x_t,t,z)$, with
\[
X_{g0}\mid(\mathbf x_t,t,z)\sim P_g(\cdot\mid \mathbf x_t,t,z),
\qquad g=1,\dots,G,
\]
(e.g., $P_g(\cdot\mid \mathbf x_t,t,z)=q_\theta(\cdot\mid \mathbf x_t,t,z_g)$ under our model).
Let the aggregate random variable be
\[
A_0 := A(\mathbf X_0)\in\mathbb Z,
\qquad\text{and in particular for the sum case } A(\mathbf x_0)=\sum_{g=1}^G x_{g0}.
\]
Write
\[
\mu_G := \E[A_0\mid \mathbf x_t,t,z],\qquad
\sigma_G^2 := \Var(A_0\mid \mathbf x_t,t,z)
          = \sum_{g=1}^G \Var(X_{g0}\mid \mathbf x_t,t,z).
\]

Assume $\sigma_G^2\to\infty$ and a Lindeberg (or Lyapunov) condition holds conditional on
$(\mathbf x_t,t,z)$.  Then
\[
\frac{A_0-\mu_G}{\sigma_G}\Rightarrow \mathcal N(0,1)\qquad (G\to\infty).
\]
Moreover, if $\sum_{g=1}^G \E[|X_{g0}-\E X_{g0}|^3\mid \mathbf x_t,t,z] <\infty$, a Berry--Esseen bound yields
\[
\sup_{x\in\mathbb R}
\left|
\Pr\!\left(\frac{A_0-\mu_G}{\sigma_G}\le x \,\middle|\, \mathbf x_t,t,z\right)-\Phi(x)
\right|
\;\le\;
C\,\frac{\sum_{g=1}^G \E[|X_{g0}-\E X_{g0}|^3\mid \mathbf x_t,t,z]}{\sigma_G^3},
\]
for a universal constant $C$.

Let $\kappa_k(\cdot)$ denote cumulants (ordinary cumulants, or factorial cumulants for count data).
Under conditional independence, cumulants add:
\[
\kappa_k(A_0\mid \mathbf x_t,t,z) \;=\; \sum_{g=1}^G \kappa_k(X_{g0}\mid \mathbf x_t,t,z).
\]
If $\sup_g |\kappa_k(X_{g0}\mid \mathbf x_t,t,z)|<\infty$ and $\sigma_G^2=\Theta(G)$, then for all $k\ge 3$,
\[
\frac{\kappa_k(A_0\mid \mathbf x_t,t,z)}{\sigma_G^k}
\;=\;
O\!\big(G^{1-k/2}\big)
\;=\;
O\!\big(G^{-(k-2)/2}\big)
\;\xrightarrow[G\to\infty]{}\;0.
\]
Thus, after centering and scaling, the aggregate law is asymptotically Gaussian and is
effectively determined by $(\mu_G,\sigma_G^2)$; information about unit heterogeneity enters only through
standardized cumulants of size $G^{-(k-2)/2}$.

\paragraph{Implication for deconvolution from aggregates.}
Aggregation is a many-to-one map: there are typically many distinct collections of unit-level laws
$\{P_g\}_{g=1}^G$ that induce the same (or nearly the same) aggregate law of $A_0$.
The CLT strengthens this ambiguity for large $G$: once $A_0$ is close to Gaussian,
distinguishing different unit-level configurations using aggregate observations alone
requires resolving the small $G^{-(k-2)/2}$ corrections beyond mean and variance.
Equivalently, any local ``recoverability modulus'' that upper-bounds unit-level error by aggregate-level error
must deteriorate with $G$ unless additional structure is imposed (e.g., multiple independent aggregates,
parametric tying/low-dimensional structure across units, or partial unit-level supervision).

%% file: tex/proofs/algorithms.tex

\paragraph{Group rescaling.}
Algorithm~\ref{alg:group-rescale} rescales item-level nonnegative values
$\{x_b\}_{b=1}^B$ so that each group $G_g$ attains a prescribed aggregate $C_g$.
The procedure is linear-time in the number of items and linear in memory in the number of groups:
\[
T = O(B), \qquad M_{\rm extra} = O(G),
\]
and is applied in parallel to each feature dimension.

\begin{algorithm}[H]
\caption{Group Rescaling to Match Aggregates}
\label{alg:group-rescale}
\begin{algorithmic}[1]
\Require item values $x_b \ge 0$ for $b=1,\dots,B$; groups $G_1,\dots,G_G$; targets $C_g \ge 0$
\Ensure $y_b \ge 0$ with $\sum_{b\in G_g} y_b = C_g$ for all $g$
\For{$g = 1,\dots,G$}
  \State $S_g \gets \sum_{b\in G_g} x_b$
  \If{$S_g > 0$}
    \For{$b\in G_g$}
      \State $y_b \gets x_b \, C_g / S_g$    \Comment{proportional rescaling}
    \EndFor
  \Else
    \For{$b\in G_g$}
      \State $y_b \gets C_g / |G_g|$          \Comment{uniform split}
    \EndFor
  \EndIf
\EndFor
\State \Return $\{y_b\}$
\end{algorithmic}
\end{algorithm}

\vspace{1em}

\paragraph{Randomized rounding.}
Algorithm~\ref{alg:rand-round} independently rounds a real value to the nearest
integers, preserving the value in expectation.  
It runs in constant time and memory per entry,
so applying it over $B$ items has
\[
T = O(B), \qquad M_{\rm extra} = O(1).
\]
\vspace{-0.5em}

\begin{algorithm}[H]
\caption{Randomized Rounding (scalar form)}
\label{alg:rand-round}
\begin{algorithmic}[1]
\Require real value $x \ge 0$
\Ensure integer $y \in \mathbb{Z}_{\ge 0}$
\State $a \gets \lfloor x \rfloor$
\State $r \gets x - a$
\State sample $U \sim \mathrm{Unif}[0,1]$
\If{$U < r$}
  \State $y \gets a + 1$
\Else
  \State $y \gets a$
\EndIf
\State \Return $y$
\end{algorithmic}
\end{algorithm}

\vspace{1em}

\paragraph{Groupwise exact rounding.}
Algorithm~\ref{alg:group-rand-round} converts rescaled real values
$\{x_b\}$ to integers $\{y_b\}$ while {\em exactly} preserving each group sum:
$\sum_{b\in G_g} y_b = C_g$.  
Each $y_b$ differs from $x_b$ by at most $1$.  
The only expensive step is weighted sampling without replacement inside each group.
With Gumbel--Top-$k$ sampling the worst-case complexity is
\[
T = O\!\bigg(\sum_{g=1}^G |G_g| \log |G_g|\bigg),
\qquad
M_{\rm extra} = O(B),
\]
and the algorithm is again applied independently over coordinates.
\vspace{-0.5em}

\begin{algorithm}[H]
\caption{Groupwise Randomized Rounding with Exact Aggregates (scalar form)}
\label{alg:group-rand-round}
\begin{algorithmic}[1]
\Require rescaled $x_b \ge 0$, groups $G_1,\dots,G_G$, integer targets $C_g$
\Ensure integers $y_b \ge 0$ with $\sum_{b\in G_g} y_b = C_g$ and $|y_b - x_b| \le 1$
\For{$g = 1,\dots,G$}
  \Comment{work inside group $G_g$}
  \For{$b\in G_g$}
    \State $a_b \gets \lfloor x_b \rfloor$
    \State $r_b \gets x_b - a_b$
  \EndFor
  \State $A_g \gets \sum_{b\in G_g} a_b$
  \State $S_g \gets C_g - A_g$    \Comment{\# of increments required}
  \If{$S_g = 0$}
    \For{$b\in G_g$}
      \State $y_b \gets a_b$
    \EndFor
  \Else
    \State sample a subset $S \subseteq G_g$ of size $S_g$ \\
           \hspace{2.2em} \textbf{without replacement} with weights $\propto r_b$
    \For{$b\in G_g$}
      \If{$b\in S$}
        \State $y_b \gets a_b + 1$
      \Else
        \State $y_b \gets a_b$
      \EndIf
    \EndFor
  \EndIf
\EndFor
\State \Return $\{y_b\}$
\end{algorithmic}
\end{algorithm}

%% file: tex/synthetic.tex
\section{Synthetic distributions}

All synthetic tasks use the same base architecture with a 4-layer MLP with $128$ dimensional hidden layers. We scale the inputs and outputs in dimension, so for example the DFM and CE-CB have $d \times 256$ dimensional outputs (since we clip all datasets to use a range of 256 to make for easy tokenization). The energy score models take inputs in $d + \text{noise\_dim}$ and we use $\text{noise\_dim}=100$ throughout. We ran all experiments with Adam using both $lr=1e-3,2e-4$ and present results for the best performing learning rate for each method. We use a cosine warmup for the learning rate for 100 steps. For all experiments we use gradient norm clipping to size 1, batch size 256, and train for 500 epochs. For the energy score models, we use exponential model averaging, which is crucial to good performance. Full details are available in the codebase.

For the flow matching we use $\sigma=0.1$ following best practices (we tested larger $\sigma$ but saw large degradations in performance). For the bessel sampler we use $\sqrt{\Lambda_+\Lambda_-}=32$.

\subsection{Discrete 8-Gaussians to 2-Moons}
\label{app:discrete-moons}

\subsubsection{Dataset}

For qualitative evaluation, we adapt the classic continuous 
``8-Gaussians to 2-Moons'' task into a fully discrete, integer-valued 
setting suitable for count-based flow matching. Each dataset consists of 
$50{,}000$ paired samples $(x_0, x_1) \in \mathbb{Z}^2$, constructed 
as follows.

\paragraph{Source distribution ($x_0$).}  
We generate samples from the standard two-moons dataset in $\mathbb{R}^2$ 
using \texttt{make\_moons} with noise level $\texttt{noise}=0.1$. The moons 
are shifted to be approximately centered at the origin by subtracting 
$(0.5, 0.25)$.

\paragraph{Target distribution ($x_1$).}  
We construct an 8-component Gaussian mixture arranged evenly on a circle 
of radius $2.0$ in $\mathbb{R}^2$. Each component has isotropic Gaussian 
noise with variance matching $\texttt{noise}=0.1$. A sample is generated 
by first selecting one of the 8 components uniformly at random, then 
drawing from the corresponding Gaussian.

\paragraph{Integerization.}  
Both source and target samples are mapped to the integer lattice by
\[
x \;\mapsto\; \mathrm{round}\!\left(\mathrm{clip}\!\left(
x \cdot \texttt{scale} + \texttt{offset},\ 
\texttt{min\_value},\ \texttt{value\_range}-1 \right)\right),
\]
with parameters $\texttt{scale}=30.0$, $\texttt{offset}=80.0$, 
$\texttt{min\_value}=0$, and $\texttt{value\_range}=196$. This procedure 
ensures that all outputs fall in the discrete vocabulary 
$\{0,1,\ldots,195\}^2$, but the scales are chosen so that essentially no values are actually clipped.

\subsubsection{Results}

We present a visualization of the learned trajectories in Fig. \ref{fig:trajectory} and the full details in Table \ref{tab:discrete_moons}. Count bridges achieve uniformly the best performance using the distributional losses, that is the cross entropy or energy scores with the energy score uniformly best. 

\begin{table}[h!]
\centering
\caption{Discrete Moons Results: Noise → Two Moons}
\label{tab:discrete_moons}
\begin{tabular}{lccc}
\toprule
Method & MMD & $W_2$ & Energy \\
\midrule
CFM & 0.065 ± 0.019 & 0.049 ± 0.008 & 0.874 ± 0.246 \\
DFM & 0.010 ± 0.002 & 0.010 ± 0.002 & 0.035 ± 0.014 \\
Count Bridge (CE) & 0.0065 ± 0.0023 & 0.0080 ± 0.0009 & 0.026 ± 0.004 \\
Count Bridge (ES) & \textbf{0.0044 ± 0.0018} & \textbf{0.0052 ± 0.0007} & \textbf{0.0098 ± 0.0029} \\
Count Bridge (MSE) & 0.030 ± 0.000 & 0.033 ± 0.001 & 0.366 ± 0.015 \\
\bottomrule
\end{tabular}
\end{table}

We also run the Count Bridge across different noise levels, here we actually find that our default of $\lambda_+=\lambda_-=32$ is not optimized, so all results can be considered lower bounds on our performance. 

\begin{table}[h!]
\centering
\caption{Count Bridge (Energy Score) Results Across Different $\lambda_+=\lambda_-$ Values}
\label{tab:skellam_energy_lam_p}
\begin{tabular}{lccc}
\toprule
$\lambda_+=\lambda_-$ & MMD & $W_2$ & Energy \\
\midrule
0 & \textbf{0.0038 ± 0.0012} & 0.0046 ± 0.0002 & \textbf{0.0075 ± 0.0011} \\
8 & 0.0039 ± 0.0015 & \textbf{0.0045 ± 0.0006} & 0.0080 ± 0.0022 \\
16 & 0.0049 ± 0.0003 & 0.0049 ± 0.0003 & 0.0095 ± 0.0004 \\
32 & 0.0052 ± 0.0024 & 0.0055 ± 0.0015 & 0.011 ± 0.006 \\
256 & 0.0064 ± 0.0020 & 0.0063 ± 0.0009 & 0.015 ± 0.004 \\
\bottomrule
\end{tabular}
\end{table}

\subsubsection{Noise to 2-Moons for Diffusion Comparisons}

Here we compare against a standard Gaussian DDIM model \cite{song2020score} and Discrete Diffusion as in \cite{shi2024simplified}. Since these models go from noise to a target distribution, we cannot do the 8-Gaussians to 2-Moons task, so we simply target the 2-Moons. This makes the task substantially easier. For Count Bridges we use the same results from the previous table (the more difficult task, but with comparable scores at the endpoint). 

\begin{table}[h!]
\centering
\caption{Discrete Moons Results: Eight Gaussians → Two Moons}
\label{tab:discrete_moons_combined}
\begin{tabular}{lccc}
\toprule
Method & MMD & $W_2$ & Energy \\
\midrule
Count Bridge (ES) & \textbf{0.0044 ± 0.0018} & \textbf{0.0052 ± 0.0007} & \textbf{0.0098 ± 0.0029} \\
Gaussian Diffusion & 0.024 $\pm$ 0.009 & 0.017 $\pm$ 0.006 & 0.118 $\pm$ 0.064 \\
Discrete Diffusion & 0.017 $\pm$ 0.010 & 0.013 $\pm$ 0.006 & 0.072 $\pm$ 0.055 \\
\bottomrule
\end{tabular}
\end{table}

\subsection{Low-Rank Gaussian Mixture}
\label{app:lowrank-gaussian}

\subsubsection{Dataset}

For synthetic evaluation, we use a pre-sampled integer-valued Gaussian mixture dataset that scales with the ambient dimension $d$ while fixing the latent rank at $r=3$. Each dataset consists of $50{,}000$ paired samples $(x_0, x_1) \in \mathbb{Z}^d$ generated according to the following procedure:

\paragraph{Mixture construction.} 
We define a $k=5$ component Gaussian mixture in latent space $\mathbb{R}^r$ with $r=3$ (we hold these parameters constant as we scale in $d$):
\begin{itemize}
    \item \textbf{Means.} Component means are drawn from $\mathcal{N}(0, \sigma^2 I)$ with scale $\sigma = \texttt{mean\_scale}/\sqrt{r}$, and shifted to lie near the center of the integer range. We set $\texttt{mean\_scale} = 20.0$.  
    \item \textbf{Covariances.} Each covariance is constructed by sampling eigenvalues from an exponential distribution with scale $\texttt{cov\_scale} = 10.0$, clamped below $\texttt{min\_eigenvalue} = 0.1$, and conjugating by a random orthogonal matrix.  
    \item \textbf{Mixture weights.} Weights are drawn from a Dirichlet$(1,\ldots,1)$ prior, yielding a random simplex vector.  
\end{itemize}

\paragraph{Projection to $\mathbb{R}^d$.} 
Latent samples $z \in \mathbb{R}^3$ are mapped to the ambient space via a random projection matrix $P \in \mathbb{R}^{d \times r}$ with entries scaled by $\texttt{projection\_scale}/\sqrt{r}$, where $\texttt{projection\_scale}=1.0$. To avoid degeneracy, isotropic Gaussian noise $\epsilon \sim \mathcal{N}(0, \texttt{noise\_scale}^2 I_d)$ with $\texttt{noise\_scale}=1.0$ is added after projection:
\[
y = P z + \epsilon, \qquad z \sim \text{MoG}_r, \ \epsilon \sim \mathcal{N}(0, I_d).
\]

\paragraph{Integerization.} 
Projected samples $y \in \mathbb{R}^d$ are rounded to the nearest integer and reflected into the bounded range $[\texttt{min\_value}, \texttt{value\_range}-1] = [0, 255]$ to ensure validity of DFM. 

\paragraph{Scaling in $d$.} 
The intrinsic latent structure is fixed at $r=3$, while the output dimension $d$ is varied across experiments (e.g.\ $d=5, 16, 32, 128, 256, 512$). This construction produces datasets with constant intrinsic complexity but increasing ambient dimension, providing a natural test of how models scale in $d$.

\subsubsection{Results}

The central scaling results are presented visually in Fig. \ref{fig:scaling}. Here we also present the full experimental details in Table \ref{tab:scaling_performance}.

\begin{table}[h!]
\tiny
\centering
\caption{Performance Comparison Across Dimensions and Methods}
\label{tab:scaling_performance}
\begin{tabular}{lllccc}
\toprule
Dim & Method & NFE & MMD & $W_2$ & EMD \\
\midrule
\multirow{9}{*}{4} & \multirow{3}{*}{CFM} & 8 & 0.027 ± 0.014 & 0.019 ± 0.002 & 0.716 ± 0.420 \\
 &  & 32 & 0.026 ± 0.017 & 0.015 ± 0.004 & 0.584 ± 0.460 \\
 &  & 128 & 0.026 ± 0.018 & 0.015 ± 0.004 & 0.565 ± 0.469 \\
\cline{2-6}
 & \multirow{3}{*}{DFM} & 8 & 0.025 ± 0.004 & 0.011 ± 0.001 & 0.245 ± 0.053 \\
 &  & 32 & 0.035 ± 0.003 & 0.014 ± 0.001 & 0.458 ± 0.078 \\
 &  & 128 & 0.046 ± 0.005 & 0.018 ± 0.002 & 0.759 ± 0.144 \\
\cline{2-6}
 & \multirow{3}{*}{Count Bridge} & 8 & \textbf{0.0053 ± 0.0007} & \textbf{0.0040 ± 0.0004} & \textbf{0.020 ± 0.004} \\
 &  & 32 & \textbf{0.0054 ± 0.0010} & \textbf{0.0042 ± 0.0002} & \textbf{0.023 ± 0.005} \\
 &  & 128 & \textbf{0.0098 ± 0.0008} & \textbf{0.0058 ± 0.0003} & \textbf{0.055 ± 0.008} \\
\hline
\multirow{9}{*}{8} & \multirow{3}{*}{CFM} & 8 & 0.041 ± 0.013 & 0.025 ± 0.004 & 1.05 ± 0.18 \\
 &  & 32 & 0.039 ± 0.012 & 0.014 ± 0.004 & 0.456 ± 0.147 \\
 &  & 128 & 0.040 ± 0.010 & 0.014 ± 0.003 & 0.421 ± 0.128 \\
\cline{2-6}
 & \multirow{3}{*}{DFM} & 8 & 0.026 ± 0.007 & 0.011 ± 0.001 & 0.204 ± 0.067 \\
 &  & 32 & 0.034 ± 0.009 & 0.011 ± 0.003 & 0.317 ± 0.142 \\
 &  & 128 & 0.042 ± 0.011 & 0.012 ± 0.003 & 0.497 ± 0.228 \\
\cline{2-6}
 & \multirow{3}{*}{Count Bridge} & 8 & \textbf{0.0036 ± 0.0007} & \textbf{0.0023 ± 0.0001} & \textbf{0.0068 ± 0.0024} \\
 &  & 32 & \textbf{0.0038 ± 0.0011} & \textbf{0.0026 ± 0.0006} & \textbf{0.0077 ± 0.0028} \\
 &  & 128 & \textbf{0.0050 ± 0.0015} & \textbf{0.0029 ± 0.0003} & \textbf{0.012 ± 0.002} \\
\hline
\multirow{9}{*}{16} & \multirow{3}{*}{CFM} & 8 & 0.066 ± 0.011 & 0.028 ± 0.001 & 2.08 ± 0.54 \\
 &  & 32 & 0.053 ± 0.011 & 0.017 ± 0.001 & 0.788 ± 0.211 \\
 &  & 128 & 0.052 ± 0.011 & 0.015 ± 0.001 & 0.647 ± 0.163 \\
\cline{2-6}
 & \multirow{3}{*}{DFM} & 8 & 0.078 ± 0.001 & 0.017 ± 0.000 & 1.20 ± 0.06 \\
 &  & 32 & 0.100 ± 0.005 & 0.022 ± 0.002 & 1.92 ± 0.28 \\
 &  & 128 & 0.118 ± 0.017 & 0.025 ± 0.004 & 2.72 ± 0.86 \\
\cline{2-6}
 & \multirow{3}{*}{Count Bridge} & 8 & \textbf{0.0067 ± 0.0014} & \textbf{0.0035 ± 0.0003} & \textbf{0.025 ± 0.007} \\
 &  & 32 & \textbf{0.011 ± 0.001} & \textbf{0.0045 ± 0.0004} & \textbf{0.048 ± 0.007} \\
 &  & 128 & \textbf{0.017 ± 0.001} & \textbf{0.0057 ± 0.0004} & \textbf{0.090 ± 0.013} \\
\hline
\multirow{9}{*}{32} & \multirow{3}{*}{CFM} & 8 & 0.145 ± 0.024 & 0.030 ± 0.001 & 4.63 ± 1.28 \\
 &  & 32 & 0.131 ± 0.043 & 0.026 ± 0.005 & 3.14 ± 1.72 \\
 &  & 128 & 0.131 ± 0.052 & 0.022 ± 0.006 & 3.09 ± 1.97 \\
\cline{2-6}
 & \multirow{3}{*}{DFM} & 8 & 0.079 ± 0.027 & 0.016 ± 0.008 & 1.23 ± 0.76 \\
 &  & 32 & 0.089 ± 0.023 & 0.017 ± 0.006 & 1.55 ± 0.85 \\
 &  & 128 & 0.100 ± 0.027 & 0.018 ± 0.007 & 1.99 ± 1.10 \\
\cline{2-6}
 & \multirow{3}{*}{Count Bridge} & 8 & \textbf{0.0083 ± 0.0008} & \textbf{0.0024 ± 0.0003} & \textbf{0.021 ± 0.002} \\
 &  & 32 & \textbf{0.010 ± 0.002} & \textbf{0.0031 ± 0.0006} & \textbf{0.029 ± 0.008} \\
 &  & 128 & \textbf{0.010 ± 0.002} & \textbf{0.0034 ± 0.0007} & \textbf{0.034 ± 0.007} \\
\hline
\multirow{9}{*}{64} & \multirow{3}{*}{CFM} & 8 & 0.296 ± 0.077 & 0.042 ± 0.008 & 13.43 ± 6.74 \\
 &  & 32 & 0.313 ± 0.099 & 0.046 ± 0.014 & 16.07 ± 9.93 \\
 &  & 128 & 0.326 ± 0.107 & 0.049 ± 0.017 & 17.94 ± 11.55 \\
\cline{2-6}
 & \multirow{3}{*}{DFM} & 8 & 0.105 ± 0.033 & 0.022 ± 0.007 & 1.71 ± 1.07 \\
 &  & 32 & 0.126 ± 0.039 & 0.020 ± 0.005 & 2.57 ± 1.58 \\
 &  & 128 & 0.147 ± 0.048 & 0.022 ± 0.006 & 3.59 ± 2.20 \\
\cline{2-6}
 & \multirow{3}{*}{Count Bridge} & 8 & \textbf{0.020 ± 0.004} & \textbf{0.0051 ± 0.0005} & \textbf{0.072 ± 0.019} \\
 &  & 32 & \textbf{0.027 ± 0.002} & \textbf{0.0061 ± 0.0002} & \textbf{0.112 ± 0.014} \\
 &  & 128 & \textbf{0.029 ± 0.001} & \textbf{0.0065 ± 0.0005} & \textbf{0.138 ± 0.018} \\
\hline
\multirow{9}{*}{128} & \multirow{3}{*}{CFM} & 8 & 0.335 ± 0.009 & 0.038 ± 0.003 & 12.89 ± 1.09 \\
 &  & 32 & 0.276 ± 0.034 & 0.033 ± 0.008 & 10.59 ± 3.51 \\
 &  & 128 & 0.260 ± 0.048 & 0.032 ± 0.008 & 10.55 ± 4.74 \\
\cline{2-6}
 & \multirow{3}{*}{DFM} & 8 & 0.205 ± 0.036 & 0.042 ± 0.005 & 6.66 ± 2.47 \\
 &  & 32 & 0.236 ± 0.031 & 0.043 ± 0.005 & 9.06 ± 2.63 \\
 &  & 128 & 0.259 ± 0.028 & 0.050 ± 0.011 & 10.90 ± 2.92 \\
\cline{2-6}
 & \multirow{3}{*}{Count Bridge} & 8 & \textbf{0.128 ± 0.066} & \textbf{0.014 ± 0.006} & \textbf{3.30 ± 2.34} \\
 &  & 32 & \textbf{0.140 ± 0.075} & \textbf{0.014 ± 0.006} & \textbf{3.89 ± 2.97} \\
 &  & 128 & \textbf{0.151 ± 0.082} & \textbf{0.016 ± 0.007} & \textbf{4.55 ± 3.65} \\
\hline
\multirow{9}{*}{256} & \multirow{3}{*}{CFM} & 8 & 0.461 ± 0.007 & 0.049 ± 0.007 & 28.45 ± 4.63 \\
 &  & 32 & 0.402 ± 0.049 & 0.047 ± 0.006 & 28.95 ± 10.71 \\
 &  & 128 & 0.390 ± 0.069 & 0.045 ± 0.012 & 30.57 ± 13.24 \\
\cline{2-6}
 & \multirow{3}{*}{DFM} & 8 & 0.216 ± 0.049 & 0.029 ± 0.008 & 9.75 ± 5.07 \\
 &  & 32 & 0.228 ± 0.045 & 0.033 ± 0.008 & 12.63 ± 4.81 \\
 &  & 128 & 0.255 ± 0.044 & 0.039 ± 0.009 & 15.80 ± 4.96 \\
\cline{2-6}
 & \multirow{3}{*}{Count Bridge} & 8 & \textbf{0.087 ± 0.045} & \textbf{0.0093 ± 0.0022} & \textbf{1.58 ± 1.28} \\
 &  & 32 & \textbf{0.092 ± 0.044} & \textbf{0.0099 ± 0.0021} & \textbf{1.68 ± 1.30} \\
 &  & 128 & \textbf{0.105 ± 0.039} & \textbf{0.012 ± 0.001} & \textbf{1.98 ± 1.26} \\
\hline
\multirow{9}{*}{512} & \multirow{3}{*}{CFM} & 8 & 0.569 ± 0.055 & 0.051 ± 0.006 & 49.32 ± 7.46 \\
 &  & 32 & 0.471 ± 0.046 & 0.049 ± 0.005 & 50.69 ± 11.35 \\
 &  & 128 & 0.438 ± 0.034 & 0.048 ± 0.005 & 53.70 ± 14.23 \\
\cline{2-6}
 & \multirow{3}{*}{DFM} & 8 & 0.261 ± 0.069 & 0.042 ± 0.015 & 30.49 ± 21.88 \\
 &  & 32 & 0.288 ± 0.099 & 0.050 ± 0.019 & 44.53 ± 29.76 \\
 &  & 128 & 0.319 ± 0.112 & 0.058 ± 0.022 & 55.03 ± 35.35 \\
\cline{2-6}
 & \multirow{3}{*}{Count Bridge} & 8 & \textbf{0.081 ± 0.028} & \textbf{0.010 ± 0.002} & \textbf{1.46 ± 0.73} \\
 &  & 32 & \textbf{0.085 ± 0.026} & \textbf{0.010 ± 0.001} & \textbf{1.79 ± 0.89} \\
 &  & 128 & \textbf{0.113 ± 0.029} & \textbf{0.016 ± 0.003} & \textbf{3.53 ± 2.27} \\
\bottomrule
\end{tabular}
\end{table}

\subsection{Deconvolution Gaussian Mixture Dataset}
\label{app:deconv-gmm-config}

We extend the low-rank Gaussian mixture task (Appendix~\ref{app:lowrank-gaussian}) 
to evaluate deconvolution capabilities under controlled conditions. Each observation 
is formed by aggregating a group of $G$ unit-level samples into a single count vector. 

\paragraph{Group construction.} 
For each group, component proportions are drawn from a Dirichlet distribution with 
concentration parameter $\alpha$, yielding group-specific mixture weights. The 
$G$ unit-level samples are then drawn independently from the corresponding mixture 
components. Both the aggregated sum $X_0 \in \mathbb{Z}^d$ and the individual 
unit-level labels $z \in \{0,1\}^{G \times k}$ are retained, enabling evaluation of 
methods under both aggregate-only and aggregate+unit supervision.

\paragraph{Experimental variation.} 
We vary two factors that control the difficulty of deconvolution:
\begin{itemize}
    \item \textbf{Group size:} $G \in \{4, 8, 32, 128\}$, which determines how many 
    unit-level samples are aggregated. Larger groups yield more uniform averages and 
    less information about component heterogeneity.  
    \item \textbf{Dirichlet concentration:} $\alpha \in \{1, 10, 1000\}$, which 
    controls variability in group-specific mixture weights. Small $\alpha$ values 
    produce heterogeneous groups (informative for deconvolution), while large 
    $\alpha$ values yield nearly uniform group proportions (uninformative).  
\end{itemize}

\paragraph{Dataset parameters.} 
We fix the ambient dimension at $d=4$, latent rank at $r=3$, 
number of mixture components $k=5$, and use the same mixture parameterization as in the 
low-rank dataset (means scaled by $20.0$, covariances scaled by $10.0$ with minimum 
eigenvalue $0.1$, projection scale $1.0$, isotropic noise $1.0$, and integerization into 
$[0,255]$). Each dataset contains $5{,}000$ groups, drawn from a base pool of 
$50{,}000$ pre-sampled mixture samples.

\paragraph{Results.} 
As shown in Fig.~\ref{fig:deconv}, deconvolution performance degrades as groups become 
larger and more uniform. This matches the theoretical results in 
Appendices~\ref{app:realizable-recoverable} and \ref{app:largeG-nonident}, which establish 
that deconvolution requires between-group heterogeneity for identification, a property that 
is inherently lost as $G$ grows. We present detailed results across metrics in Tables \ref{tab:deconv_wasserstein}, \ref{tab:deconv_energy}, \ref{tab:deconv_mmd_rbf}.

\begin{table}[h!]
\centering
\caption{Deconvolution Performance: $W_2$ vs Group Size and Dirichlet Concentration}
\label{tab:deconv_wasserstein}
\begin{tabular}{cccc}
\toprule
Group Size (n) & $\alpha={1}$ & $\alpha={10}$ & $\alpha={1000}$ \\
\midrule
4 & {0.0091 ± 0.0005} & 0.010 ± 0.000 & 0.011 ± 0.000 \\
8 & 0.011 ± 0.001 & 0.014 ± 0.001 & 0.016 ± 0.000 \\
32 & 0.020 ± 0.002 & 0.023 ± 0.002 & 0.025 ± 0.002 \\
128 & 0.050 ± 0.008 & 0.053 ± 0.006 & 0.057 ± 0.002 \\
\bottomrule
\end{tabular}
\end{table}

\begin{table}[h!]
\centering
\caption{Deconvolution Performance: EMD vs Group Size and Dirichlet Concentration}
\label{tab:deconv_energy}
\begin{tabular}{cccc}
\toprule
Group Size (n) & $\alpha={1}$ & $\alpha={10}$ & $\alpha={1000}$ \\
\midrule
4 & {0.130 ± 0.006} & 0.195 ± 0.025 & 0.207 ± 0.040 \\
8 & 0.286 ± 0.023 & 0.363 ± 0.037 & 0.483 ± 0.010 \\
32 & 0.530 ± 0.051 & 0.657 ± 0.095 & 0.921 ± 0.222 \\
128 & 2.24 ± 0.58 & 2.22 ± 0.54 & 2.63 ± 0.14 \\
\bottomrule
\end{tabular}
\end{table}

\begin{table}[h!]
\centering
\caption{Deconvolution Performance: MMD vs Group Size and Dirichlet Concentration}
\label{tab:deconv_mmd_rbf}
\begin{tabular}{cccc}
\toprule
Group Size (n) & $\alpha={1}$ & $\alpha={10}$ & $\alpha={1000}$ \\
\midrule
4 & {0.011 ± 0.001} & 0.011 ± 0.002 & 0.012 ± 0.004 \\
8 & 0.016 ± 0.001 & 0.017 ± 0.002 & 0.021 ± 0.001 \\
32 & 0.014 ± 0.003 & 0.020 ± 0.003 & 0.025 ± 0.010 \\
128 & 0.037 ± 0.005 & 0.045 ± 0.007 & 0.044 ± 0.001 \\
\bottomrule
\end{tabular}
\end{table}

To investigate the importance of different rounding approaches in our deconvolution implementation we run an ablation study where we substitute our preferred exact rounding approach for two alternatives: first a simple deterministic rounding, and second a randomized rounding (where we simply add zero or one with probability of the decimal value). Deterministic rounding can lead to arbitrarily incorrect results, randomized rounding preserves the expectation, and our exact approach will ensure we exactly match the target aggregate value. We find that although our exact approach is superior across three replicates of the $n=128, \alpha=1$ setting the results are very close particularly for the randomized rounding. We believe this could justify substituting the randomized approach since it is simpler, although in different regimes this may matter more or less.

\begin{table}[h!]
\centering
\caption{Performance comparison of different rounding  methods with standard errors.}
\label{tab:project_round_methods}
\begin{tabular}{lccc}
\toprule
Method & MMD & $W_2$ & EMD \\
\midrule
\texttt{exact} & $\mathbf{0.037 \pm 0.005}$ & $\mathbf{0.050 \pm 0.008}$ & $\mathbf{2.24 \pm 0.58}$ \\
\texttt{randomized} & $0.038 \pm 0.000$ & $0.051 \pm 0.007$ & $2.33 \pm 0.43$ \\
\texttt{round} & $0.038 \pm 0.003$ & $0.052 \pm 0.007$ & $2.32 \pm 0.44$ \\
\bottomrule
\end{tabular}
\end{table}

%% file: tex/appendix_applications.tex
\section{Nucleotide-level gene expression modelling}\label{app:seq}

\subsection{Dataset}

\paragraph{Nucleotide-level data preprocessing} We use the Onek1k peripheral blood mononuclear cells (PBMC) 10X 3' scRNA-seq dataset, originally collected by \cite{Yazar2022-ut}. For our analysis, as we are interested in nucleotide-level counts rather than the gene-level counts provided with the initial publication, we use the preprocessed reads made available by \cite{Hingerl2024-it}. The reads are aligned to the hg38 human reference genome. The resulting BAM files are filtered to include only high quality, UMI-deduplicated reads. The cell type annotations were used as provided in the original dataset.

\paragraph{Gene-level data preprocessing} We construct the gene-level count matrices directly from our single-cell coverage matrices rather than following the typical single-cell gene expression preprocessing pipeline. In particular, for each annotated gene and each cell, we take the max count over the nucleotide-level coverage matrix as the count for the gene.

\subsection{Architecture, Training, and Inference}
\subsubsection{Inputs and embeddings}
We model nucleotide–level counts on a fixed window of length $L{=}896$. For each example we form
\[
x_t \in \mathbb{Z}_{\ge 0}^{L},\quad
t \in (0,1],\quad
z \sim \mathcal{N}(0,I_{d_z}),\quad
c \in \{1,\dots,C\},\quad
\text{seq} \in \{\texttt{A},\texttt{C},\texttt{G},\texttt{T},\texttt{N}\}^{L}.
\]
Sequence context is embedded with a frozen Enformer encoder (EleutherAI checkpoint), yielding per–position embeddings
\[
E(\text{seq}) \in \mathbb{R}^{L\times d_E},\qquad d_E = 3072.
\]
We tile the scalars across positions and concatenate
\[
H^{(0)} \;=\; \big[\,E(\text{seq})\;\|\;x_t\;\|\;t\;\|\;z\;\|\;\mathrm{emb}(c)\,\big]
\;\in\; \mathbb{R}^{L\times (d_E+1+1+d_z+d_c)},
\]
with $d_z{=}100$ and $d_c{=}14$. A two–layer SELU MLP projects to the model width $d$:
\[
X^{(0)} \;=\; \phi\!\big(W_2\,\phi(W_1 H^{(0)})\big)\in\mathbb{R}^{L\times d}.
\]

\subsubsection{Local attention backbone}
We apply $N_{\text{attn}}$ residual self–attention blocks (PyTorch \texttt{MultiheadAttention}, batch–first) with LayerNorm:
\[
\tilde X^{(\ell)} \;=\; \mathrm{MHA}\!\big(X^{(\ell)},X^{(\ell)},X^{(\ell)}\big),\qquad
X^{(\ell{+}1)} \;=\; \mathrm{LN}\!\big(\tilde X^{(\ell)} + X^{(0)}\big),
\]
where the residual skip uses the pre–block $X^{(0)}$ as in the implementation.\footnote{Code uses a “global” residual $X\leftarrow X+X^{(0)}$ within each block. We retained this because it stabilized training with $L{=}896$.} We use $N_{\text{attn}}{=}2$ layers, $d{=}$\texttt{hidden\_dim}, and $h{=}4$ heads. A linear projection followed by \texttt{softplus} produces a nonnegative per–position prediction
\[
\hat x_0 \;=\; \mathrm{softplus}\!\big(W_{\text{out}} X^{(N_{\text{attn}})}\big)\in\mathbb{R}_{\ge 0}^{L}.
\]
This parameterizes the conditional law $q_\theta(\,\cdot\,\mid x_t,t,z,c,\text{seq})$ used inside the count–bridge reverse kernel (Sec.~\ref{sec:count-bridges}).

\subsubsection{Learned projection module \texorpdfstring{$\Pi_\psi$}{Pi}}
\label{app:proj}
When an aggregate constraint $a_0{=}\sum_{i=1}^L x_{0,i}$ is observed, we refine $\hat x_0$ with a lightweight attention projector that operates across \emph{positions}. We form
\[
Y^{(0)} \;=\; \big[\,\hat x_0\;\|\;x_t\;\|\;a_0\;\|\;X^{(N_{\text{attn}})}\,\big]\in\mathbb{R}^{L\times (1+1+1+d)}.
\]
A two–layer SELU MLP lifts to width $d$, then $N_{\text{proj}}{=}2$ self–attention layers (sequence–first API) with residual+LayerNorm are applied:
\[
\tilde Y^{(m)}=\mathrm{MHA}(Y^{(m)},Y^{(m)},Y^{(m)}),\quad
Y^{(m{+}1)}=\mathrm{LN}\!\big(\tilde Y^{(m)} + Y^{(0)}\big).
\]
A linear head produces an additive correction which we re–softplus for nonnegativity:
\[
\tilde x_0 \;=\;\mathrm{softplus}\!\big(W_{\text{proj}} Y^{(N_{\text{proj}})}\big) \;+\; \hat x_0.
\]
At inference, when $a_0$ is present we use $\tilde x_0$ as the endpoint in the reverse step; otherwise we use $\hat x_0$.

\subsubsection{Training objectives and schedules}
\paragraph{Distributional loss (energy score).}
We train $q_\theta$ with the energy score $S_\rho$ on the conditional $X_0\mid X_t{=}x_t$ (Sec.~\ref{subsec:score-unit}). For each example we draw $m$ i.i.d.\ samples $\hat x_0^{(j)}\sim q_\theta(\cdot\mid x_t,t,z,c,\text{seq})$ via ancestral decoding of the per–position parameterization and estimate the $U$–statistic version of $S_\rho$ with $\rho(x,x'){=}\|x{-}x'\|_2^\beta$ ($\beta\!\in\!(0,2)$). We used $m{=}2$ in practice.

\paragraph{Aggregate–aware training.}
With probability $p_{\text{agg}}{=}0.1$ we attach an aggregate $a_0$ and route the forward pass through $\Pi_\psi$ to obtain $\tilde x_0$, then compute the same energy score.
This jointly trains $\Pi_\psi$ to approximate sampling from the mean–conditional $X_0 \mid A(X_0){=}a_0, X_t,t$ while preserving the exact reverse transition of the count bridge.

\paragraph{Cell–type masking.}
To support both conditional and unconditional generation, we randomly mask the cell–type embedding with probability $p_{\text{mask}}$ (set to zero vector). We used $p_{\text{mask}}{=}0.1$.

\subsubsection{Optimization and hyperparameters}
We use Adam, learning rate $\{2{\times}10^{-4}$, cosine warmup for $100$ steps, EMA with $0.999$, batch size $128$, gradient clipping at $1.0$. See configs for exact architecture specification.

\subsubsection{Sampling}
At test time we follow Alg.~\ref{alg:sample}: starting from $x_1$ we iterate $t_k \downarrow 0$. At each step we sample $\tilde X_0 \sim q_\theta(\cdot\mid x_{t_k},t_k, z, c,\text{seq})$; if an aggregate is provided we replace with $\tilde x_0{=}\Pi_\psi(\hat x_0,a_0,x_{t_k})$. We then apply the exact binomial–hypergeometric reverse kernel (Prop.~\ref{prop:sampling_interpolation_count}) to obtain $x_{t_{k-1}}$. This guarantees that trajectories remain within the discrete support while leveraging the learned distributional posterior. We use three function evaluations for all results in this application.

\subsection{Additional results}

In Tab.~\ref{tab:cell_type_bulk} we provide results for gene expression prediction performance, broken down by cell type.

\begin{tabular}{lrr}
\toprule
Cell type & Baseline MSE & CB MSE \\
\midrule
CD4 ET & 3.596 & \textbf{1.402} \\
NK & 0.415 & \textbf{0.364} \\
CD4 NC & 3.382 & \textbf{1.304} \\
CD8 S100B & 2.619 & \textbf{1.002} \\
CD8 ET & 1.065 & \textbf{0.540} \\
B IN & 2.556 & \textbf{1.091} \\
CD8 NC & 3.381 & \textbf{1.311} \\
B Mem & 6.742 & \textbf{3.416} \\
NK R & 1.624 & \textbf{0.781} \\
Mono NC & 1.485 & \textbf{0.752} \\
Mono C & 1.253 & \textbf{0.676} \\
DC & 9.302 & \textbf{4.475} \\
Plasma & 10763.906 & \textbf{10696.934} \\
CD4 SOX4 & 3.428 & \textbf{1.323} \\
\bottomrule
\label{tab:cell_type_bulk}
\end{tabular}

\begin{figure}
    \centering
    \includegraphics[width=0.5\linewidth]{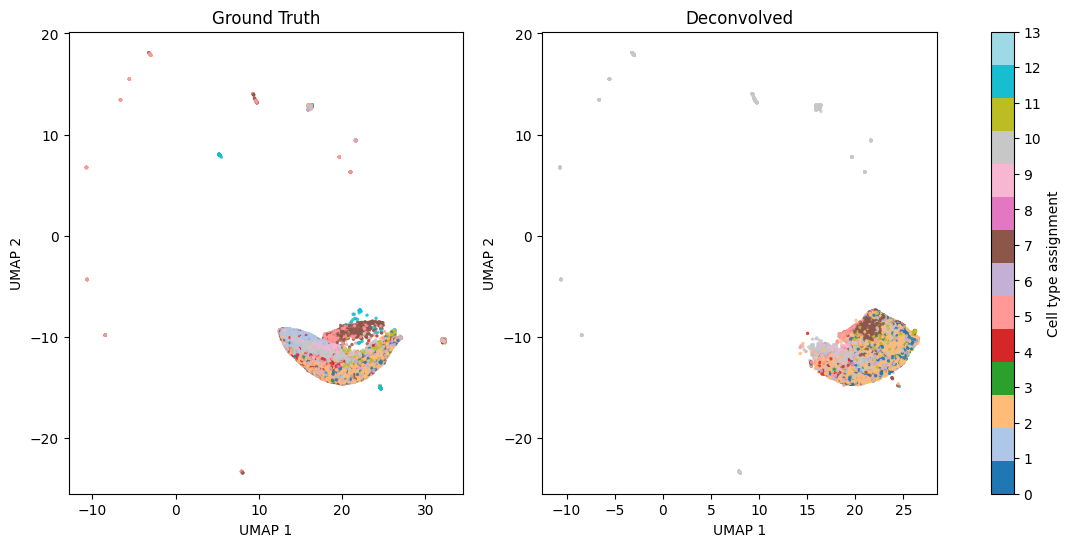}
    \caption{UMAP of true single-cell and CB-deconvolved nucleotide-level expression, aggregated to the gene level and hued by cell type (as assigned by \cite{Yazar2022-ut}).}
    \label{fig:seq-umap}
\end{figure}

We also analyze the unit-level profiles of the deconvolved transciptomes. We aggregate the nucleotide-level transcriptomes up to the gene level by computing the maximum count over the gene profile, enabling us to generate a (standard cell by gene) count matrix from our deconvolved profile. We then plot the UMAP of the held-out ground truth alongside the deconvolved transcriptomic profiles, and find that the cells are well-mixed (Fig.~\ref{fig:seq-umap}), suggesting that generated expression distributions match the true distributions.

\section{Spatial transcriptomic deconvolution}\label{app:merfish}

\subsection{Dataset}

\paragraph{Preprocessing}
We used the publicly available mouse brain MERFISH dataset from \cite{Vizgen2021-fo}. We subset the data to a particular slice (slice 1, replicate 2). We used the transcript puncta and nuclear segmentation masks as provided with the dataset. For gene expression, we used the raw transcript counts without applying standard single-cell preprocessing pipelines. For each cell, we resized the DAPI image to 256x256 pixels by padding.

\paragraph{Aggregation}
To simulate a Visium-style spatial transcriptomics dataset, we aggregated the single-cell MERFISH data. A grid of spots was defined with a center-to-center distance of $100 \mu m$ and a spot radius of $55\mu m$. The gene expression profile for each simulated spot was then generated by summing the transcript counts of all identified cells whose nuclei fell within the circular bounds of that spot.

\subsection{Architecture, Training, and Sampling}

\subsubsection{Inputs and tokenization}
We model spot-level \emph{counts} while conditioning on \emph{image} context and diffusion \emph{noise/time} tokens. Each training example provides
\[
x_t^{\text{cnt}} \in \mathbb{Z}_{\ge 0}^{D_c},\quad
I_t \in \mathbb{R}^{C\times H\times W},\quad
t\in(0,1],\quad
\varepsilon \in \mathbb{R}^{d_\varepsilon},\quad
y\in\{1,\dots,C_y\}\ \text{(optional)}.
\]
Images are patchified by a ViT-style embedder (\texttt{PatchEmbed}) into
\[
X^{\text{img}} \in \mathbb{R}^{B\times N_{\text{img}}\times d},
\quad N_{\text{img}}=(H/P)\,(W/P),
\]
while counts are converted into a small set of \emph{count patches} using a learned projector (\texttt{CountPatchEmbedding}):
\[
X^{\text{cnt}}=\mathrm{reshape}\!\Big(\mathrm{MLP}(x_t^{\text{cnt}}),\, [B, N_{\text{cnt}}, d]\Big)\;+\;E_{\text{cnt}},
\]
with $N_{\text{cnt}}$ learned “pseudo-patches” and $E_{\text{cnt}}$ learnable positional embeddings.

We form auxiliary tokens for time, noise, and (optionally) class:
\[
\underbrace{\tau}_{\text{time}}=\mathrm{TimeMLP}\!\big(\mathrm{timestep\_emb}(t)\big)\in\mathbb{R}^{B\times 1\times d},\quad
\underbrace{\eta}_{\text{noise}}=\mathrm{NoiseMLP}\!\big(W_\varepsilon \varepsilon\big)\in\mathbb{R}^{B\times 1\times d},
\]
and, if labels are used, $\ell=\mathrm{Emb}(y)\in\mathbb{R}^{B\times 1\times d}$. Concatenating all tokens,
\[
X^{(0)}=\big[\,\ell\,;\,\eta\,;\,\tau\,;\,X^{\text{img}}\,;\,X^{\text{cnt}}\,\big]\;+\;E_{\text{pos}}
\in\mathbb{R}^{B\times (N_{\text{img}}+N_{\text{cnt}}+\text{extras})\times d},
\]
with a single learned positional table $E_{\text{pos}}$ covering all tokens.

\subsubsection{U-ViT backbone (fusion and decoding)}
We process $X^{(0)}$ with a U-Net–style ViT:
\[
\underbrace{X^{(1)},\dots,X^{(L/2)}}_{\text{encoder (save skips)}}\;\xrightarrow{\ \text{mid Block}\ }\;
\underbrace{\tilde X^{(L/2)},\dots,\tilde X^{(L)}}_{\text{decoder (with skips)}},
\]
where each \texttt{Block} is a standard MHA$\,+$MLP transformer block with LayerNorm, and decoder blocks attend over skip connections. A final LayerNorm yields $X^{\mathrm{out}}\in\mathbb{R}^{B\times (N_{\text{img}}+N_{\text{cnt}}+\text{extras})\times d}$.

We then drop the auxiliary tokens and split modalities:
\[
X^{\mathrm{img}}_{\mathrm{out}}=X^{\mathrm{out}}[:,\,\text{img range},:],
\qquad
X^{\mathrm{cnt}}_{\mathrm{out}}=X^{\mathrm{out}}[:,\,\text{cnt range},:].
\]

\paragraph{Count decoder.}
Count patches are decoded back to a vector via a small MLP head with nonnegativity enforced by \texttt{Softplus}:
\[
\hat x_0 \;=\; \mathrm{Softplus}\!\Big(\mathrm{MLP}\big(\mathrm{flatten}(X^{\mathrm{cnt}}_{\mathrm{out}})\big)\Big)\in\mathbb{R}^{B\times D_c}.
\]
This parameterizes $q_\theta(\,\cdot\,\mid x_t^{\text{cnt}}, I_t, t, \varepsilon, y)$ for the distributional loss and the reverse count-bridge step.

\subsubsection{Training objective and usage}
We train the model to predict the distribution of $X_0$ (counts) given multimodal context under the bridge $(X_t)$:
\[
\mathcal{L}(\theta)\;=\;-\mathbb{E}_{t,(X_0,X_t)}\Big[\,S_\rho\!\big(q_\theta(\cdot\mid X_t^{\text{cnt}}, I_t, t,\varepsilon,y),\,X_0^{\text{cnt}}\big)\,\Big],
\]
using the energy score $S_\rho$ with $\rho(x,x')=\|x-x'\|_2^\beta$ ($\beta\in(0,2)$) and the standard unbiased $U$-statistic estimator with $m$ samples from $q_\theta$. Time and noise tokens implement the distributional diffusion conditioning; label tokens (if present) enable class-conditional modeling. During reverse sampling we draw $\tilde X_0 \sim q_\theta(\cdot\mid x_t^{\text{cnt}},I_t,t,\varepsilon,y)$ and update $x_{t-\Delta}$ using the exact binomial–hypergeometric count-bridge kernel (Prop.~\ref{prop:sampling_interpolation_count}).

\subsubsection{Implementation specifics}
\begin{itemize}
\item \textbf{Patchification.} \texttt{PatchEmbed} uses patch size $P$ on $I_t$ (channels $C$), producing $N_{\text{img}}=(H/P)(W/P)$ tokens of width $d$. \texttt{CountPatchEmbedding} projects $D_c$-dimensional counts to $N_{\text{cnt}}$ tokens of width $d$ with learned positional embeddings.
\item \textbf{Auxiliary tokens.} Time token: \texttt{timestep\_embedding} followed by a linear or MLP projector (\texttt{time\_dim} controls concatenated components); noise token: linear to $d$ then a 2-layer SiLU MLP; label token: lookup embedding if used. All tokens share a single learned positional table.
\item \textbf{Backbone.} Depth $L$ with $L/2$ encoder and $L/2$ decoder blocks; each block uses $d$-dimensional embeddings, $h$ heads, MLP ratio $r$, LayerNorm, and (optionally) gradient checkpointing. Decoder blocks accept the matching encoder skip.
\item \textbf{Heads.} Count head: 2-layer GELU MLP over the concatenated count tokens, ending with \texttt{Softplus}. Image head exists but is ignored for the loss.
\item \textbf{No weight decay.} We exclude token positional tables and count-positional embeddings from weight decay, following ViT practice.
\end{itemize}

\subsubsection{Optimization and hyperparameters}
We use Adam, learning rate $\{2{\times}10^{-4}$, cosine warmup for $100$ steps, EMA with $0.999$, batch size $128$, gradient clipping at $1.0$. See configs for exact architecture specification.

\subsubsection{Sampling}
At test time we follow Alg.~\ref{alg:sample-guided} using the aggregates to ensure our sampled $\hat x_0$ exactly match the target sum at each intermediate time. We then apply the exact binomial–hypergeometric reverse kernel (Prop.~\ref{prop:sampling_interpolation_count}) to obtain $x_{t_{k-1}}$. 

\subsection{Additional results}

\paragraph{Comparison with reference-based methods}

Count Bridges and STDeconvolve are both reference-free methods: that is, they require only aggregate level data, and do not need a reference dataset of unit-level measurements. Many deconvolution methods, including RCTD \citep{Cable2022-jl} require a single-cell (non-spatial) reference dataset. These methods benefit from unit-level observations, and as such solve a more constrained problem -- but require additional reference data which may not be available in all settings.

Using the MERFISH benchmarking setup we also evaluate RCTD, and tabulate the results in Tab.~\ref{tab:jsd_rmse_with_reference}. For evaluation, we use Jensen shannon Divergence (JSD) and RMSE metrics as described in \cite{Li2023-wz}. We find that Count Bridges perform similarly to RCTD (with a higher JSD but lower RMSE), despite the fact that Count Bridges do not have access to a reference dataset.

\begin{table}[h!]
\centering
\begin{tabular}{lccc}
\toprule
Method & JSD & RMSE & Spearman \\
\midrule
STDeconvolve & 0.288 & 0.177 & 0.255 \\
RCTD         & 0.161 & 0.113 & 0.580 \\
Count Bridge & 0.229 & 0.110 & 0.332 \\
\bottomrule
\end{tabular}
\caption{Cell-type deconvolution error for spatial transcriptomic data. Note that RCTD requires a single-cell reference dataset for deconvolution.}
\label{tab:jsd_rmse_with_reference}
\end{table}

In Fig.~\ref{fig:spatial_deconv_across_k}, we show the performance of spatial deconvolution methods across varying numbers of cell types. These results evaluate only the recovery of cell type proportions, and do not evaluate full count profiles. Note that RCTD has access to the single-cell level reference data, while STDeconvolve and Count Bridges are fit entirely using aggregate-level data and do not have access to single cell counts.

\begin{figure}

\begin{subfigure}[t]{0.33\textwidth}
        \centering
    \includegraphics[width=\linewidth]{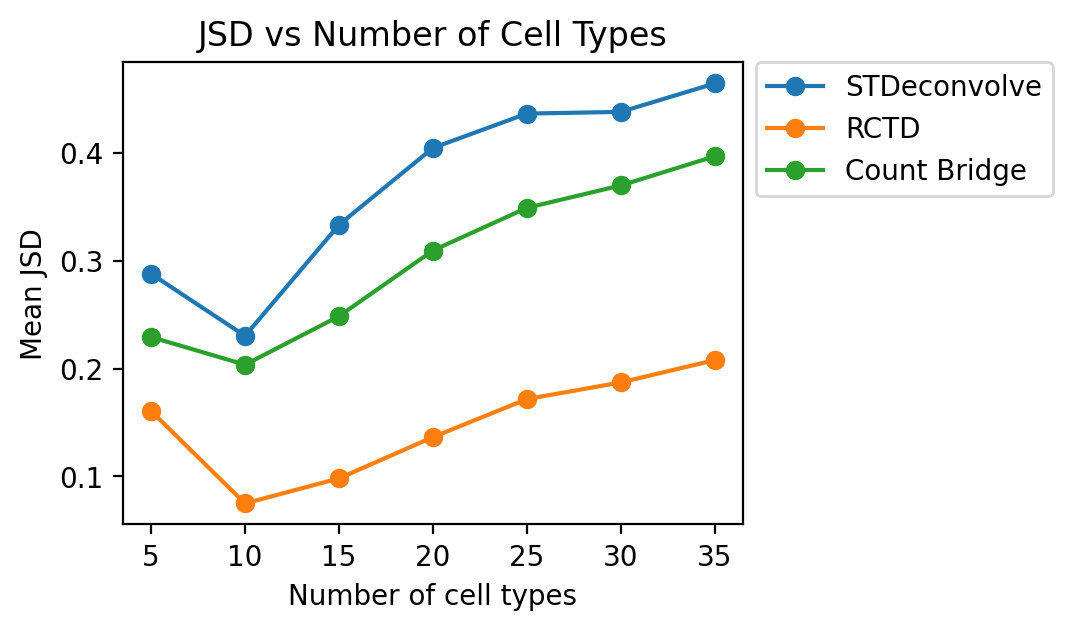}
        \caption{}
    \end{subfigure}%
    \begin{subfigure}[t]{0.33\textwidth}
    \includegraphics[width=\linewidth]{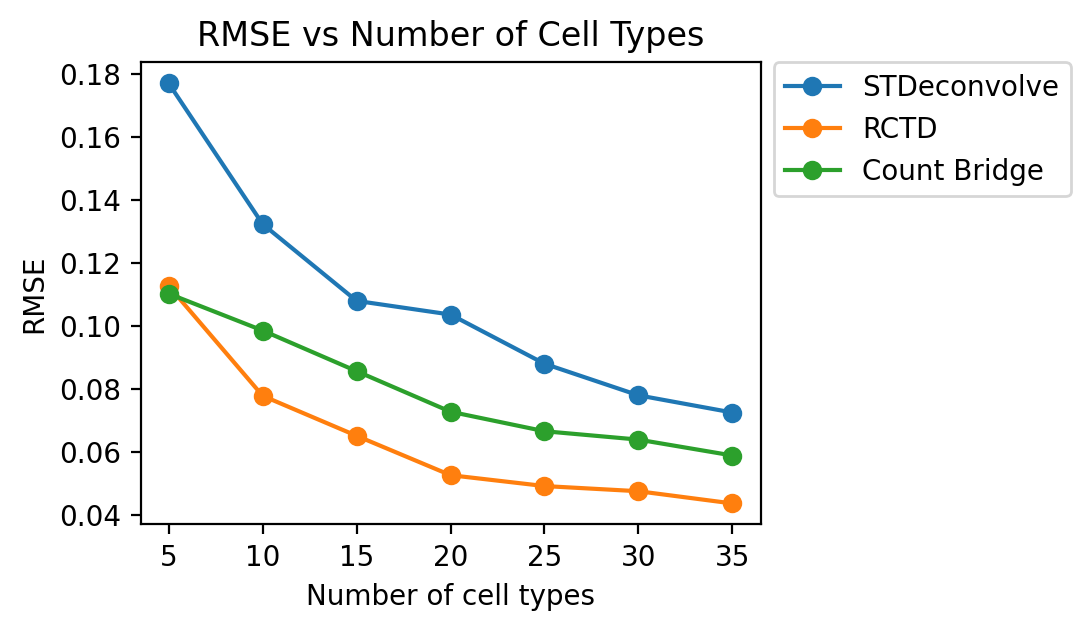}
        \caption{}
    \end{subfigure}%
    \begin{subfigure}[t]{0.33\textwidth}
    \includegraphics[width=\linewidth]{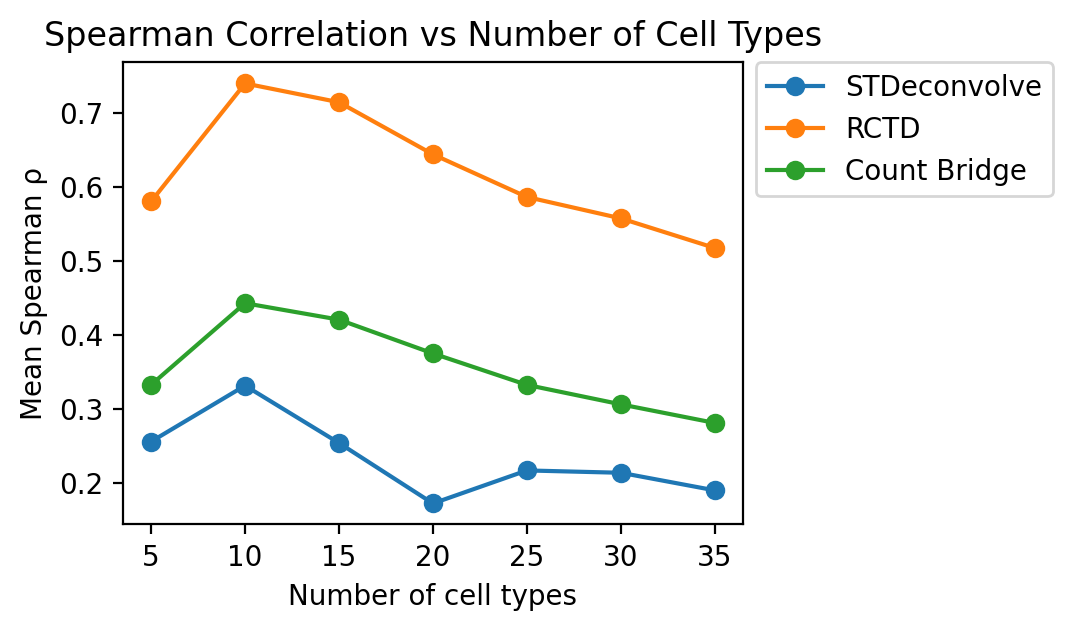}
        \caption{}
    \end{subfigure}%
    \caption{Performance of RCTD, STDeconvolve and Count Bridge on MERFISH deconvolution across number of cell types using (a) Jensen Shannon Divergence (JSD), (B) RMSE and (C) Spearman Correlation}
    \label{fig:spatial_deconv_across_k}
\end{figure}

\begin{figure}
    \centering
    \includegraphics[width=0.5\linewidth]{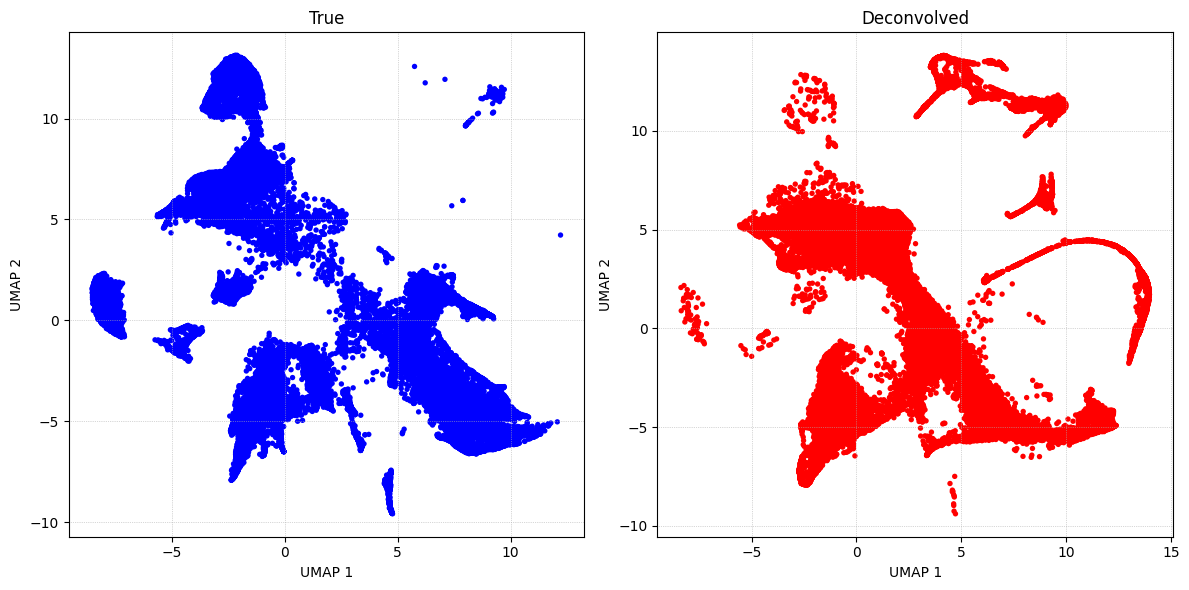}
    \caption{UMAP of true vs deconvolved cell profiles using the Count Bridge.}
    \label{fig:merfish_umap}
\end{figure}

\paragraph{Inspection of unit-level data generated by Count Bridges}

In the previous section, we have shown through quantitative metrics that Count Bridges outperform alternatives for reconstructing unit-level gene expression vectors from spot-level aggregates. We next aim to evaluate the extent to which reconstructed gene expression profiles are biologically meaningful. We do this by performing conventional single-cell transcriptomic analysis on the synthetic unit-level expression vectors generated by Count Bridges. 

First, we plot the UMAP of the raw expression profiles of the true and generated data in Fig.~\ref{fig:merfish_umap}. In this UMAP, we see that the true and deconvolved synthetic data are largely mixed, suggesting that the generated counts are realistic and match the distribution of the true counts.

Next, we perform cell type annotation for the generated counts. We preprocess by normalizing counts to $10^4$ per cell (row-normalizing), followed by a log-transform, then annotate cell types using celltypist \citep{Dominguez-Conde2022-cg}. From this process, we identify 268 putative cell types in the synthetic unit-level data. The same pipeline, when applied to the real single-cell level measurements, identifies 278 putative cell types. And as shown in Figure~\ref{fig:celltypist}, the cell type abundances inferred by Count Bridges (without access to unit level data) closely align with the cell type abundances observed in the true unit level data.

\begin{figure}
    \centering
    \includegraphics[width=0.5\linewidth]{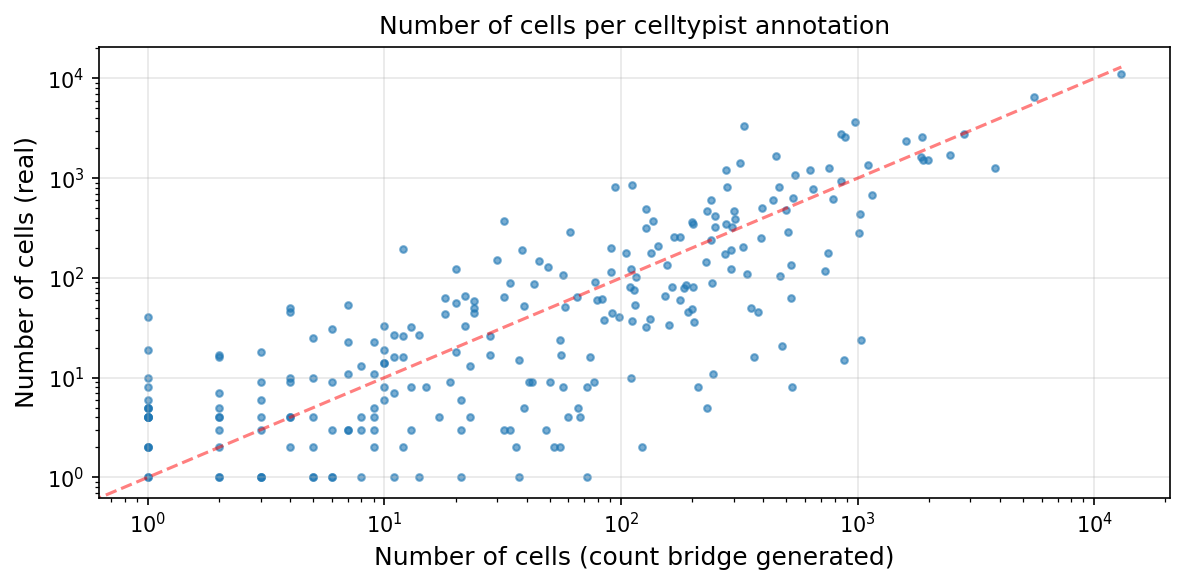}
    \caption{Abundance of putative cell types (automatically annotated by celltypist) in synthetic unit-level data generated by Count Bridges through deconvolution, vs. abundance of putative cell types in true unit-level data.}
    \label{fig:celltypist}
\end{figure}

\paragraph{Deconvolution of a 10X Visium dataset}

We next evaluate the deconvolution of spots in a real world 10X Visium dataset profiling the mouse brain. As ground truth is unavailable in this setting, we validate model predictions by assessing the extent to which the synthetic deconvolved data reflects known biology.

We use the 10X Visium fluorescence dataset distributed by \cite{Palla2022-ok}, which profiles a coronal section of a mouse brain. To demonstrate the generalization capabilities of Count Bridges, we apply the model trained on MERFISH data directly to this Visium dataset without retraining.

In order to correct batch effects and align dimensionality between datasets, we employ a moment-matching procedure. For each gene in the MERFISH data, we compute the mean and variance of expression and identify the gene in the 10X Visium data that most closely matches these moments. We then map the 10X Visium count matrices to the MERFISH feature space by subsetting to these matched genes.

We deconvolve the 10X Visium data using Count Bridges with a spot-level mean constraint. To evaluate prediction quality, we use a standard single-cell analysis pipeline (as described above) and determine putative cell type annotations using Celltypist \citep{Dominguez-Conde2022-cg}. Celltypist identifies 146 putative cell types, suggesting that the synthetic unit-level data recapitulates a significant degree of cell-to-cell variation. Furthermore, the recovered cell types are biologically consistent: the most abundant identified cell type is the oligodendrocyte, which matches the most abundant annotation in the MERFISH mouse brain dataset described above.